\DeclareMathOperator*{\argmax}{argmax}
\DeclareMathOperator{\diag}{diag}
\DeclareMathOperator{\opt}{opt}
\DeclareMathOperator{\sign}{sign}
\DeclareMathOperator{\st}{s.t.}
\DeclareMathOperator{\tr}{tr}
\DeclareMathOperator{\T}{T}
\def\cR{{\mathcal R}}
\def\be{\pmb{e}}
\def\bp{\pmb{p}}
\def\bq{\pmb{q}}
\def\br{\pmb{r}}
\def\bu{\pmb{u}}
\def\bx{\pmb{x}}
\def\by{\pmb{y}}
\def\bz{\pmb{z}}
\def\bone{\pmb{1}}
\def\scrA{\mathscr{A}}
\def\wtd{\widetilde}
\def\what{\widehat}
\def\bbO{\mathbb{O}}
\def\bbR{\mathbb{R}}
\newtheorem{theorem}{Theorem}
\def\sss{\scriptscriptstyle}
\title{Uncorrelated Semi-paired Subspace Learning}
\author[L. Wang] {Li~Wang}
\author[L. Zhang]{Lei-Hong~Zhang}
\author[C. Shen]{Chungen~Shen}
\author[R. Li]{Ren-Cang~Li,}
\thanks{
L. Wang is with Department of Mathematics and Department of Computer Science and Engineering, University of Texas at Arlington, Arlington, TX 76019-0408, USA. Email: li.wang@uta.edu. Corresponding Author.\\
Lei-Hong Zhang is with
School of Mathematical Sciences and Institute of Computational Science, Soochow University, Suzhou 215006, Jiangsu, China.\\
Chungen~Shen is with College of Science, University of Shanghai for Science and Technology, Shanghai 200093, China. Email: shenchungen@usst.edu.cn.\\
R. Li is with Department of Mathematics, University of Texas at Arlington, Arlington, TX 76019-0408, USA. Email: rcli@uta.edu.
}
\begin{document}
\maketitle

\begin{abstract}
Multi-view datasets are increasingly collected in many real-world applications, and we have seen 
better learning performance by existing multi-view learning methods than
by conventional single-view learning methods applied to each view individually. But, most of these multi-view learning
methods are built on the assumption that at each instance no view is missing and all data points from all views must be perfectly paired.
Hence they cannot handle unpaired data but ignore them completely from their learning process. However, unpaired data can be
more abundant in reality than paired ones and simply ignoring all unpaired data incur tremendous waste in resources.
In this paper,  we focus on learning uncorrelated features by semi-paired subspace learning, motivated
by many existing works that show great successes of learning uncorrelated features.
Specifically, we propose a generalized uncorrelated multi-view subspace learning framework, which can naturally integrate many
proven learning criteria on the semi-paired data. To showcase the flexibility of the framework,
we instantiate five new semi-paired models for both unsupervised  and semi-supervised learning.
We also design a successive alternating approximation (SAA) method to solve the resulting optimization problem
and the method can be combined with the powerful Krylov subspace projection technique if needed.
Extensive experimental results on multi-view feature extraction and multi-modality classification show that our proposed models perform competitively to or better than the baselines.
\end{abstract}

\section{Introduction}
In many real-world applications, datasets are increasingly collected for one underlying object in question from various aspects as real-world objects are often too complicated to be depicted by one aspect. Each aspect is referred to as a view. A dataset consisting of more than one aspects of an object is referred to as a multi-view dataset, otherwise a single-view dataset.
Multi-view datasets usually contain complementary, redundant, and corroborative characterizations of objects, and so are more informative than single-view datasets.
Although multi-view datasets are much more informative, learning from these datasets encounters tremendous challenges  \cite{baltruvsaitis2018multimodal}.

The most fundamental challenge is how multi-view data can be truthfully represented and summarized in such a way that
heterogeneity gaps \cite{peng2019cm} among different views can be satisfactorily overcome and comprehensive information concealed in multi-view data can be properly exploited by multi-view learning models.  A simple adaptation of existing single-view models cannot effectively handle  heterogeneity gaps because they do not take relationships among views into consideration.
As a consequence, a large number of multi-view learning methods have since been proposed
to narrow the heterogeneity gaps; see survey papers \cite{baltruvsaitis2018multimodal,li2018survey,zhao2017multi}
and references therein. Among them, multi-view subspace learning dominates as the most popularly
studied learning methodology, aiming to narrow or even eliminate the gaps.
As a representative multi-view subspace learning method, the
canonical correlation analysis (CCA) \cite{hardoon2004canonical} is widely used and has been adapted for various learning scenarios \cite{yang2019survey}.
The underlying foundational assumption in multi-view subspace learning is that
all views are generated from one common latent space via different transformations.

Another huge challenge to multi-view learning is that multi-view datasets from  real-world applications
are not often perfectly collected for all views. A complete multi-view dataset entails that data points at each instance must be collected.
In reality, that is hardly ever the case. In  other words real-world multi-view datasets are often incomplete in the sense
that some views may be missing at many instances.
In image/text classification, an image  may not always have its associated text description, and vice versa. In medical data analysis, patient may choose to skip some of the medical tests in diagnosis due to many reasons such as financial hardship, among others.
In the multi-view learning community, different terms were coined to call this type of multi-view datasets in the literature: semi-paired \cite{mehrkanoon2017regularized}, weakly-paired \cite{lampert2010weakly}, partial \cite{zhang2019cpm} and incomplete \cite{liu2020efficient}.
In this paper, we shall adopt the term \textit{semi-paired} to describe a multi-view dataset, a portion of which is paired while the rest unpaired.

Unfortunately,
most existing methods are not designed for semi-paired multi-view datasets. They cannot handle the unpaired portion of data but ignore them from their learning processes. That is a huge waste, given that the unpaired portion can be often much larger than the paired portion.
Semi-paired multi-view learning aims to take all data -- paired and unpaired -- into consideration for best learning performance in various learning scenarios. Two approaches are often used to handle  unpaired data. One approach is to fill in  missing views based on some criteria such as low-rank matrix completion \cite{zhang2018multi}, non-negative matrix completion \cite{hu2019doubly}, and probabilistic models \cite{zhang2014semi,kamada2015probabilistic}. However, estimating a large amount of missing views based on a small amount of paired data remains challenging.
Another approach is to fully explore the unpaired data through the common latent space under the framework of semi-paired subspace learning.
A number of learning criteria have been explored to capture the relationships between two views, such as cross covariance \cite{blaschko2008semi,kimura2013semicca,chen2012unified,mehrkanoon2017regularized} and between-view neighborhood graph \cite{zhou2013neighborhood,guo2018joint}. 
Cross covariance in the common space is used in CCA by maximizing the correlation between two views. It has since been extended to incorporate semi-paired data by either simultaneously maximizing the intra-view covariance of both paired and unpaired data \cite{kimura2013semicca} or minimizing the intra-view manifold regularization via kernel representation \cite{blaschko2008semi,mehrkanoon2017regularized}. Supervised information is also explored in \cite{chen2012unified} through maximizing the class separation of labeled data in the semi-supervised setting. The between-view neighborhood graph is another way to capture the inter-view relationships in analogy to the cross covariance in CCA. The between-view neighborhood graph is formed from two neighborhood graphs, each of which is constructed using all data of each view and the paired data as a bipartite graph  \cite{zhou2013neighborhood}, and it is later combined with two neighborhood graphs  \cite{guo2018joint} to simultaneously capture  intra-view relationships.

In this paper, we are particularly interested in semi-paired subspace learning with uncorrelated
feature constraints in the common space. For feature extraction, it is expected that all extracted features should be mutually uncorrelated \cite{jin2001face}. This is inspired from the observation that the accuracy of statistical classifiers increases as the number of features increases up to a tipping point and then decreases from that point on \cite{hughes1968mean},
in part because as the number of extracted features becomes too large, redundancy creeps in among the extracted features.
That particular tipping point corresponds to the optimal number of extracted features, representing
the minimal set of optimal features for best classification performance.
Requiring extracted features to be uncorrelated provides a way to ensure minimal or no redundancy among extracted features.

Subspace learning methods enforcing uncorrelated features have successfully been explored for single-view learning \cite{nie2009semi,zhan:2011,jin2001face,ye2004using} and multi-view learning  \cite{sun2015multiview}. These methods have demonstrated great successes in many applications, but they are not capable of learning uncorrelated features for data with missing views, and  are further restricted to either supervised learning or fully paired data analysis.

In this paper, we will develop a generalized semi-paired subspace learning framework to learn uncorrelated features in the latent common space. The framework can naturally integrate different types of learning criteria, such as covariance, class separability and manifold regularization, into it. As showcases, we demonstrate the capability of our framework by deriving novel semi-paired models for both unsupervised and semi-supervised settings. To solve the resulting challenging optimization problems, we also propose an efficient algorithm.

\noindent\textbf{Contributions.} The main contributions of this paper are summarized as follows:
\begin{itemize}
	\item A generalized uncorrelated multi-view subspace learning framework is proposed. The framework can naturally integrate relationships between two views, supervised information, unpaired data, and can simultaneously learn uncorrelated features.
	
	\item It is a versatile framework that can be adapted to solve various semi-paired learning problems.
	To demonstrate its flexibility, new models are instantiated for unsupervised semi-paired learning and semi-supervised semi-paired learning.
	
	\item The framework is stated in the form of a challenging optimization problem with projection matrices as variables.
	A successive alternating approximation (SAA) method is proposed to find approximations of the optimizer. The method
	when combined with Krylov subspace projection techniques is suitable for practical purposes.
		
	\item Extensive experiments are conducted for evaluating the proposed models against existing methods in terms of multi-view feature extraction and multi-modality classification. Experimental results show that our proposed models perform competitively to or better than baselines.
\end{itemize}

\noindent\textbf{Paper organization.} We first explain three equivalent formulations of CCA and review existing semi-paired subspace learning methods in Section~\ref{sec:related-work}. In Section~\ref{sec:spocca}, we propose the generalized uncorrelated semi-paired subspace learning framework, and new models instantiated from it. The proposed optimization algorithm is presented in Section~\ref{sec:algo}. Extensive experiments are conducted in Section~\ref{sec:experiments}. Finally, we draw our conclusions in Section~\ref{sec:conclusions}.

\noindent\textbf{Notation.}
$\bbR^{m\times n}$ is the set of $m\times n$ real matrices and $\bbR^n=\bbR^{n\times 1}$. $I_n\in\bbR^{n\times n}$ is
the identity matrix, $\be_j$ is its $j$th column (whose dimension can be inferred from the context), and $\bone_n\in\bbR^n$ is the vector of all ones. $\|\bx\|_2$
is the 2-norm of a vector $\bx\in \bbR^n$. For $B\in\bbR^{m\times n}$,
$\cR(B)$ is its column subspace.
The Stiefel manifold
$$
\bbO^{n\times k}=\{X\in\bbR^{n\times k}\,:\, X^{\T}X=I_k\}
$$
Notation $B_{(i,j)}$ and $\bx_{(i)}$ take out entries of a matrix and vector, respectively.

\section{Related Work} \label{sec:related-work}

\subsection{CCA: three equivalent formulations} \label{sec:cca}
Denote the paired two-view dataset by $\{ (\bx_i^{(1)}, \bx_i^{(2)}) \}_{i=1}^n$, where $\bx_i^{(s)} \in \bbR^{d_s}$ is the $i$th data points of view $s$ for $s=1, 2$, and $n$ is the number of paired data points. Let
\begin{align}
X_s = \begin{bmatrix}
\bx_1^{(s)},
\bx_2^{(s)},
\ldots,
\bx_n^{(s)}
\end{bmatrix} \in \bbR^{d_s \times n}, \forall s=1,2. \label{eq:Xp}
\end{align}
The classical CCA aims to learn
projection matrices
$P_s \in \bbR^{d_s \times k}$
so that the correlation between two views in the common space $\bbR^k$ are maximized under the constraints that they are uncorrelated and of unit variance \cite{hardoon2004canonical}. Specifically,
for $\bx_i^{(s)} \in \bbR^{d_s}$ of view $s$, its projected point $\bz_i^{(s)} \in \bbR^k$  is given by, for $i=1,\ldots, n$,
\begin{align}
\bz_i^{(s)} = P_s^{\T} \bx_i^{(s)}, \forall s=1,2.
\end{align}
Accordingly, they are collectively  represented as
\begin{align}
Z_s = [\bz_1^{(s)}, \ldots, \bz_n^{(s)} ] = P_s^{\T} X_s \in \bbR^{k \times n}, \forall s=1,2.
\end{align}
Denote the (cross-)covariance matrix of view $s$ and view $t$ by
\begin{align}
C_{s,t}=\frac{1}{n} X_s H_n X_t^{\T} \in \bbR^{d_s \times d_t}, \forall s, t=1,2,
\end{align}
where $H_n=I_n - \frac{1}{n}\bone_n \bone_n^{\T}$ is the centering matrix.
Define
$$
\overline{\bz}_s =  \frac{1}{n}  \sum_{i=1}^n \bz_i^{(s)} =  \frac{1}{n}  Z_s \bone_n,
$$
the mean of view $s$. We have the sample cross-covariance matrix between $Z_1$ and $Z_2$ in the common space given by
\begin{align}
P_1^{\T} C_{1,2} P_2 &= \frac{1}{n}Z_1 H_n Z_2^{\T} \nonumber\\
&= \frac{1}{n} \sum_{i=1}^n\big( \bz_i^{(1)} -  \overline{\bz}_1 \big) \big( \bz_i^{(2)} -  \overline{\bz}_2 \big)^{\T}.
\label{eq:PCP2ZZ}
\end{align}
Similarly, the sample covariance matrix of $Z_s$ is $P_s^{\T} C_{s,s} P_s$. In what follows, 
we discuss three equivalent formulations of CCA, each of which  affords a different interpretation.

\subsubsection{Fractional formulation} \label{sec:fractional}
By definition, CCA aims to maximize the correlation between two views which is formulated as a fractional maximization problem
\begin{subequations} \label{eq:cca}
	\begin{alignat}{2}
	\max_{P_1, P_2} & \frac{ \tr(P_1^{\T} C_{1,2} P_2) }{\sqrt{\tr(P_1^{\T} C_{1,1} P_1) } \sqrt{\tr( P_2^{\T} C_{2,2} P_2)}} \label{op:cca-fractional}\\
	\st &~ P_1^{\T} C_{1,1} P_1 = P_2^{\T} C_{2,2} P_2 = I_k, \label{con:unit-variance}
	\end{alignat}
\end{subequations}%
where 
constraints \eqref{con:unit-variance} impose uncorrelation and unit-variance on $Z_s$ because of
\eqref{eq:PCP2ZZ}. Optimal solution pair $(P_1,P_2)$ of CCA \eqref{eq:cca} admits
the invariant property that if $(P_1,P_2)$ is an optimal solution pair then so is
$(P_1Q_1,P_2Q_2)$ for any orthogonal matrices $Q_s\in\bbR^{k\times k},~s = 1,2$. In particular, we often take
the one among them such that
\begin{equation}\label{eq:PCP=diag}
P_1^{\T} C_{1,2} P_2\quad\mbox{is diagonal},
\end{equation}
and when it happens, necessarily the diagonal entries are nonnegative.
With that, the corresponding
$Z_1$ and $Z_2$ are inter-uncorrelated

Problem (\ref{eq:cca}) can be reformulated as a singular value decomposition (SVD) problem.
Suppose that $C_{1,1}$ and $C_{2,2}$ are positive definite and let
\begin{align}
\what{P}_1 = C_{1,1}^{1/2} P_1,  \what{P}_2= C_{2,2}^{1/2} P_2.
\end{align}
Then we can rewrite (\ref{eq:cca})  as
\begin{subequations} \label{eq:cca2}
	\begin{align}
	\max_{\what{P}_1, \what{P}_2}  & \tr(\what{P}_1^{\T} C_{1,1}^{-1/2}  C_{1,2} C_{2,2}^{-1/2} \what{P}_2) \label{op:svd-form}\\
	\st &~ \what{P}_1^{\T} \what{P}_1 = \what{P}_2^{\T} \what{P}_2 = I_k.
	\end{align}
\end{subequations}
Let the SVD of $C_{1,1}^{-1/2}  C_{1,2} C_{2,2}^{-1/2}$ be \cite{govl:2013}
\begin{equation}\label{eq:CCC-SVD}
C_{1,1}^{-1/2}  C_{1,2} C_{2,2}^{-1/2} = U \Gamma V^{\T},
\end{equation}
where $U\in\bbR^{d_1\times d_1}$ and $V\in\bbR^{d_2\times d_2}$ are
orthogonal, and $\Gamma$ is diagonal with diagonal entries
being the singular values, arranged in the descending order.
It is well-known that the optimal objective value of \eqref{eq:cca2} is the sum of the top $k$ singular
values \cite{hojo:2013}.
We have
\begin{align}
U_{(:,1:k)}^{\T} U_{(:,1:k)} &= V_{(:,1:k)}^{\T} V_{(:,1:k)} = I_k, \label{eq:svd-1}\\
U_{(:,1:k)}^{\T} C_{1,1}^{-1/2}  C_{1,2} C_{2,2}^{-1/2} V_{(:,1:k)} &=  U_{(:,1:k)}^{\T} U \Gamma V^{\T} V_{(:,1:k)} \nonumber \\
& = \Gamma_{(1:k,1:k)}, \label{eq:svd-2}
\end{align}
which means that $(\what{P}_1^{\opt},\what{P}_2^{\opt}) = (U_{(:,1:k)}, V_{(:,1:k)})$
is feasible and is an optimal solution pair because $\tr(\Gamma_{(1:k,1:k)})$ is
the sum of the top $k$ singular
values of $C_{1,1}^{-1/2}  C_{1,2} C_{2,2}^{-1/2}$. 
Accordingly, an  optimal solution pair $(P_1^{\opt},P_2^{\opt})$ of \eqref{eq:cca} can be recovered by
\begin{align}
P_1^{\opt} = C_{1,1}^{-1/2} U_{(:,1:k)},\,\, P_2^{\opt} = C_{2,2}^{-1/2} V_{(:,1:k)}. \label{eq:svd-transform}
\end{align}
It can be verified that \eqref{eq:PCP=diag} also holds.

\subsubsection{Generalized eigenvalue formulation}
The SVD \eqref{eq:CCC-SVD} leads to
\begin{subequations}\label{eq:SVD-imply}
	\begin{align}
	C_{1,1}^{-1/2}  C_{1,2} C_{2,2}^{-1/2} V_{(:,1:k)} = U_{(:,1:k)} \Gamma_{(1:k,1:k)}, \label{eq:SVD-imply-1}\\
	\big[C_{1,1}^{-1/2}  C_{1,2} C_{2,2}^{-1/2} \big]^{\T} U_{(:,1:k)} = V_{(:,1:k)} \Gamma_{(1:k,1:k)}. \label{eq:SVD-imply-2}
	\end{align}
\end{subequations}
Putting them together yields
\begin{multline}\label{eq:svd2eig}
\begin{bmatrix}
0 & C_{1,1}^{-1/2}  C_{1,2} C_{2,2}^{-1/2} \\
(C_{1,1}^{-1/2}  C_{1,2} C_{2,2}^{-1/2} )^{\T} & 0
\end{bmatrix} \begin{bmatrix}
U_{(:,1:k)} \\
V_{(:,1:k)}
\end{bmatrix} 
=\begin{bmatrix}
U_{(:,1:k)} \\
V_{(:,1:k)}
\end{bmatrix}   \Gamma_{(1:k,1:k)}.
\end{multline} 
Since
\begin{multline*}
\begin{bmatrix}
0 & C_{1,1}^{-1/2}  C_{1,2} C_{2,2}^{-1/2} \\
\big[C_{1,1}^{-1/2}  C_{1,2} C_{2,2}^{-1/2}\big]^{\T} & 0
\end{bmatrix} \\
= \begin{bmatrix}
C_{1,1}^{-1/2}  & 0\\
0 & C_{2,2}^{-1/2}
\end{bmatrix}\begin{bmatrix}
0 &  C_{1,2} \\
C_{1,2}^{\T} & 0
\end{bmatrix}\begin{bmatrix}
C_{1,1}^{-1/2}  & 0\\
0 & C_{2,2}^{-1/2}
\end{bmatrix},
\end{multline*} 
using (\ref{eq:svd-transform})
we can rewrite (\ref{eq:svd2eig}) into the form of a generalized eigenvalue problem (GEP):
\begin{align}
& \begin{bmatrix}
0& C_{1,2}\\
C_{1,2}^{\T} & 0
\end{bmatrix} \begin{bmatrix}
P_1^{\opt} \\
P_2^{\opt}
\end{bmatrix}  \nonumber\\
&= \begin{bmatrix}
0& C_{1,2}\\
C_{1,2}^{\T} & 0
\end{bmatrix} \begin{bmatrix}
C_{1,1}^{-1/2}  & 0\\
0 & C_{2,2}^{-1/2}
\end{bmatrix} \begin{bmatrix}
U_{(:,1:k)} \\
V_{(:,1:k)}
\end{bmatrix} \nonumber\\
&= \begin{bmatrix}
C_{1,1} & 0\\
0 & C_{2,2}
\end{bmatrix} \begin{bmatrix}
C_{1,1}^{-1/2}  & 0\\
0 & C_{2,2}^{-1/2}
\end{bmatrix} \begin{bmatrix}
U_{(:,1:k)} \\
V_{(:,1:k)}
\end{bmatrix} \Gamma_{(1:k,1:k)} \nonumber\\
&=  \begin{bmatrix}
C_{1,1} & 0\\
0 & C_{2,2}
\end{bmatrix}\begin{bmatrix}
P_1^{\opt} \\
P_2^{\opt}
\end{bmatrix} \Gamma_{(1:k,1:k)},  \label{eq:gev}
\end{align}
where the first and third equalities hold because of (\ref{eq:svd-transform}) and the second
equality is due to \eqref{eq:SVD-imply}.
Hence as a corollary of Ky Fan's maximum principle \cite[p.35]{bhat:1996},
$(P_1^{\opt}/2,P_2^{\opt}/2)$ as determined by (\ref{eq:svd-transform})
is an optimal solution pair of
\begin{subequations} \label{eq:cca3}
	\begin{alignat}{2}
	\max_{P_1, P_2} & \tr\left( [P_1^{\T}, P_2^{\T}]  \begin{bmatrix}
	0& C_{1,2}\\
	C_{1,2}^{\T} & 0
	\end{bmatrix}  \begin{bmatrix}
	P_1 \\
	P_2
	\end{bmatrix}  \right) \\
	\st &~  [P_1^{\T}, P_2^{\T}] \begin{bmatrix}
	C_{1,1} & 0\\
	0 & C_{2,2}
	\end{bmatrix}\begin{bmatrix}
	P_1 \\
	P_2
	\end{bmatrix}  = I_k.
	\end{alignat}
\end{subequations}
Or, equivalently,
\begin{subequations} \label{eq:cca4}
	\begin{alignat}{2}
	\max_{P_1, P_2} & \tr(P_1^{\T} C_{1,2} P_2)  \label{op:gev} \\
	\st &~ P_1^{\T} C_{1,1} P_1 + P_2^{\T} C_{2,2} P_2 = I_k. \label{op:con:gev}
	\end{alignat}
\end{subequations}

\subsubsection{Uncorrelated constrained optimization} \label{sec:ucca}
Under the constraints in \eqref{con:unit-variance}, the denominator in the objective function in \eqref{op:cca-fractional}
is constant,
and thus \eqref{eq:cca} becomes
\begin{subequations} \label{eq:cca5}
	\begin{alignat}{2}
	\max_{P_1, P_2} &  \tr(P_1^{\T} C_{1,2} P_2) \label{op:cca5-1}\\
	\st &~ P_1^{\T} C_{1,1} P_1 = P_2^{\T} C_{2,2} P_2 = I_k. \label{op:cca5-2}
	\end{alignat}
\end{subequations}
This is similar to \eqref{eq:cca4}, except that it imposes  uncorrelation and unit variance on projected points in each view
as \eqref{op:cca5-2}, rather than the correlated constraints across views as \eqref{op:con:gev}.

\subsection{Semi-paired CCA} \label{sec:sp-cca}
Semi-paired CCA is generally formulated as  extensions of CCA in the form of GEP to incorporate unpaired data from two views \cite{mehrkanoon2017regularized,blaschko2008semi,kimura2013semicca,chen2012unified} in order to improve learning performance.

In the classical CCA, data points from each view are well paired, i.e., each $\bx_i^{(1)}$ of view 1 pairs
with $\bx_i^{(2)}$ of view 2 for all instances $1\le i\le n$. In practice, not every collected data point $\bx^{(1)}$ from view 1 has a corresponding
collected data point $\bx^{(2)}$ from view 2 to go with it. Hence, we may have portion of  data  paired while the rest, often
a larger portion, of data unpaired. Such data are said to be semi-paired.

Suppose, in addition to the paired data $X_1$ and $X_2$ in (\ref{eq:Xp}), we have unpaired data  $\{\bx_i^{(1)}\}_{i=n+1}^{n_1}$  from view $1$, and $\{\bx_i^{(2)}\}_{i=n+1}^{n_2}$ from view $2$. I.e., we have $n_1-n$ unpaired data points for view 1 and $n_2-n$ unpaired data points for view $2$, respectively. Let
\begin{align}
\widetilde{X}_s = \begin{bmatrix}
\bx_1^{(s)},
\bx_2^{(s)},
\ldots,
\bx_{n_s}^{(s)}
\end{bmatrix} \in \bbR^{d_s \times n_s}, \forall s=1,2,
\end{align}
of all collected data, paired and unpaired, for view $s$.
Moreover, we suppose that a small amount of data are labeled for each view.
The labeled data can be paired or unpaired.

Below, we will briefly review two representative semi-paired CCA models in both
unsupervised and semi-supervised settings.

\subsubsection{Unsupervised learning} \label{sec:us-spcca}
Define the total covariance for each view:
\begin{align}
\widetilde{C}_{s,s} = \frac{1}{n_s}\widetilde{X}_s H_{n_s} \widetilde{X}_s^{\T}, \quad \forall s=1,2.
\end{align}
Analogously to \eqref{eq:gev},
SemiCCA \cite{kimura2013semicca} incorporates unpaired data by solving
\begin{multline}\label{eq:semicca}
\begin{bmatrix}
(1-\gamma) \widetilde{C}_{1,1} & \gamma C_{1, 2} \\
\gamma C_{1,2}^{\T} & (1-\gamma) \widetilde{C}_{2,2}
\end{bmatrix}
\begin{bmatrix}
P_1 \\
P_2
\end{bmatrix} \\
\!\!\!\!\!\!=
\begin{bmatrix}
\gamma C_{1,1} + (1-\gamma)  I_{d_1} & 0\\ 
0&\gamma C_{2,2} + (1-\gamma)  I_{d_2}
\end{bmatrix} \begin{bmatrix}
P_1 \\
P_2
\end{bmatrix} \Lambda,
\end{multline}
 
where parameter $\gamma \in [0,1]$ controls the tradeoff between CCA on
the paired data and PCA on all data. If $\gamma=1$, (\ref{eq:semicca}) reduces to CCA on the paired data only.
However, if $\gamma=0$, it does not reduce to two separate PCA on all data for each view because of the shared
$\Lambda$ that picks up the top $k$ eigenvalues out of those of both
$\widetilde{C}_{1,1}$ and $\widetilde{C}_{2,2}$.

The primal problem of (\ref{eq:semicca}) is given by the following optimization problem \cite{chen2012unified}
\begin{subequations}\label{op:semicca-primal}
	\begin{align}
	\max_{P_1, P_2} &~ \gamma \tr(P_1^{\T} C_{1,2} P_2) + \frac{1-\gamma}{2} \sum_{s=1}^2 \tr(P_s^{\T} \widetilde{C}_{s,s} P_s) \label{op:semicca-primal-1}\\
	\st &~ \sum_{s=1}^2 [\gamma P_s^{\T} C_{s,s} P_s + (1-\gamma ) P_s^{\T}P_s]= I_k. \label{op:semicca-primal-2}
	\end{align}
\end{subequations}

SemiCCA with Laplacian regularization (SemiCCALR) \cite{blaschko2008semi} incorporates unpaired data into CCA
in the form of GEP
\begin{align}
\!\!\!\!\begin{bmatrix}
0 & C_{1, 2} \\
C_{1,2}^{\T} & 0
\end{bmatrix} \begin{bmatrix}
P_1 \\
P_2
\end{bmatrix}
\!\!=\!\!
\begin{bmatrix}
C_{1,1} + R_1& 0 \\
0&C_{2,2} + R_2
\end{bmatrix} \begin{bmatrix}
P_1 \\
P_2
\end{bmatrix} \Lambda, \label{eq:semicca-lr}
\end{align}
where $R_s = \gamma_1 I_{d_s} + \gamma_2 \widetilde{X}_s L_s \widetilde{X}_s^{\T}$ with  graph Laplacian matrices $L_s=\textrm{diag}(W_s \bone_{n_s}) - W_s$ of graph $W_s \in \bbR^{n_s \times n_s}$ of view $s$ for $s=1,2$, and $\gamma_1, \gamma_2$ are regularization parameters. Its primal problem is
\begin{equation}\label{op:semicca-lr}
\max_{P_1, P_2} \, \tr(P_1^{\T} C_{1,2} P_2)\,\,
\st ~ \sum_{s=1}^2 P_s^{\T} (C_{s,s} + R_s) P_s = I_k,
\end{equation}
which differs from CCA in the form (\ref{eq:cca4}) only in replacing $C_{s,s}$ there by $C_{s,s} + R_s$ which involves all data -- paired and unpaired.

\subsubsection{Semi-supervised learning} \label{sec:ss-spcca}
In \cite{chen2012unified}, the supervised class labels are incorporated into SemiCCA for multi-class classification.
Let $\{ (\bx_{s_i}^{(s)}, y_i^{(s)}) \}_{i=1}^{m_s}$ be the $m_s$  data points from view $s$ whose labels are known,
where $s_i \in \{1,\ldots, n_s\}$ for $1\le i\le m_s$ are the indices of label data points of view $s$ and $y_i^{(s)} \in \{1, \ldots, c\}$ of $c$ classes are the corresponding labels. The labeled data points can come from both paired and unpaired portions of the data.
Pack the labeled data points to get
\begin{align}
\widehat{X}_s = \begin{bmatrix}
\bx_{s_1}^{(s)},
\bx_{s_2}^{(s)},
\ldots,
\bx_{s_{m_s}}^{(s)}
\end{bmatrix} \in \bbR^{d_s \times m_s}, \forall s=1,2,
\end{align}
and let $\widehat{Y}_s \in \{0,1\}^{c \times m_s}$ be its corresponding label matrix  obtained via one-hot representation:
\begin{align}
\big[\widehat{Y}_s\big]_{(r,i)} =
\begin{cases}
1, & \mbox{if $y_i^{(s)}= r$}, \\
0, & \textrm{otherwise,}
\end{cases}
\end{align}
for $r=1,\ldots,c$ and $i=1,\ldots,m_s$.
In LDA, the within-class scatter matrix $S_{\rm w}^{(s)}$ and between-class scatter matrix $S_{\rm b}^{(s)}$ for  view $s$ are defined as
\begin{subequations}\label{eq:S4LDA}
	\begin{align}
	&S_{\rm w}^{(s)} \!\!=\!\!\frac{1}{m_s}\widehat{X}_s \widehat{L}_{\rm w}^{(s)} \widehat{X}_s^{\T},\,\,
	\widehat{L}_{\rm w}^{(s)}\!\!=\!\! \textrm{diag}(\widehat{W}_{\rm w}^{(s)} \bone_{m_s}) - \widehat{W}_{\rm w}^{(s)}, \label{eq:S4LDA-2}\\
	&S_{\rm b}^{(s)} \!\!=\!\! \frac{1}{m_s} \widehat{X}_s \widehat{L}_{\rm b}^{(s)} \widehat{X}_s^{\T},\,\,
	\widehat{L}_{\rm b}^{(s)}\!\!=\!\! \textrm{diag}(\widehat{W}_{\rm b}^{(s)} \bone_{m_s}) - \widehat{W}_{\rm b}^{(s)}, \label{eq:S4LDA-b}
	\end{align}
\end{subequations}
where  graph matrices $\widehat{W}_{\rm w}^{(s)}$ and $\widehat{W}_{\rm b}^{(s)}$  are defined as
\begin{align}
\widehat{W}_{\rm w}^{(s)} \!\!=\!\! \widehat{Y}_s^{\T} (\widehat{Y}_s \widehat{Y}_s^{\T})^{-1} \widehat{Y}_s,\,\,
\widehat{W}_{\rm b}^{(s)} \!\!=\!\! \frac{1}{m_s} \bone_{m_s}  \bone_{m_s}^{\T} - \widehat{W}_{\rm w}^{(s)}. \label{eq:W-lda}
\end{align}
Note that $\widehat{Y}_s \widehat{Y}_s^{\T}$ is a diagonal matrix whose $(r, r)$th entry, denoted by $m_r^{(s)}$,
is the number of data points of view $s$ in class $r$:
$$
m_r^{(s)} = \sum_{i=1}^{m_s} \big[\widehat{Y}_s\big]_{(r,i)}.
$$ 
There are other approaches for constructing the above scatter matrices for different situations, too, such as local Fisher discriminant analysis (LFDA) \cite{chen2012unified} and marginal Fisher analysis (MFA) \cite{yan2006graph}.

Finally,  
S$^2$GCA \cite{chen2012unified} based on LFDA is formulated
as solving
\begin{multline}\label{eq:s2gca}
\begin{bmatrix}
M_{1,1} &  \gamma C_{1, 2} \\
\gamma C_{1,2}^{\T} & M_{2,2}
\end{bmatrix} \begin{bmatrix}
P_1 \\
P_2
\end{bmatrix}  \\
\!=\!
\begin{bmatrix}
\gamma C_{1,1} + (1 - \gamma) I_{d_1} & 0 \\
0&\gamma C_{2,2} +  (1-\gamma) I_{d_2}
\end{bmatrix} \begin{bmatrix}
P_1 \\
P_2
\end{bmatrix} \Lambda,
\end{multline} 
where
$M_{s,s} = \eta (S_{\rm b}^{(s)} - S_{\rm w}^{(s)}) + (1-\gamma) \widetilde{C}_{s,s}, ~ \forall s=1,2$, $\gamma \in [0,1]$ and $\eta$ are trade-off parameters. The primal problem of S$^2$GCA \eqref{eq:s2gca} is
\begin{subequations}\label{op:s2gca-primal}
	\begin{align}
	\max_{P_1, P_2} &~ \gamma \tr(P_1^{\T} C_{1,2} P_2) + \frac{1}{2} \sum_{s=1}^2 \tr(P_s^{\T} M_{s,s} P_s) \label{op:s2gca-primal-1}\\
	\st &~ \sum_{s=1}^2 [\gamma P_s^{\T} C_{s,s} P_s + (1-\gamma ) P_s^{\T}P_s]= I_k. \label{op:s2gca-primal-2}
	\end{align}
\end{subequations}

It is worth noting that when $\eta=0$, (\ref{eq:s2gca}) reduces to SemiCCA (\ref{eq:semicca}) in the unsupervised setting since $M_{s,s} = (1-\gamma) \widetilde{C}_{s,s}$. For $\eta > 0$, S$^2$GCA can be interpreted as a modified SemiCCA by incorporating supervised information through LFDA. Similarly,  the graph similarity matrices (\ref{eq:W-lda}) based on LDA and MFA can be employed. 

The approach of adding supervised information into SemiCCA can also be used to improve CCA and SemiCCALR:
\begin{subequations}\label{op:scca}
	\begin{align}
	\max_{P_1, P_2} &  \tr(P_1^{\T} C_{1,2} P_2) + \frac{\eta}{2} \sum_{s=1}^2 \tr(P_s^{\T} (S_{\rm b}^{(s)} - S_{\rm w}^{(s)})   P_s)\label{op:scca-1}\\
	\st &~  P_1^{\T} C_{1,1}^{(1)} P_1 +  P_2^{\T} C_{2,2} P_2 = I_k, \label{op:scca-2}
	\end{align}
\end{subequations}
and
\begin{subequations}\label{op:sscca-lr}
	\begin{align}
	\max_{P_1, P_2} &~ \tr(P_1^{\T} C_{1,2} P_2) + \frac{\eta}{2} \sum_{s=1}^2 \tr(P_s^{\T} (S_{\rm b}^{(s)} - S_{\rm w}^{(s)})  P_s)
	\label{op:sscca-lr-1} \\
	\st &~  \sum_{s=1}^2 P_s^{\T} (C_{s,s}+ R_s) P_s = I_k, \label{op:sscca-lr-2}
	\end{align}
\end{subequations}
named as SCCA and S$^2$CCALR respectively, to be used as two of the baseline methods in
Section~\ref{sec:experiments}.

\section{Uncorrelated Semi-paired Learning} \label{sec:spocca}
In Section~\ref{sec:sp-cca}, three representative semi-paired subspace learning models are reviewed. They turn into
GEP. Inspired by the three equivalent formulations of CCA in Section~\ref{sec:cca}, we will propose a novel semi-paired multi-view subspace learning framework with unit regularized covariance of projected data points in each view separately. 

In what follows, we first introduce the proposed framework in Section~\ref{ssec:USLF}, and then showcase several new models
in Section~\ref{sec:our-models}.

\subsection{Uncorrelated semi-paired learning framework}\label{ssec:USLF}
``Uncorrelated'' learning is about ensuring orthogonality among features of projected data points in the reduced common space. Specifically, it is to enforce, e.g., the covariance matrices of projected data points, denoted by $Z_sH_nZ_s^{\T} = P_s^{\T} C_{s,s} P_s\,\, \forall s$, are diagonal. The notion has been employed in both single-view \cite{jin2001face,ye2006feature,nie2009semi,zhang2011uncorrelated} and multi-view subspace learning models \cite{sun2015multiview,yin2019multiview,shu2019multi}.
It has been observed
that uncorrelated features learned  by these models can generally outperform correlated features. In this paper, we regard any constraint like $Z_s A Z_s^{\T}$ being diagonal as ``uncorrelated'' features where $A$ is a positive semi-definite matrix.

Imposing the uncorrelated property has mostly been explored for supervised models or fully paired data, but has not been yet explored for semi-paired subspace learning.
CCA formulated as
(\ref{eq:cca5}) naturally fits into this notion of uncorrelated learning.
However, all existing semi-paired CCA models are built on formulation (\ref{eq:cca4}) that is opposite to uncorrelated learning.
Abstracting from the semi-paired models in
Section~\ref{sec:sp-cca}, we propose the following uncorrelated semi-paired subspace learning framework
\begin{subequations} \label{op:framework0}
	\begin{alignat}{2}
	\max_{P_1, P_2} & \tr(P_1^{\T} \Phi_{1,2} P_2) + \frac{1}{2}\sum_{s=1}^2 \tr(P_s^{\T} \Phi_{s,s} P_s) \label{op:framework}\\
	\st &~ P_1^{\T} \Psi_{1,1} P_1 = P_2^{\T} \Psi_{2,2} P_2 = I_k, \label{con:framework}
	\end{alignat}
\end{subequations}
where $\Phi_{s,t} \in \bbR^{d_s \times d_t}$ and $\Psi_{s,s}\in \bbR^{d_s \times d_s}, \forall s, t=1,2$ are matrices 
to be defined, as in the corresponding models in Section~\ref{sec:our-models} below.

In general, the proposed framework (\ref{op:framework0}) does not
admit similar equivalent formulations to those of CCA in Section~\ref{sec:cca}.
Consider the Lagrangian function of (\ref{op:framework0})
\begin{align*}
L(P_1, P_2) &= \tr(P_1^{\T} \Phi_{1,2} P_2) + \frac{1}{2}\sum_{s=1}^2 \tr(P_s^{\T} \Phi_{s,s} P_s)   - \frac 12\sum_{s=1}^2 \tr((P_s^{\T} \Psi_{s,s} P_s - I_k) \Lambda_s),
\end{align*}
where Lagrangian multipliers $\Lambda_1 \in \bbR^{k \times k}$ and $\Lambda_2 \in \bbR^{k \times k}$
are symmetric. Hence,
the KKT conditions of \eqref{op:framework0} are
\begin{align*}
\Phi_{1,2} P_2 + \Phi_{1,1} P_1 &= \Psi_{1,1} P_1 \Lambda_1, \\
\Phi_{1,2}^{\T} P_1 + \Phi_{2,2} P_2 &= \Psi_{2,2} P_2 \Lambda_2, \\
P_s^{\T} \Psi_{s,s} P_s &= I_k\,\, \forall s=1,2.
\end{align*}
Rearrange these equations to give 
\begin{align}
\begin{bmatrix}
\Phi_{1,1} & \Phi_{1, 2} \\
\Phi_{1,2}^{\T} & \Phi_{2,2}
\end{bmatrix} \begin{bmatrix}
P_1 \\
P_2
\end{bmatrix}
\!\!=\!\!
\begin{bmatrix}
\Psi_{1,1}& 0 \\
0&\Psi_{2,2}
\end{bmatrix} \begin{bmatrix}
P_1 \Lambda_1\\
P_2 \Lambda_2
\end{bmatrix}, \label{eq:frameowrk}
\end{align}
which is  a multivariate eigenvalue problem because in general $\Lambda_1\ne\Lambda_2$.
It is worth noting that \eqref{eq:frameowrk} becomes GEP if $\Lambda_1=\Lambda_2$.

The optimization problem \eqref{op:framework0}
is usually referred to as the MAXBET problem \cite{geer:1984,berg:1988,liww:2015}
and it is numerically challenging. In fact, there is no numerical optimization technique that can
solve it with guarantee. Its associated multivariate eigenvalue problem (\ref{eq:frameowrk})
is notoriously difficult to solve as well, and there is no existing numerical linear algebra technique
that can directly solve it with guarantee, even for the case $k = 1$. Later in Section~\ref{sec:algo},
we will design an efficient successive approximation algorithm to approximately solve \eqref{op:framework0}.

In form, \eqref{op:framework0} differs from CCA \eqref{eq:cca5} in its two extra summands  in \eqref{op:framework}.
Its associated KKT condition \eqref{eq:frameowrk} differs from those semi-paired CCA models in the form of GEP  in
Section~\ref{sec:sp-cca} in that $\Lambda_1\ne\Lambda_2$. Those minorly looking differences have
huge numerical implications. In fact, both CCA and semi-paired CCA models in the form of GEP
can in principle be completely solved by the existing numerical linear algebra techniques \cite{abbd:1999,bddrv:2000,govl:2013,li:2015},
while the numerical states of the art for both \eqref{op:framework0} and \eqref{eq:frameowrk} are unsatisfactorily.

As a demonstration, let us look at why $\Lambda_1 \ne \Lambda_2$ in general.
It follows from \eqref{eq:frameowrk} that
\begin{align*}
\Lambda_1 = P_1^{\T} \Psi_{1,1}P_1 \Lambda_1 = P_1^{\T} \Phi_{1,2} P_2 + P_1^{\T} \Phi_{1,1}P_1, \\
\Lambda_2 = P_2^{\T} \Psi_{2,2}P_2 \Lambda_2 = P_2^{\T} \Phi_{1,2}^{\T} P_1 + P_2^{\T} \Phi_{2,2}P_2.
\end{align*}
Hence $\Lambda_1 = \Lambda_2$ implies $ P_1^{\T} \Phi_{1,1}P_1=P_2^{\T} \Phi_{2,2}P_2$. The latter
is guaranteed true for CCA \eqref{eq:cca5} since $\Phi_{s,s}=\Psi_{s,s}\,\,\forall s$ (in fact both are the same as $C_{s,s}$). But in general for the semi-paired models
in Section~\ref{sec:our-models} below, $\Phi_{s,s}\ne \Psi_{s,s}$.
 
\subsection{New semi-paired models}\label{sec:our-models}
Under the proposed framework (\ref{op:framework0}), we showcase five semi-paired models in both unsupervised and semi-supervised settings. They are motivated from the existing semi-paired models in Section~\ref{sec:sp-cca}, but with 
uncorrelated constraints on extracted features.

\subsubsection{Unsupervised learning}
SemiCCA (\ref{op:semicca-primal}) can be modified to have uncorrelated constraints as
\begin{subequations}\label{op:usemicca}
	\begin{align}
	\max_{P_1, P_2} &~ \gamma \tr(P_1^{\T} C_{1,2} P_2) + \frac{1-\gamma}{2} \sum_{s=1}^2 \tr(P_s^{\T} \widetilde{C}_{s,s} P_s) \label{op:usemicca-1} \\
	\st &~ P_s^{\T} (\gamma C_{s,s} + (1-\gamma) I_{d_s}) P_s = I_k ~~\forall s=1,2, \label{op:usemicca-2}
	\end{align}
\end{subequations}
which falls into the proposed framework (\ref{op:framework0}) with
\begin{subequations}\label{op:usemicca-para}
	\begin{align}
	\Phi_{1,2} &= \gamma C_{1,2},\,\, \Phi_{s,s} = (1-\gamma) \widetilde{C}_{s,s}\,\, \forall s=1,2, \label{op:usemicca-para-1}\\
	\Psi_{s,s} &= \gamma C_{s,s} + (1-\gamma) I_{d_s}\,\, \forall s=1,2. \label{op:usemicca-para-2}
	\end{align}
\end{subequations}
Like SemiCCA \eqref{op:semicca-primal}, model (\ref{op:usemicca}) exactly recovers CCA when $\gamma=1$,
but unlike SemiCCA, it also exactly recovers PCA on $\wtd{C}_{s,s}$ for $s=1,2$, respectively, when $\gamma=0$.
Recall that SemiCCA (\ref{op:semicca-primal}) for $\gamma=0$ is not exactly PCA because
$\Lambda_1\ne\Lambda_2$ in \eqref{eq:frameowrk} in general.

SemiCCALR \eqref{op:semicca-lr} can be adapted to have uncorrelated constraints as
\begin{align}
\max_{P_1, P_2}  \tr(P_1^{\T} C_{1,2} P_2), ~
\st~  P_s^{\T} (C_{s,s} + R_s) P_s = I_k\,\, \forall s, \label{op:usemicca-lr}
\end{align}
which falls into the proposed framework (\ref{op:framework0}) with
\begin{subequations}\label{op:usemicca-lr-para}
	\begin{align}
	\Phi_{1,2} &= C_{1,2},\,\, \Phi_{s,s} = 0\,\, \forall s=1,2, \label{op:usemicca-lr-para-1} \\
	\Psi_{s,s} &=  C_{s,s} +  \gamma_1 I_{d_s} + \gamma_2 \widetilde{X}_s L_s \widetilde{X}_s^{\T}\,\, \forall s=1,2. \label{op:usemicca-lr-para-2}
	\end{align}
\end{subequations}
For ease of reference, we refer to model (\ref{op:usemicca}) as USemiCCA, and model (\ref{op:usemicca-lr}) as USemiCCALR.

\subsubsection{Semi-supervised learning}
CCA \eqref{eq:cca5} is an unsupervised and uncorrelated method.
It can be made to incorporate supervised information, e.g.,
via linear discriminant analysis to maximize the between-class scatter with constrained within-class scatter.
This leads to an uncorrelated semi-supervised CCA (USCCA):
\begin{subequations}\label{op:ucca}
	\begin{align}
	\max_{P_1, P_2} &  \tr(P_1^{\T} C_{1,2} P_2) + \frac{\eta}{2} \sum_{s=1}^2 \tr(P_s^{\T} S_{\rm b}^{(s)}  P_s)\label{op:ucca-1}\\
	\st &~ \eta P_1^{\T} S_{\rm w}^{(1)} P_1 = \eta P_2^{\T} S_{\rm w}^{(2)} P_2 = I_k, \label{op:ucca-2}
	\end{align}
\end{subequations}
which falls into the proposed framework (\ref{op:framework0}) with
\begin{subequations}\label{op:ucca-para}
	\begin{align}
	\Phi_{1,2} &= C_{1,2},\,\, \Phi_{s,s} = \eta S_{\rm b}^{(s)}\,\, \forall s=1,2, \label{op:ucca-para-1} \\
	\Psi_{s,s} &=  \eta S_{\rm w}^{(s)}\,\, \forall s=1,2. \label{op:ucca-para-2}
	\end{align}
\end{subequations}

The adaptation of S$^2$GCA (\ref{op:s2gca-primal}) for uncorrelated constraints can be written as
\begin{subequations}\label{op:us2gca}
	\begin{align}
	\max_{P_1, P_2} &~ \gamma \tr(P_1^{\T} C_{1,2} P_2) \nonumber\\
	&+ \frac{1}{2} \sum_{s=1}^2 \tr(P_s^{\T} \big[\eta S_{\rm b}^{(s)} + (1-\gamma) \widetilde{C}_{s,s}\big] P_s) \label{op:us2gca-1}\\
	\st &~ \eta P_s^{\T}  S_{\rm w}^{(s)}P_s + (1-\gamma ) P_s^{\T}P_s= I_k\,\, \forall s=1,2, \label{op:us2gca-2}
	\end{align}
\end{subequations}
which falls into the proposed framework (\ref{op:framework0}) with
\begin{subequations}\label{op:us2gca-para}
	\begin{align}
	\Phi_{1,2} &= \gamma C_{1,2}, \,\Phi_{s,s} = \eta S_{\rm b}^{(s)} + (1-\gamma) \widetilde{C}_{s,s}\,\, \forall s=1,2, \label{op:us2gca-para-1}\\
	\Psi_{s,s} &= \eta S_{\rm w}^{(s)} + (1-\gamma) I_{d_s}\,\, \forall s=1,2.\label{op:us2gca-para-2}
	\end{align}
\end{subequations}
It is worth noting that our approach to leveraging supervised information in \eqref{op:us2gca} is different from  (\ref{op:s2gca-primal}) where the entire
$\eta (S_{\rm b}^{(s)} - S_{\rm w}^{(s)})$ appears in the objective but here it is broken into two with
$\eta S_{\rm b}^{(s)}$ still in the objective while $\eta  S_{\rm w}^{(s)}$ showing up in the constraints as for LDA.

In addition, SemiCCALR \eqref{op:semicca-lr} can be used as the base model for incorporating both supervised information and the uncorrelated constraints to give
\begin{subequations}\label{op:usemi2cca-lr}
	\begin{align}
	\max_{P_1, P_2} &~ \tr(P_1^{\T} C_{1,2} P_2) + \frac{\eta}{2} \sum_{s=1}^2 \tr(P_s^{\T} S_{\rm b}^{(s)}   P_s)
	\label{op:usemi2cca-lr-1} \\
	\st &~  \eta P_s^{\T} (S_{\rm w}^{(s)} + R_s) P_s = I_k\,\, \forall s=1,2. \label{op:usemi2cca-lr-2}
	\end{align}
\end{subequations}
Again this formulation also falls into the proposed framework (\ref{op:framework0}) with
\begin{subequations}\label{op:usemi2cca-lr-para}
	\begin{align}
	\Phi_{1,2} &= C_{1,2},\,\,\Phi_{s,s} = \eta S_{\rm b}^{(s)} \,\, \forall s=1,2, \label{op:usemi2cca-lr-para-1} \\
	\Psi_{s,s} &=  \eta S_{\rm w}^{(s)} +  \gamma_1 I_{d_s} + \gamma_2 \widetilde{X}_s L_s \widetilde{X}_s^{\T}\,\, \forall s=1,2.
	\label{op:usemi2cca-lr-para-2}
	\end{align}
\end{subequations}
We will refer to model \eqref{op:us2gca} as US$^2$GCA and model (\ref{op:usemi2cca-lr}) as US$^2$CCALR.

\section{Successively Alternating Approximation (SAA)}\label{sec:algo}
Note that the framework \eqref{op:framework0} and its instantiated novel models bear the same optimization formulation
\begin{subequations}\label{op:Uframework}
	\begin{equation}\label{op:Uframework-1}
	\max_{P_s^{\T}B_s P_s=I_{k}\,\forall s}\,\,f(P_1,P_2),
	\end{equation}
	where
	\begin{equation}\label{op:Uframework-2}
	f(P_1,P_2):={\tr(P_1^{\T} C P_2) }
	+ \frac12\sum_{s=1}^2 { \tr(P_s^{\T} A_s P_s)},
	\end{equation}
\end{subequations}
$A_s\in \bbR^{d_s\times d_s}$  are symmetric, $B_s\in \bbR^{d_s\times d_s}$  are symmetric positive  definite and $P_s\in \bbR^{d_s\times k}$ for $s=1,2$. Let
\begin{equation}\label{eq:scrA0}
\scrA =\left[\begin{array}{cc}A_1 & C \\ C^{\T} & A_2\end{array}\right],~~P=\left[\begin{array}{c}P_1 \\P_2\end{array}\right].
\end{equation}
Then we have $f(P_1,P_2)=\frac12\tr(P^{\T}\scrA P)$.

We start by transforming \eqref{op:Uframework} into the case $B_s=I_{d_s}$ for $s=1,2$. Let
$B_s=L_sL_s^{\T}$ be the Cholesky decompositions  and set
\begin{align}
\bar P&=\diag(L_1^{\T},L_2^{\T})P, \label{eq:P2barP}\\
\bar \scrA &=\diag(L_1^{-1},L_2^{-1})\scrA \diag(L_1^{-\T},L_2^{-\T}). \label{eq:A2barA}
\end{align}
Then \eqref{op:Uframework} is turned into
\begin{equation}\label{op:Uframework2}
\argmax_{\bar P_s\in \bbO^{d_s\times k}\,\forall s} \tr(\bar P^{\T} \bar \scrA  \bar P).
\end{equation}
The optimizers of \eqref{op:Uframework} and \eqref{op:Uframework2} are related according to \eqref{eq:P2barP}.

Problem \eqref{op:Uframework2} is an MAXBET problem \cite{geer:1984,berg:1988,liww:2015}. Unfortunately, except for trivial cases (such as $A_s=0$ in CCA \eqref{eq:cca5}, or $B_s=A_s$ \cite{chwa:1993}), \eqref{op:Uframework2} does not admit a closed form solution. Moreover, there are no efficient solvers that can guarantee to compute its global maximizer, and
existing optimization methods are too expensive to handle large scale ones.
In what follows, we will propose a successive alternating approximation scheme to solve \eqref{op:Uframework2} by
building one column of $\bar P$ at a time.
Although the new scheme still doesn't guarantee that the computed solution is globally optimal, it
admits the following advantages that no existing method does:
\begin{enumerate}[(a)]
	\item for the trivial cases $A_s=0$ or $B_s=A_s$ for $s=1,2$, or $k=1$, it finds the global maximizer $\bar P$ of \eqref{op:Uframework2} and thus of \eqref{op:Uframework} by extension;
	\item efficient and scalable Krylov subspace methods can be readily exploited for large scale problems.
\end{enumerate}
 
\subsection{Algorithmic framework}\label{subsec:SDA}
Recall that we will solve \eqref{op:Uframework2} and then recover a solution to \eqref{op:Uframework}
according to \eqref{eq:P2barP}, i.e., $P=\diag(L_1^{-\T},L_2^{-\T})\bar P$.
For conciseness, we will drop all the bars in notation, or equivalently assume $B_s=I_{d_s}$ for $s=1,2$
in Sections~\ref{subsec:SDA} to \ref{subsec:deflation}.
Finally in Section~\ref{subsec:SDAvTRS},  we present our final complete algorithm for general $B_s\ne I_{d_s}$.

Our scheme is similar to that for computing the top $k$ principal component vectors in PCA and the top $k$ canonical correlation vectors in CCA \cite[Section 14.1]{hasi:2007}. It starts by calculating
the first column vector $\bp^{(1)}$ of optimal $ P$ via
\begin{align}\label{eq:1stpair}
\bp^{(1)}=\argmax_{\|\bp_s\|_2=1\,\forall s}  \bp^{\T} \scrA  \bp,
\end{align}
where and henceforth $\bp_s\in\bbR^{d_s}$ is implicitly assumed to be the subvectors of $\bp\in\bbR^{d_1+d_2}$
partitioned as $\bp=[\bp_1^{\T},\bp_2^{\T}]^{\T}$. To solve \eqref{eq:1stpair}, we adopt
an alternating approximation scheme to maximize $ \bp^{\T} \scrA  \bp$ alternatingly between
$\bp_1$ and $\bp_2$ as detailed in the next subsection.

Suppose now approximations to the first $j$ columns of $P_s$ are gotten:
$$
\bp^{(i)}=\begin{bmatrix}
\bp_1^{(i)} \\ \bp_2^{(i)}
\end{bmatrix},\,\bp_s^{(i)}\in \bbR^{d_s}\,\,\mbox{for}\,\,i=1,2,\ldots,j, 
$$
and set
$$
P_s^{(j)}=[\bp_s^{(1)},\dots,\bp_s^{(j)}]\in \bbO^{d_s\times j},\quad s=1,2.
$$
The $(j+1)$st column $\bp^{(j+1)}$ is then determined by
\begin{align}\label{eq:jthpair}
\bp^{(j+1)}=\argmax_{
	\|\bp_s\|_2=1,~\bp_s^{\T} P_s^{(j)}=\mathbf{0}\,\forall s}  \bp^{\T} \scrA  \bp.
\end{align}
This problem will be again solved alternatingly.
It is not hard to see that the resulting $ P_s^{(j+1)}=[ P_s^{(j)},\bp_s^{(j+1)}]\in \bbO^{d_s\times (j+1)}$ for $s=1,2$.
The procedure stops until after $\bp^{(k)}$ is computed. 

We name the whole procedure the Successively Alternating Approximation (SAA). In the next two subsections,
we will explain how to solve \eqref{eq:1stpair} and \eqref{eq:jthpair}.
 
\subsection{Maximization by an alternating scheme}\label{subsec:alter}
We will explain how to solve \eqref{eq:1stpair} in this subsection and then in the next subsection
we show how to turn \eqref{eq:jthpair} into one in the form of \eqref{eq:1stpair}.

For ease of reference later, we will use slightly different notations for \eqref{eq:1stpair}:
\begin{equation}\label{eq:scrA-generic}
\scrA^{\T}=\scrA=\kbordermatrix{ &\sss \tilde d_1 &\sss \tilde d_2 \\
	\sss \tilde d_1 & A_{11} & A_{12} \\
	\sss \tilde d_2 & A_{21} & A_{22} }, \,\,
\bp=\kbordermatrix{ & \\
	\sss \tilde d_1 & \bp_1 \\
	\sss \tilde d_2 & \bp_2 }
\end{equation}
so that later we can call it with different $\tilde d_s$ as $d_s$ has been reserved.

We will solve \eqref{eq:1stpair} by maximizing $\bp^{\T}A\bp$ alternatingly between $\bp_1$ and $\bp_2$
by fixing one at the current approximation and maximizing over the other until convergence. 
Specifically, it goes as follows: given an approximation $\bp_2^{(1)}$ (or simply taking a random one),
repeat
\begin{subequations}\label{eq:subp:k=1}
	\begin{align}
	\bp_1^{(1)}&=\argmax_{\|\bp_1\|_2=1}\, \bp_1^{\T}A_{11}\bp_1+2(A_{21}\bp_2^{(1)})^{\T}\bp_1, \label{eq:subp:k=1a}\\
	\bp_2^{(1)}&=\argmax_{\|\bp_2\|_2=1}\, \bp_2^{\T}A_{22}\bp_2+2(A_{12}\bp_1^{(1)})^{\T}\bp_2, \label{eq:subp:k=1b}
	\end{align}
\end{subequations}
until convergence.
Both are in the form of the well-known trust-region subproblem (TRS) for which very efficient methods have been proposed for both small and large scale  problems.
 
TRS is one of the most well-studied optimization problems \cite{cogt:2000,nowr:2006}.
Theoretically, sufficient and necessary optimality conditions for
the global solution were developed by Gay \cite{gay:1981} and Mor\'e and Sorensen \cite{moso:1983}  (see also \cite{sore:1982} and \cite[Theorem 4.1]{nowr:2006}), and numerically, there are efficient methods that can guarantee to compute a global maximizer.
In particular,  the  Mor\'e-Sorensen method \cite{moso:1983} is a  Newton method that solves its KKT system and it is efficient  for small to medium sized TRS.  For large scale TRS, several efficient numerical approaches can be used  (see, e.g.,
\cite{hage:2001,rewo:1997,ross:2008,stei:1983}). Here we mention the Krylov subspace type method, namely the {\it  Generalized Lanczos Trust-Region} (\texttt{GLTR}) method \cite{golr:1999} (see also \cite[Chapter 5]{cogt:2000})
because of its popularity. Although {\tt GLTR} was developed two decades ago, its complete convergence analysis
is  more of recent works \cite{zhsl:2017,cadu:2018}, along with some improvements \cite{zhsh:2018}.

\begin{algorithm}[thb!!!]
	\caption{Alternating Approximation for \eqref{eq:1stpair}}
	\label{alg:TRS}
	\begin{flushleft} 
	{\bf Input}: symmetric $\scrA$ partitioned as in \eqref{eq:scrA-generic}; \\
	{\bf Output}: approximate solution $(\bp_1^{(1)},\bp_2^{(1)})$ with $\bp_s\in\bbR^{d_s}$.
\end{flushleft}
	\hrule
	\begin{algorithmic}[1]
		\STATE choose an initial guess for $\bp^{(1)}_2$;
		\REPEAT
		\STATE solve \eqref{eq:subp:k=1a} for its maximizer, by either \texttt{trust} or \texttt{GLTR}; 
		\STATE solve \eqref{eq:subp:k=1b} for its maximizer, by either \texttt{trust} or \texttt{GLTR}; 
		\UNTIL convergence
		\RETURN the last $(\bp_1^{(1)},\bp_2^{(1)})$.
	\end{algorithmic}
\end{algorithm}

In our numerical experiments, we use MATLAB's built-in function
\texttt{trust}\footnote{MATLAB's \texttt{trust} is available in MATLAB 7.0 (R14). It computes
	the full eigen-decomposition of the involved matrix and then solves the resulting secular equation.
	Hence, \texttt{trust} is only suitable for small-to-medium sized TRS.}
whenever the size $d\le 500$. For $d>500$, {\tt GLTR} is called. Here $d=\tilde d_1$ or $\tilde d_2$, depending on
which one of \eqref{eq:subp:k=1a} and \eqref{eq:subp:k=1b} is being solved. Algorithm~\ref{alg:TRS} summarizes
the algorithm for \eqref{eq:1stpair}.

\subsection{Transform (52)}\label{subsec:deflation} 
Let $U_s^{(j)}\in \bbO^{d_s\times (d_s-j)}$ such that $[ P_s^{(j)},U_s^{(j)}]\in \bbO^{d_s\times d_s}$, i.e.,
orthogonal. Then $\cR(U_s^{(j)})=\cR( P_s^{(j)})^{\perp}$. Any $\bp_s$ such that $\bp_s^{\T} P_s^{(j)}=0$
is in $\cR( P_s^{(j)})^{\perp}$ and vice versa, and hence $\bp_s=U_s^{(j)}\bq_s$ for some $\bq_s\in\bbR^{d_s-j}$
and $\|\bp_s\|_2=\|\bq_s\|_2$. Consequently,
$$
\bp^{\T} \scrA  \bp=\bq^{\T}\what\scrA\bq,
$$
where $\bq=[\bq_1^{\T},\bq_2^{\T}]^{\T}$ and
$$
\what\scrA=\kbordermatrix{ &\sss d_1-j &\sss d_2-j \\
	\sss d_1-j & (U_1^{(j)})^{\T}A_{11}U_1^{(j)} & (U_1^{(j)})^{\T}A_{12}U_2^{(j)} \\
	\sss d_2-j & (U_2^{(j)})^{\T}A_{21}U_1^{(j)} & (U_2^{(j)})^{\T}A_{22}U_2^{(j)} }.
$$
We have proved the following theorem.

\begin{theorem}\label{thm:trans-jpair}
	Problem \eqref{eq:jthpair} is equivalent to
	\begin{equation}\label{eq:trans-jthpair}
	\bq^{\opt}:=\argmax_{\|\bq_s\|_2=1\,\forall s} \bq^{\T}\what\scrA\bq
	\end{equation}
	in the sense that $\bp_s^{(j+1)}=U_s^{(j)}\bq_s^{\opt}\,\forall s$ and $\bq_s^{\opt}=(U_s^{(j)})^{\T}\bp_s^{(j+1)}\,\forall s$.
\end{theorem}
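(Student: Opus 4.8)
The plan is to prove the equivalence by exhibiting an explicit, norm-preserving change of variables between the feasible sets of \eqref{eq:jthpair} and \eqref{eq:trans-jthpair} under which the two objectives coincide; the stated correspondence of maximizers then follows immediately. First I would characterize the linear constraint: the condition $\bp_s^{\T} P_s^{(j)}=\mathbf{0}$ says precisely that $\bp_s\in\cR(P_s^{(j)})^{\perp}$. Since $[P_s^{(j)},U_s^{(j)}]\in\bbO^{d_s\times d_s}$ is orthogonal, the column space splits as $\cR(P_s^{(j)})\oplus\cR(U_s^{(j)})$ with $\cR(U_s^{(j)})=\cR(P_s^{(j)})^{\perp}$. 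Hence $\bp_s^{\T} P_s^{(j)}=\mathbf{0}$ holds if and only if $\bp_s=U_s^{(j)}\bq_s$ for some $\bq_s\in\bbR^{d_s-j}$, and because $U_s^{(j)}$ has orthonormal columns this $\bq_s$ is unique, namely $\bq_s=(U_s^{(j)})^{\T}\bp_s$. This gives a linear bijection between $\{\bp_s:\bp_s^{\T} P_s^{(j)}=\mathbf{0}\}$ and $\bbR^{d_s-j}$ on each block.

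Next I would check that this bijection carries the sphere constraint to the sphere constraint. Orthonormality of the columns of $U_s^{(j)}$ gives $\|\bp_s\|_2=\|U_s^{(j)}\bq_s\|_2=\|\bq_s\|_2$, so $\|\bp_s\|_2=1$ is equivalent to $\|\bq_s\|_2=1$. Assembling the two blocks, the full feasible set of \eqref{eq:jthpair} is mapped bijectively onto that of \eqref{eq:trans-jthpair} via $\bp=\diag(U_1^{(j)},U_2^{(j)})\bq$, with inverse $\bq=\diag((U_1^{(j)})^{\T},(U_2^{(j)})^{\T})\bp$.

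Finally I would verify that the objectives agree under this map. Substituting $\bp_s=U_s^{(j)}\bq_s$ into the block partition \eqref{eq:scrA-generic} and expanding block-by-block yields $\bp^{\T}\scrA\bp=\bq^{\T}\what\scrA\bq$, where the $(s,t)$ block of $\what\scrA$ is exactly $(U_s^{(j)})^{\T}A_{st}U_t^{(j)}$, matching its stated definition. Since corresponding feasible points yield equal objective values and the feasible sets are in bijection, a maximizer of one problem is sent to a maximizer of the other, giving $\bp_s^{(j+1)}=U_s^{(j)}\bq_s^{\opt}$ and $\bq_s^{\opt}=(U_s^{(j)})^{\T}\bp_s^{(j+1)}$.

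I do not expect a genuine obstacle here, as the argument is an orthogonal change of variables; the work is routine once the right basis $U_s^{(j)}$ is introduced. The one point requiring a touch of care is that $U_s^{(j)}$ is rectangular, so $U_s^{(j)}(U_s^{(j)})^{\T}$ is only the orthogonal projector onto $\cR(U_s^{(j)})$ rather than the identity. The recovery formula $\bq_s^{\opt}=(U_s^{(j)})^{\T}\bp_s^{(j+1)}$ nonetheless holds because the feasible $\bp_s^{(j+1)}$ already lies in $\cR(U_s^{(j)})$, on which this projector acts as the identity; it is this feasibility that makes the inverse map well defined.
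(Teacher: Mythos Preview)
Your proposal is correct and follows essentially the same approach as the paper: an orthogonal change of variables $\bp_s=U_s^{(j)}\bq_s$ that preserves the sphere constraint and transforms the objective $\bp^{\T}\scrA\bp$ into $\bq^{\T}\what\scrA\bq$, giving a bijection between feasible sets and hence between maximizers. Your added remark about $U_s^{(j)}(U_s^{(j)})^{\T}$ being only a projector, with feasibility of $\bp_s^{(j+1)}$ ensuring the inverse formula, is a nice point of care that the paper leaves implicit.
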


The transformed problem \eqref{eq:trans-jthpair} takes exactly the same form as \eqref{eq:1stpair}, and thus can be solved
in the same way as described in Section~\ref{subsec:alter}.

It remains to explain how to construct $U_s^{(j)}\in \bbO^{d_s\times (d_s-j)}$ and $\what\scrA$. Theoretically, they can be extracted from the $Q$-factor of the full $QR$ decomposition of $P_s^{(j)}$ as shown by Theorem~\ref{thm:orth-comp} below.

\begin{theorem}\label{thm:orth-comp}
	Let the $QR$ decomposition of $ P_s^{(j)}$ be
	\begin{equation}\label{eq:Ps(j)-QR}
	P_s^{(j)}=Q_s^{(j)}R_s^{(j)},\,\, Q_s^{(j)}\in\bbO^{d_s\times d_s}.
	\end{equation}
	Then $U_s^{(j)}=\big[Q_s^{(j)}\big]_{(:,j+1:d_s)}$, i.e., the last $d_s-j$ columns of $Q_s^{(j)}$.
\end{theorem}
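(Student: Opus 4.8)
The plan is to read $U_s^{(j)}$ off directly from the full (square) $QR$ factor $Q_s^{(j)}$ and to verify that its trailing columns furnish an orthonormal basis of $\cR(P_s^{(j)})^{\perp}$, which is exactly the property demanded of $U_s^{(j)}$ in Section~\ref{subsec:deflation}. First I would record the block structure forced by the dimensions: since $Q_s^{(j)}\in\bbO^{d_s\times d_s}$ is square while $P_s^{(j)}$ has only $j$ columns, the factor must be $R_s^{(j)}=\left[\begin{smallmatrix}\tilde R_s\\ 0\end{smallmatrix}\right]$ with $\tilde R_s\in\bbR^{j\times j}$ upper triangular and a trailing $(d_s-j)\times j$ zero block.

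Next I would establish that $\tilde R_s$ is nonsingular. Because $P_s^{(j)}\in\bbO^{d_s\times j}$ its columns are orthonormal, so $(P_s^{(j)})^{\T}P_s^{(j)}=I_j$; substituting $P_s^{(j)}=Q_s^{(j)}R_s^{(j)}$ and using $(Q_s^{(j)})^{\T}Q_s^{(j)}=I_{d_s}$ collapses this to $\tilde R_s^{\T}\tilde R_s=I_j$, so $\tilde R_s$ is invertible. Expanding the factorization blockwise then gives $P_s^{(j)}=\big[Q_s^{(j)}\big]_{(:,1:j)}\tilde R_s$, and invertibility of $\tilde R_s$ yields $\cR(P_s^{(j)})=\cR\big(\big[Q_s^{(j)}\big]_{(:,1:j)}\big)$.

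Finally I would invoke the orthogonality of $Q_s^{(j)}$. Its columns form an orthonormal basis of $\bbR^{d_s}$, so the trailing block $U_s^{(j)}:=\big[Q_s^{(j)}\big]_{(:,j+1:d_s)}$ has orthonormal columns, i.e.\ $U_s^{(j)}\in\bbO^{d_s\times(d_s-j)}$, and each of its columns is orthogonal to every column of $\big[Q_s^{(j)}\big]_{(:,1:j)}$, hence orthogonal to $\cR(P_s^{(j)})$. A dimension count---$d_s-j$ orthonormal vectors lying in the $(d_s-j)$-dimensional space $\cR(P_s^{(j)})^{\perp}$---forces $\cR(U_s^{(j)})=\cR(P_s^{(j)})^{\perp}$; concatenating, $[P_s^{(j)},U_s^{(j)}]$ is square with orthonormal columns and therefore lies in $\bbO^{d_s\times d_s}$, which is precisely the required characterization.

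I do not expect a genuine obstacle here, as the statement is an elementary fact about the full $QR$ decomposition. The only points needing care are justifying the nonsingularity of $\tilde R_s$ (equivalently, the full column rank of $P_s^{(j)}$), which is where orthonormality of the columns of $P_s^{(j)}$ is used, and insisting throughout on the \emph{full} rather than the thin $QR$ factorization, so that the trailing $d_s-j$ columns of $Q_s^{(j)}$ are actually present to form $U_s^{(j)}$.
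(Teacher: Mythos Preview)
Your proposal is correct and follows essentially the same approach as the paper: use the full $QR$ factorization to identify $\cR(P_s^{(j)})$ with the span of the first $j$ columns of $Q_s^{(j)}$ via nonsingularity of the top $j\times j$ block of $R_s^{(j)}$, and then take the orthogonal complement as the span of the remaining columns. Your argument is in fact more explicit than the paper's (which simply asserts that $[R_s^{(j)}]_{(1:j,:)}$ is nonsingular and can be made $I_j$), but the underlying idea is identical.
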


\begin{proof}
	Since $ P_s^{(j)}\in\bbO^{d_s\times j}$ and $R_s^{(j)}\in\bbR^{d_s\times j}$ is upper triangular, $\big[R_s^{(j)}\big]_{(1:j,:)}$
	must be nonsingular (in fact, it can be made $I_j$). Hence $\cR( P_s^{(j)})=\cR(\big[Q_s^{(j)}\big]_{(:,1:j)})$ and
	$\cR( P_s^{(j)})^{\perp}=\cR(\big[Q_s^{(j)}\big]_{(:,j+1:d_s)})$.
\end{proof}

Numerically, we will not compute the $QR$ decomposition of $ P_s^{(j)}$ every single time when $j$ is increased by $1$, but
rather keep $Q_s^{(j)}$ in the product form of $j$ elementary orthogonal matrices -- Householder matrices \cite{demm:1997,govl:2013} in our implementation. Accordingly, there is no need to form $U_s^{(j)}$ explicitly, and $(U_s^{(j)})^{\T}A_{st}U_t^{(j)}$ will be
updated from the previous $(U_s^{(j-1)})^{\T}A_{st}U_t^{(j-1)}$ efficiently.

For $j=1$, we compute the Householder matrix $H_s^{(1)}=I_{d_s}-2\bu_s^{(1)}(\bu_s^{(1)})^{\T}$ such that
$H_s^{(1)}\bp_s^{(1)}=\alpha_1 \be_1$ to give $\bp_s^{(1)}=H_s^{(1)} (\alpha_1\be_1)$ and $Q_s^{(1)}=H_s^{(1)}$.
It is based on the following well-known fact.

\begin{theorem}\label{thm:householder}
	For any vector $\by\in\bbR^d$ that is not a scalar multiple of $\be_1$, let
	$$
	\alpha=-\sign(\by_{(1)})\|\by\|_2, \,\,\bu=\frac {\by-\alpha\be_1}{\|\by-\alpha\be_1\|_2},
	$$
	where $\sign(\by_{(1)})$ is the sign of $\by_{(1)}$, i.e., $1$ if $\by_{(1)}\ge 0$, and $0$ otherwise.
	Then $H\by=\alpha\be_1$, where Householder matrix $H=I_d-2\bu\bu^{\T}$.
\end{theorem}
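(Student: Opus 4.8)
The plan is to verify the claimed identity $H\by=\alpha\be_1$ by direct computation, since it is purely algebraic. Writing $\bv:=\by-\alpha\be_1$ so that $\bu=\bv/\|\bv\|_2$, I would first record that $\bu$, and hence $H=I_d-2\bu\bu^{\T}$, is well defined: because $\alpha\be_1$ is a scalar multiple of $\be_1$ while $\by$ is assumed not to be, we have $\by\ne\alpha\be_1$, so $\bv\ne\mathbf{0}$ and $\|\bv\|_2>0$.

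Next I would reduce $H\by$ to a scalar multiple of $\bv$. Since $\bu\bu^{\T}\by=\bu(\bu^{\T}\by)=\frac{(\bv^{\T}\by)}{\|\bv\|_2^2}\bv$, it follows that $H\by=\by-\frac{2(\bv^{\T}\by)}{\|\bv\|_2^2}\,\bv$. The whole argument then comes down to showing that the scalar coefficient $\frac{2(\bv^{\T}\by)}{\|\bv\|_2^2}$ equals $1$, for then $H\by=\by-\bv=\alpha\be_1$, as wanted.

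To establish that, I would expand the two inner products $\bv^{\T}\by=\|\by\|_2^2-\alpha\by_{(1)}$ and $\|\bv\|_2^2=\|\by\|_2^2-2\alpha\by_{(1)}+\alpha^2$. The key step---the only place any feature of the construction is used---is the identity $\alpha^2=\|\by\|_2^2$, which holds because $\alpha=\pm\|\by\|_2$ by definition. Substituting it gives $\|\bv\|_2^2=2\|\by\|_2^2-2\alpha\by_{(1)}=2(\|\by\|_2^2-\alpha\by_{(1)})=2\,\bv^{\T}\by$, so the coefficient is exactly $1$ and the identity $H\by=\alpha\be_1$ follows.

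There is no genuine obstacle here: the conclusion is a one-line computation once the identity $\|\bv\|_2^2=2\,\bv^{\T}\by$ is observed. It is worth emphasizing that the particular sign convention $\alpha=-\sign(\by_{(1)})\|\by\|_2$ plays no role in the exact equality---only the magnitude relation $\alpha^2=\|\by\|_2^2$ is needed---the sign being chosen solely to keep $\by-\alpha\be_1$ away from cancellation when forming $\bu$, which is the relevant concern in the Householder-based implementation described above.
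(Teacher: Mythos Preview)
Your proof is correct and is exactly the standard direct verification one finds in the numerical linear algebra texts the paper cites; the paper itself does not supply a proof of this theorem but simply refers the reader to \cite{demm:1997,govl:2013}. Your remark that only $\alpha^2=\|\by\|_2^2$ is needed for the exact identity, with the sign choice serving purely to avoid cancellation, is also the right perspective (and incidentally shows that the paper's stated convention ``$0$ otherwise'' for $\sign(\by_{(1)})$ must be a typo for $-1$, since otherwise $\alpha^2=\|\by\|_2^2$ would fail when $\by_{(1)}<0$).
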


The reader is referred to \cite{demm:1997,govl:2013} or any other books on matrix computations
for more detail.
We will emphasize that it suffices to just store $\bu_s^{(1)}\in\bbR^{d_s}$ for $H_s^{(1)}$.
Accordingly,
$$
(U_s^{(1)})^{\T}A_{st}U_t^{(1)}=\big[H_s^{(1)}A_{st}H_t^{(1)}\big]_{(2:d_s,2:d_s)}
$$
can be compute efficiently in $O(d_s^2)$ flops.

In general, we have
\begin{equation}\label{eq:Qsj-form}
Q_s^{(j)}=H_s^{(1)}\,\diag(1,H_s^{(2)})\,\cdots \diag(I_{j-1},H_s^{(j)}),
\end{equation}
where $H_s^{(j)}=I_{d_s-j+1}-2\bu_s^{(j)}(\bu_s^{(j)})^{\T}$ with $\bu_s^{(j)}\in\bbR^{d_s-j+1}$.
In form, we have
\eqref{eq:Ps(j)-QR} but neither $Q_s^{(j)}$ nor $R_s^{(j)}$ is explicitly computed; only the existence of
$Q_s^{(j)}$ in the form of \eqref{eq:Qsj-form} matters. After $\bq_s^{\opt}$ as defined in Theorem~\ref{thm:trans-jpair}
is computed,
\begin{equation}\label{eq:ps(j+1):comp}
\bp_s^{(j+1)}=U_s^{(j)}\bq_s^{\opt}=\big[Q_s^{(j)}\big]_{(:,j+1:d_s)}\bq_s^{\opt}
\end{equation}
can be done in $O(jd_s)$ flops.
Suppose now $\bp_s^{(j+1)}$ has just been computed. We then have
\begin{align*}
P_s^{(j+1)}&=[ P_s^{(j)},\bp_s^{(j+1)}]=[Q_s^{(j)}R_s^{(j)},\bp_s^{(j+1)}]\\
&=Q_s^{(j)}[R_s^{(j)},(Q_s^{(j)})^{\T}\bp_s^{(j+1)}]\\
&=:Q_s^{(j)}\begin{bmatrix}
R_s^{(j)} & 0 \\ 0& \br_s^{(j+1)}
\end{bmatrix}.
\end{align*}
It follows from \eqref{eq:ps(j+1):comp} that $\br_s^{(j+1)}=\bq_s^{\opt}$. Next, we find Householder matrix
$$
H_s^{(j+1)}=I_{d_s-j}-2\bu_s^{(j+1)}(\bu_s^{(j+1)})^{\T}\in \bbO^{(d_s-j)\times (d_s-j)}
$$
such that $H_s^{(j+1)}\br_s^{(j+1)}=\alpha_{j+1}\be_1$ to yield $Q_s^{(j+1)}$ by adding another matrix-factor
$\diag(I_j,H_s^{(j+1)})$ to the right end of the expression for
$Q_s^{(j)}$ in \eqref{eq:Qsj-form}, and then
\begin{multline}\label{eq:UAU-update}
(U_s^{(j+1)})^{\T}A_{st}U_t^{(j+1)}\\
=\big[H_s^{(j+1)}\{(U_s^{(j)})^{\T}A_{st}U_t^{(j)}\}H_t^{(j+1)}\big]_{(2:d_s-j,2:d_s-j)}
\end{multline}
in $O(d_s^2)$ flops.

\begin{algorithm}[t] 
	\caption{Successively Alternating Approximation (SAA)}
	\label{alg:SAA}
\begin{flushleft}
	{\bf Input}: data matrices $\scrA$ as in \eqref{eq:scrA0} and $\{B_s\}_{s=1}^2$; \\  
	{\bf Output}:  approximation solution pair $(P_1,P_2)$ of \eqref{op:Uframework}.
\end{flushleft}
	\hrule
	\begin{algorithmic}[1]
		\STATE compute Cholesky decompositions $B_s=L_sL_s^{\T}$ for $s=1,2$;
		\STATE $\scrA \leftarrow \diag(L_1^{-1},L_2^{-1})\scrA \diag(L_1^{-\T},L_2^{-\T})$, partitioned as $[A_{st}]$
		with $A_{st}\in\bbR^{d_s\times d_t}$ for $s,t\in\{1,2\}$;
		\STATE call Algorithm~\ref{alg:TRS} with input $\scrA=[A_{st}]_{s,t=1}^2$ to yield output $(\bq_1^{\opt},\bq_2^{\opt})$;
		\STATE $\bp_s^{(1)}=\bq_s^{\opt}$ for $s=1,2$;
		\FOR{$j=1,2,\dots,k-1$}
		\STATE construct Householder matrix $H_s^{(j)}=I-2\bu_s^{(j)}(\bu_s^{(j)})^{\T}$ on $\bq_s^{\opt}$ according to
		Theorem~\ref{thm:householder};
		\STATE $A_{st}\leftarrow [H_s^{(j)}A_{st}H_s^{(j)}]_{(2:d_s-j+1,2:d_s-j+1)}$ for $s,t\in\{1,2\}$;
		\STATE call Algorithm~\ref{alg:TRS} with input $\scrA=[A_{st}]_{s,t=1}^2$ to yield output $(\bq_1^{\opt},\bq_2^{\opt})$;
		\STATE $\bp_s^{(j+1)}=\big[Q_s^{(j)}\big]_{(:,j+1:d_s)}\bq_s^{\opt}$ for $s=1,2$;
		\ENDFOR
		\STATE $P_s=[\bp_s^{(1)},\ldots,\bp_s^{(k)}]$ for $s=1,2$;
		\STATE $P_s\leftarrow L_s^{-\T}P_s$ for $s=1,2$;
		\STATE compute SVD: $P_1^{\T}CP_2=U\Sigma V^{\T}$, and set $P_2\leftarrow P_2VU^{\T}$;
		\RETURN $(P_1,P_2)$.
	\end{algorithmic}
\end{algorithm}

\subsection{The complete algorithm}\label{subsec:SDAvTRS}

Our complete algorithm is outlined in Algorithm~\ref{alg:SAA} whose line 13 is to properly align $P_1$ and $P_2$ computed
up to line 12 in such a way that the first term in the objective \eqref{op:Uframework-2} is made increasing  while the last
two summands remain the same:
$$
P_1^{\T}CP_2=U\Sigma V^{\T}
\quad\Rightarrow\quad
P_1^{\T}C(P_2VU^{\T})=U\Sigma U^{\T},
$$
based on a result from matrix analysis \cite[Lemma 3]{zhwb:2020}.

\section{Experiments} \label{sec:experiments}

\subsection{Multiple feature data}
Multiple features (mfeat) dataset consists of features of handwritten numerals (`0'--`9') extracted from a collection of Dutch utility maps\footnote{https://archive.ics.uci.edu/ml/datasets/Multiple+Features}, in which 200 patterns per class (for a total of 2,000 patterns) have been digitized in binary images. These digits are represented in terms of the following six feature sets: 216-dim profile correlations (fac), 76-dim Fourier coefficients of the character shapes (fou),  64-dim Karhunen-Love coefficients (kar), 6-dim morphological features (mor),
240-dim pixel averages in $2 \times 3$ windows (pix), and 47-dim Zernike moments (zer). As a result, there are 6 views.

\begin{table}
	\setlength{\tabcolsep}{1pt}
	\caption{Average accuracy with standard deviation by five unsupervised semi-paired learning methods evaluated on data mfeat over $10$ randomly drawn training and testing splits. The best results are in bold.} \label{tab:mfeat-unsup}
	\centering
	\vspace{-0.1in}
	\begin{tabular}{@{}c|c|c|c|c|c@{}}
		\hline
		v1-v2& CCA& SemiCCA& USemiCCA&\scriptsize SemiCCALR&\scriptsize USemiCCALR\\ 
		\hline
		fac-fou & 57.84 $\pm$ 3.47 & 91.99 $\pm$ 1.18 & 93.54 $\pm$ 1.04 & \textbf{94.64 $\pm$ 1.03} & \textbf{94.64 $\pm$ 1.03}\\
		fac-kar & 78.25 $\pm$ 2.90 & 82.85 $\pm$ 2.79 & 88.40 $\pm$ 1.95 & 89.28 $\pm$ 1.82 & \textbf{89.44 $\pm$ 1.36}\\
		fac-mor & 44.12 $\pm$ 4.48 & 82.35 $\pm$ 0.95 & 90.40 $\pm$ 1.09 & 92.29 $\pm$ 1.63 & \textbf{92.38 $\pm$ 1.55}\\
		fac-pix & 75.74 $\pm$ 1.78 & 86.02 $\pm$ 2.43 & 88.51 $\pm$ 1.60 & \textbf{90.37 $\pm$ 1.67} & \textbf{90.37 $\pm$ 1.69}\\
		fac-zer & 56.09 $\pm$ 3.03 & 79.60 $\pm$ 3.36 & 85.96 $\pm$ 1.70 & 90.13 $\pm$ 1.22 & \textbf{90.15 $\pm$ 1.18}\\
		fou-kar & 61.64 $\pm$ 4.02 & 92.53 $\pm$ 0.78 & \textbf{94.06 $\pm$ 1.04}& 93.46 $\pm$ 1.17 & 93.46 $\pm$ 1.17\\
		fou-mor & 68.43 $\pm$ 3.57 & 80.11 $\pm$ 1.71 &\textbf{80.92 $\pm$ 1.70} & 79.70 $\pm$ 1.57 & 79.86 $\pm$ 1.47\\
		fou-pix & 65.31 $\pm$ 3.13 & 90.91 $\pm$ 1.54 & 91.89 $\pm$ 1.43 & \textbf{93.65 $\pm$ 1.30}& \textbf{93.65 $\pm$ 1.30}\\
		fou-zer & 63.36 $\pm$ 3.75 & 78.15 $\pm$ 0.52 & 81.93 $\pm$ 1.41 & 83.07 $\pm$ 1.32 & \textbf{83.25 $\pm$ 1.22}\\
		kar-mor & 72.31 $\pm$ 2.30 & 89.18 $\pm$ 2.14 & \textbf{92.49 $\pm$ 1.31} & 92.11 $\pm$ 0.85 & 92.01 $\pm$ 1.26\\
		kar-pix & 84.24 $\pm$ 1.58 & 85.27 $\pm$ 1.46 & \textbf{88.19 $\pm$ 1.59} & 87.97 $\pm$ 1.81 & 87.97 $\pm$ 1.81\\
		kar-zer & 63.08 $\pm$ 3.23 & 87.07 $\pm$ 1.45 & 88.66 $\pm$ 1.34 & 89.84 $\pm$ 0.93 & \textbf{89.85 $\pm$ 1.06}\\
		mor-pix & 48.03 $\pm$ 3.71 & 87.03 $\pm$ 1.24 & 87.03 $\pm$ 1.22 & 91.60 $\pm$ 1.08 & \textbf{91.68 $\pm$ 0.98}\\
		mor-zer & 70.17 $\pm$ 3.03 & 72.19 $\pm$ 1.98 & 73.77 $\pm$ 1.49 & 77.12 $\pm$ 1.54 & \textbf{77.49 $\pm$ 1.59}\\
		pix-zer & 56.26 $\pm$ 2.73 & 84.44 $\pm$ 1.83 & 86.01 $\pm$ 2.31 & \textbf{90.11 $\pm$ 1.29} & \textbf{90.11 $\pm$ 1.29}\\
		\hline
	\end{tabular}\vspace{-0.2in}
\end{table}

Following the semi-paired data generation process in \cite{chen2012unified}, we perform experiments on datasets of any pair of the $6$ views, a total of $15$ two-view datasets.
For each two-view dataset, we randomly select $50\%$ of the data for training  and the rest for testing.
Among the training data, $20\%$ data are randomly selected as paired  and the rest as unpaired.
For semi-supervised learning, we randomly sample $10\%$ of the training data as labeled  and the rest as unlabeled.
The nearest neighbor classifier (NNC) is used to evaluate the  projection matrices learned by  each compared method. 
The concatenation of projected points of both two views are evaluated.
NNC is trained on the training data, and then it is assessed on the testing data.
For unsupervised learning, all  training data are assumed without any label.
We repeat each experiment $10$ times, following the above semi-paired data generation process, and then report
its average classification accuracy with  standard deviation.

\begin{table*}[t]
	\setlength{\tabcolsep}{5pt}
	\caption{Average accuracy with standard deviation by six methods on 15 datasets derived from mfeat over 10 randomly drawn training and testing splits with three different scatter constructions. The best results are in bold.} \label{tab:sssl-mfeat}
	\centering
	\vspace{-0.1in}
	\begin{tiny}
	\begin{tabular}{c|c|cccccc}
		\hline
		graph construction	&view 1 - view 2& SCCA& USCCA& S$^2$GCA& US$^2$GCA& S$^2$CCALR& US$^2$CCALR\\ \hline
		\multirow{15}{*}{LDA}& fac-fou & 68.10 $\pm$ 2.49 & 92.57 $\pm$ 1.47 & 91.17 $\pm$ 1.59 & 93.58 $\pm$ 0.98 & 94.26 $\pm$ 1.01 & \textbf{94.37 $\pm$ 0.92}\\
		& fac-kar & 84.09 $\pm$ 2.30 & 90.20 $\pm$ 2.07 & 88.01 $\pm$ 2.45 & 91.47 $\pm$ 1.71 & 90.35 $\pm$ 2.55 & \textbf{92.60 $\pm$ 1.30}\\
		& fac-mor & 65.69 $\pm$ 4.52 & 93.75 $\pm$ 1.16 & 86.82 $\pm$ 1.46 & 94.33 $\pm$ 0.84 & \textbf{95.01 $\pm$ 1.04} & 93.91 $\pm$ 1.08\\
		& fac-pix & 30.70 $\pm$ 2.56 & 90.40 $\pm$ 1.05 & 88.63 $\pm$ 1.54 & 90.94 $\pm$ 1.74 & 89.67 $\pm$ 1.40 & \textbf{91.65 $\pm$ 1.37}\\
		& fac-zer & 68.58 $\pm$ 3.05 & 91.33 $\pm$ 1.63 & 85.77 $\pm$ 2.48 & \textbf{93.22 $\pm$ 1.56} & 89.51 $\pm$ 1.59 & 92.59 $\pm$ 1.36\\
		& fou-kar & 74.87 $\pm$ 2.87 & 89.74 $\pm$ 2.41 & 92.00 $\pm$ 1.36 & \textbf{94.18 $\pm$ 0.90} & 92.35 $\pm$ 0.85 & 93.55 $\pm$ 0.79\\
		& fou-mor & 71.03 $\pm$ 1.76 & 77.61 $\pm$ 2.09 & 78.81 $\pm$ 2.01 & \textbf{81.03 $\pm$ 1.17} & 80.61 $\pm$ 1.55 & 80.77 $\pm$ 1.27\\
		& fou-pix & 24.76 $\pm$ 3.17 & 88.54 $\pm$ 2.56 & 93.33 $\pm$ 1.12 & \textbf{94.53 $\pm$ 0.73} & 92.77 $\pm$ 0.73 & 93.58 $\pm$ 1.01\\
		& fou-zer & 71.96 $\pm$ 2.64 & 79.38 $\pm$ 0.93 & 78.95 $\pm$ 0.85 & 81.34 $\pm$ 1.01 & \textbf{81.65 $\pm$ 1.06} & 81.32 $\pm$ 0.68\\
		& kar-mor & 83.40 $\pm$ 2.08 & 90.17 $\pm$ 1.81 & 89.70 $\pm$ 1.94 & 93.12 $\pm$ 0.75 & \textbf{95.07 $\pm$ 0.52} & 94.15 $\pm$ 1.02\\
		& kar-pix & 34.32 $\pm$ 2.83 & 82.30 $\pm$ 2.34 & 88.69 $\pm$ 1.68 & \textbf{90.96 $\pm$ 1.51} & 90.70 $\pm$ 1.48 & 90.84 $\pm$ 1.43\\
		& kar-zer & 78.29 $\pm$ 2.18 & 85.24 $\pm$ 2.83 & 85.74 $\pm$ 1.45 & 90.30 $\pm$ 1.52 & 89.46 $\pm$ 1.32 & \textbf{91.14 $\pm$ 0.94}\\
		& mor-pix & 42.97 $\pm$ 3.40 & 91.82 $\pm$ 1.38 & 90.91 $\pm$ 1.60 & 93.19 $\pm$ 1.09 & \textbf{94.74 $\pm$ 0.86} & 94.66 $\pm$ 1.45\\
		& mor-zer & 75.46 $\pm$ 1.86 & 78.04 $\pm$ 1.71 & 76.24 $\pm$ 1.00 & 78.56 $\pm$ 1.44 & 80.28 $\pm$ 1.66 & \textbf{80.29 $\pm$ 1.75}\\
		& pix-zer & 38.26 $\pm$ 3.37 & 88.65 $\pm$ 1.50 & 87.65 $\pm$ 1.56 & 89.97 $\pm$ 1.63 & 90.06 $\pm$ 1.74 & \textbf{91.36 $\pm$ 1.21}\\
		\hline
		\multirow{15}{*}{LFDA}& fac-fou & 67.22 $\pm$ 4.37 & 93.35 $\pm$ 1.20 & 90.36 $\pm$ 1.45 & 93.56 $\pm$ 0.90 & 93.19 $\pm$ 1.29 & \textbf{93.79 $\pm$ 1.02}\\
		& fac-kar & 84.67 $\pm$ 2.87 & 89.36 $\pm$ 1.65 & 90.09 $\pm$ 1.90 & 90.88 $\pm$ 1.49 & 91.18 $\pm$ 1.26 & \textbf{92.31 $\pm$ 1.15}\\
		& fac-mor & 66.00 $\pm$ 2.10 & 92.78 $\pm$ 0.93 & 90.77 $\pm$ 1.83 & \textbf{94.04 $\pm$ 1.22} & 92.65 $\pm$ 0.99 & 93.19 $\pm$ 1.20\\
		& fac-pix & 82.51 $\pm$ 2.03 & 88.91 $\pm$ 2.00 & 88.00 $\pm$ 2.12 & 91.45 $\pm$ 1.93 & 90.62 $\pm$ 1.55 & \textbf{91.46 $\pm$ 1.13}\\
		& fac-zer & 68.66 $\pm$ 3.00 & 91.23 $\pm$ 1.64 & 91.16 $\pm$ 1.96 & \textbf{91.59 $\pm$ 1.85} & 88.18 $\pm$ 1.58 & 91.11 $\pm$ 1.86\\
		& fou-kar & 77.49 $\pm$ 4.06 & 92.05 $\pm$ 0.93 & 92.34 $\pm$ 1.38 & \textbf{94.38 $\pm$ 0.73} & 91.32 $\pm$ 1.29 & 91.74 $\pm$ 1.03\\
		& fou-mor & 69.97 $\pm$ 2.46 & 79.16 $\pm$ 1.50 & 78.98 $\pm$ 1.34 & \textbf{80.75 $\pm$ 1.67} & 78.24 $\pm$ 1.67 & 78.63 $\pm$ 1.73\\
		& fou-pix & 59.33 $\pm$ 4.14 & 90.40 $\pm$ 1.65 & 93.03 $\pm$ 0.94 & \textbf{94.54 $\pm$ 0.90} & 91.88 $\pm$ 0.93 & 91.96 $\pm$ 1.01\\
		& fou-zer & 74.96 $\pm$ 1.84 & 80.57 $\pm$ 0.81 & 78.92 $\pm$ 1.24 & \textbf{81.50 $\pm$ 1.32} & 81.46 $\pm$ 0.78 & 81.43 $\pm$ 0.95\\
		& kar-mor & 82.69 $\pm$ 2.50 & 89.01 $\pm$ 2.51 & 89.77 $\pm$ 2.60 & \textbf{92.80 $\pm$ 0.88} & 91.02 $\pm$ 1.34 & 90.89 $\pm$ 0.92\\
		& kar-pix & 81.37 $\pm$ 2.20 & 84.26 $\pm$ 2.18 & 88.06 $\pm$ 1.45 & \textbf{91.25 $\pm$ 1.40} & 90.06 $\pm$ 1.38 & 90.33 $\pm$ 1.44\\
		& kar-zer & 79.86 $\pm$ 3.52 & 86.44 $\pm$ 2.42 & 85.18 $\pm$ 1.45 & \textbf{90.09 $\pm$ 1.00} & 88.13 $\pm$ 1.78 & 88.68 $\pm$ 1.68\\
		& mor-pix & 52.88 $\pm$ 3.52 & 87.91 $\pm$ 1.46 & 90.39 $\pm$ 1.11 & \textbf{93.55 $\pm$ 1.50} & 90.15 $\pm$ 1.49 & 90.90 $\pm$ 1.33\\
		& mor-zer & 73.32 $\pm$ 1.48 & 75.38 $\pm$ 1.47 & 75.13 $\pm$ 2.35 & \textbf{77.37 $\pm$ 1.60} & 76.89 $\pm$ 1.93 & 76.49 $\pm$ 1.69\\
		& pix-zer & 55.06 $\pm$ 2.17 & 86.34 $\pm$ 1.60 & 86.35 $\pm$ 1.72 & \textbf{89.75 $\pm$ 1.73} & 88.68 $\pm$ 1.67 & 88.93 $\pm$ 1.57\\
		\hline
		\multirow{15}{*}{MFA}& fac-fou & 57.22 $\pm$ 1.87 & 89.04 $\pm$ 1.98 & 90.21 $\pm$ 1.17 & \textbf{93.70 $\pm$ 0.93} & 92.99 $\pm$ 1.21 & 93.65 $\pm$ 0.87\\
		& fac-kar & 78.14 $\pm$ 1.44 & 88.27 $\pm$ 3.13 & 81.39 $\pm$ 2.95 & 91.63 $\pm$ 1.67 & 89.46 $\pm$ 1.78 & \textbf{93.21 $\pm$ 0.91}\\
		& fac-mor & 64.00 $\pm$ 2.22 & 93.39 $\pm$ 1.00 & 83.68 $\pm$ 1.70 & \textbf{94.55 $\pm$ 0.71} & 91.83 $\pm$ 1.04 & 93.86 $\pm$ 1.37\\
		& fac-pix & 17.83 $\pm$ 2.52 & 90.55 $\pm$ 0.71 & 83.40 $\pm$ 2.22 & 91.18 $\pm$ 1.93 & 89.87 $\pm$ 1.36 & \textbf{92.57 $\pm$ 0.95}\\
		& fac-zer & 57.01 $\pm$ 3.15 & 91.37 $\pm$ 1.67 & 80.52 $\pm$ 4.19 & \textbf{92.72 $\pm$ 1.75} & 87.62 $\pm$ 1.44 & 91.97 $\pm$ 1.73\\
		& fou-kar & 59.35 $\pm$ 3.09 & 83.41 $\pm$ 3.49 & 92.13 $\pm$ 0.92 & \textbf{94.19 $\pm$ 0.98} & 91.68 $\pm$ 1.35 & 92.40 $\pm$ 1.35\\
		& fou-mor & 68.60 $\pm$ 2.16 & 76.00 $\pm$ 2.46 & 78.97 $\pm$ 2.07 & \textbf{81.02 $\pm$ 1.37} & 79.46 $\pm$ 1.67 & 79.46 $\pm$ 1.64\\
		& fou-pix & 19.81 $\pm$ 1.77 & 84.22 $\pm$ 3.32 & 89.34 $\pm$ 1.56 & \textbf{94.54 $\pm$ 0.75} & 91.90 $\pm$ 1.25 & 92.79 $\pm$ 1.08\\
		& fou-zer & 64.45 $\pm$ 3.96 & 76.22 $\pm$ 1.93 & 77.24 $\pm$ 1.55 & \textbf{81.75 $\pm$ 0.99} & 81.05 $\pm$ 0.74 & 81.25 $\pm$ 0.82\\
		& kar-mor & 72.25 $\pm$ 1.84 & 87.86 $\pm$ 2.22 & 89.96 $\pm$ 2.21 & \textbf{93.14 $\pm$ 1.15} & 92.39 $\pm$ 1.70 & 92.36 $\pm$ 1.25\\
		& kar-pix & 23.17 $\pm$ 2.91 & 82.51 $\pm$ 2.56 & 82.02 $\pm$ 3.09 & \textbf{91.04 $\pm$ 1.40} & 88.38 $\pm$ 1.83 & 90.57 $\pm$ 1.21\\
		& kar-zer & 62.52 $\pm$ 3.60 & 83.82 $\pm$ 2.42 & 85.00 $\pm$ 1.59 & 90.23 $\pm$ 1.48 & 87.98 $\pm$ 1.70 & \textbf{90.29 $\pm$ 1.12}\\
		& mor-pix & 48.37 $\pm$ 6.81 & 91.44 $\pm$ 0.96 & 85.82 $\pm$ 2.68 & \textbf{93.21 $\pm$ 1.09} & 91.37 $\pm$ 1.79 & 93.08 $\pm$ 1.21\\
		& mor-zer & 69.35 $\pm$ 2.61 & 76.57 $\pm$ 2.05 & 74.28 $\pm$ 2.29 & 77.30 $\pm$ 1.55 & 76.96 $\pm$ 2.65 & \textbf{77.97 $\pm$ 2.26}\\
		& pix-zer & 27.19 $\pm$ 4.49 & 86.54 $\pm$ 1.54 & 81.51 $\pm$ 1.94 & 90.25 $\pm$ 1.74 & 88.45 $\pm$ 1.90 & \textbf{90.54 $\pm$ 1.03}\\
		\hline
	\end{tabular}
	\end{tiny}
	\vspace{-0.2in}
\end{table*}

\begin{figure*}
	\begin{tabular}{@{}c@{}c@{}c@{}c@{}c@{}}
		\includegraphics[width=0.2\textwidth]{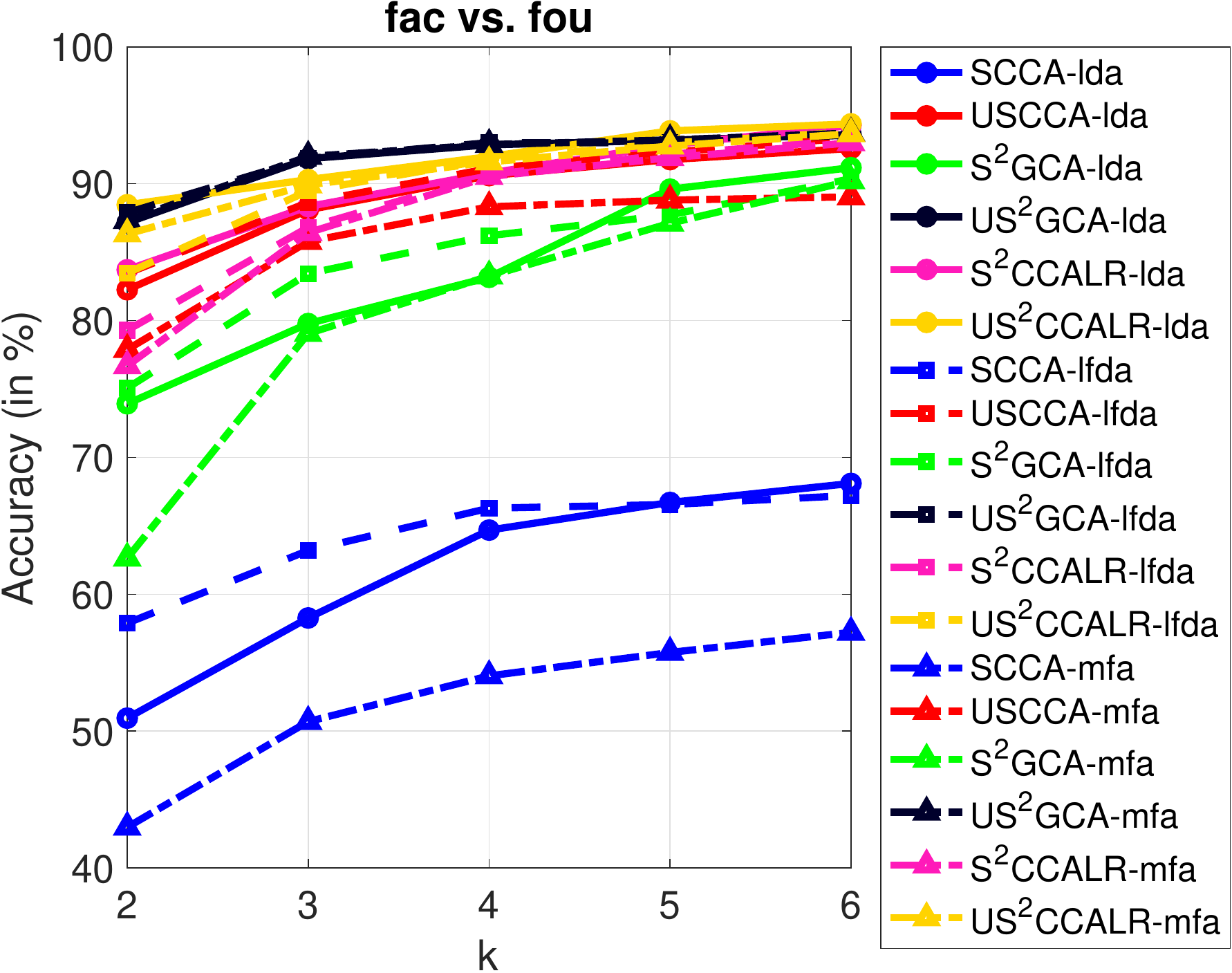}&
		\includegraphics[width=0.2\textwidth]{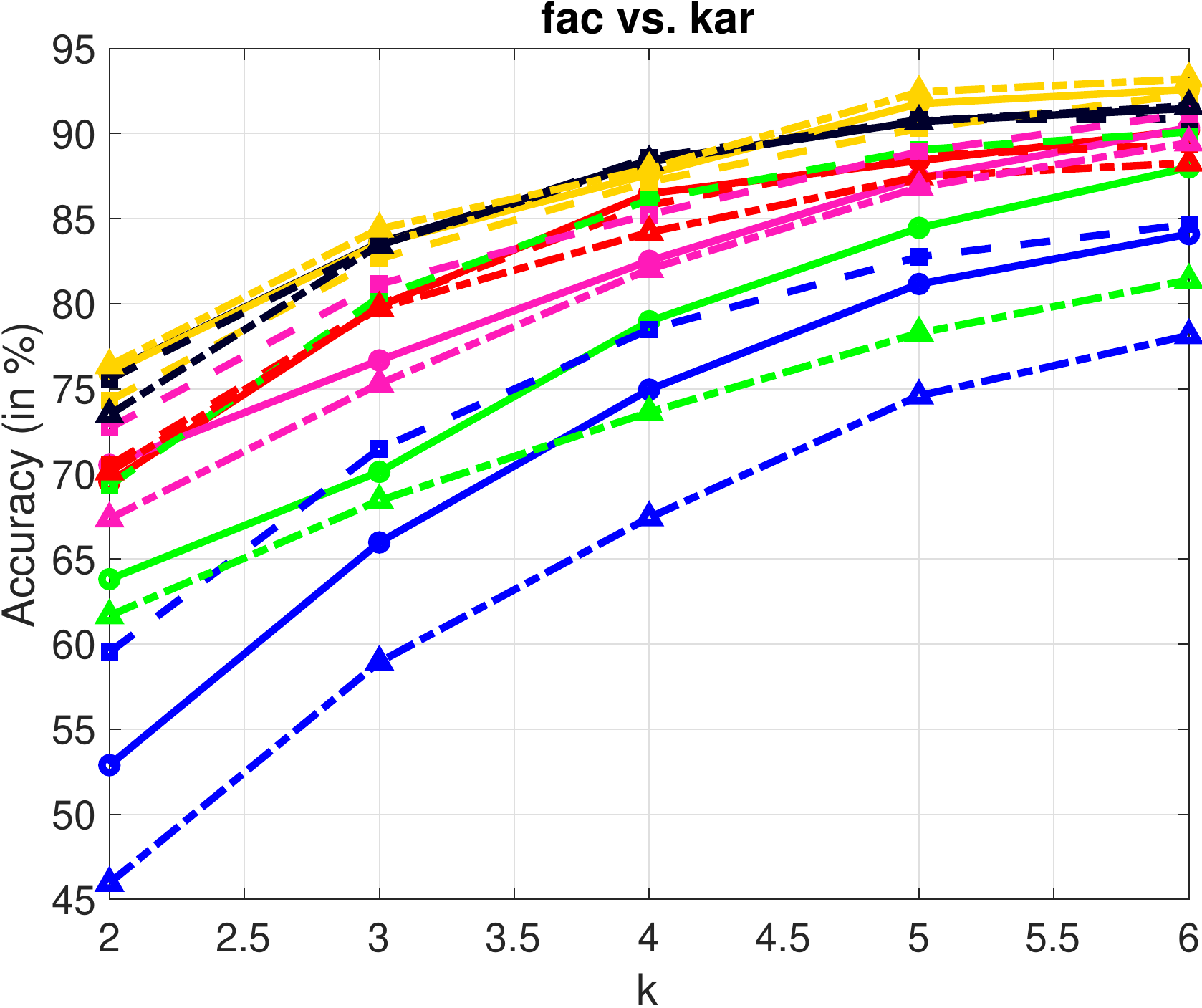}&
		\includegraphics[width=0.2\textwidth]{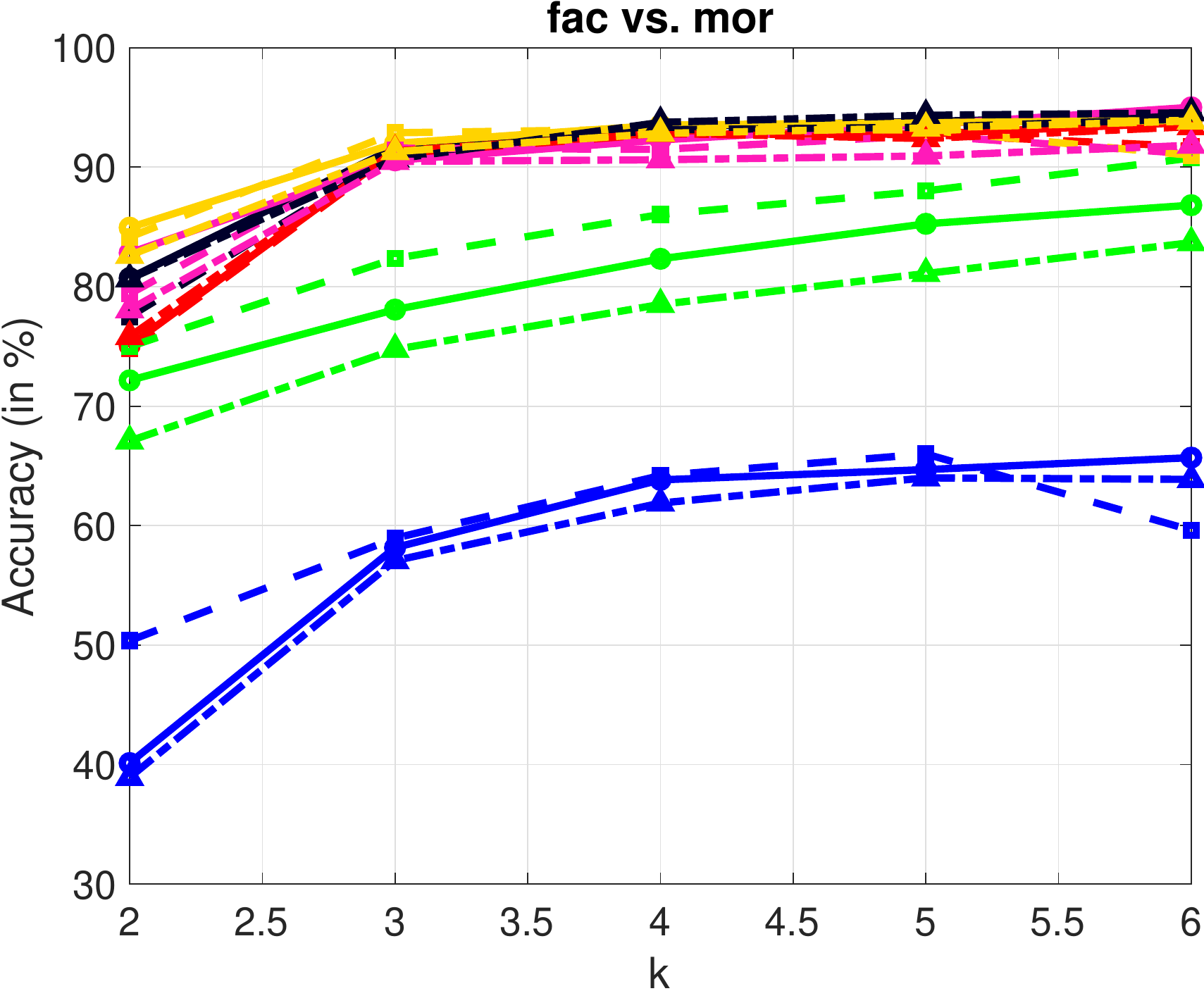}&
		\includegraphics[width=0.2\textwidth]{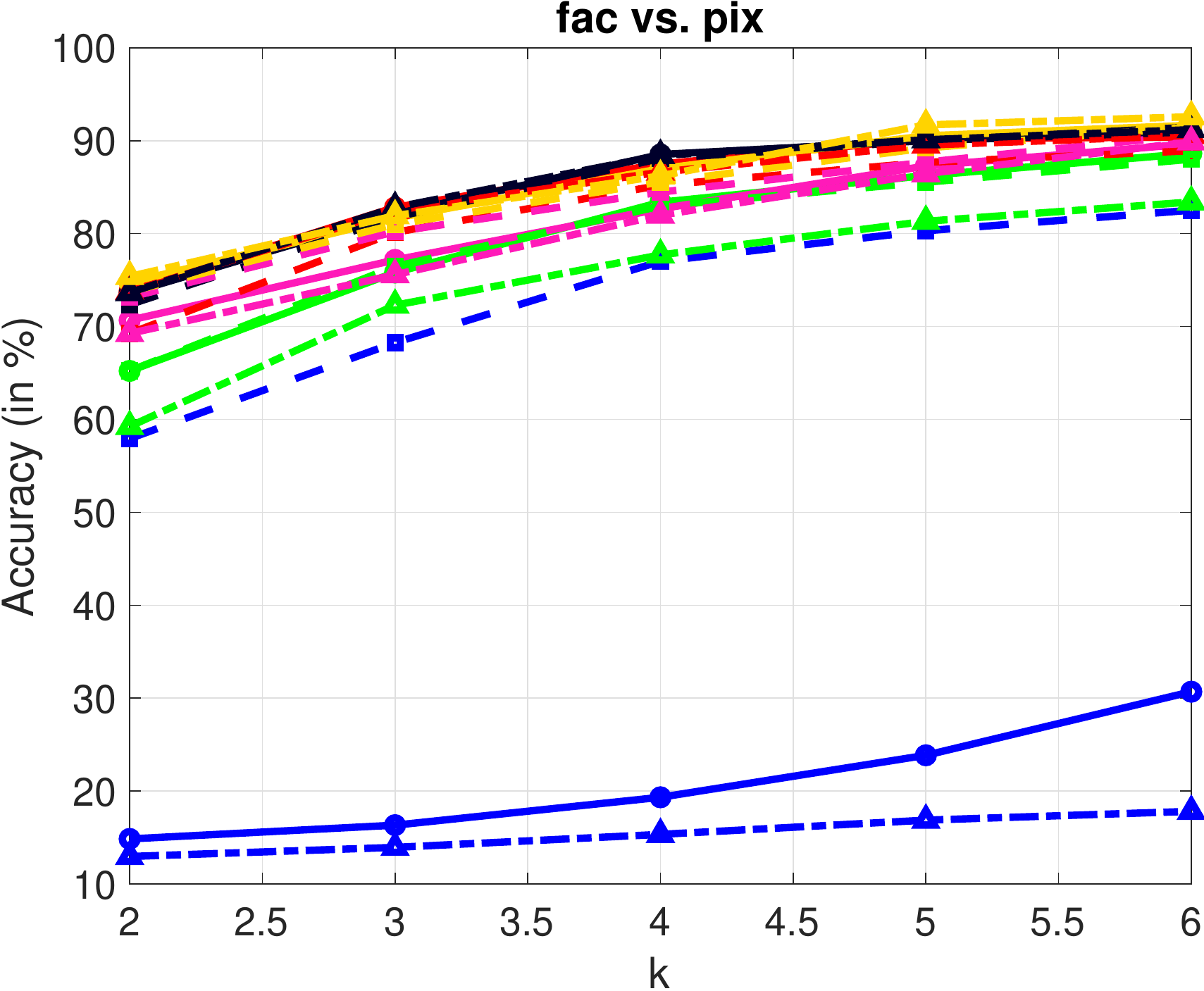} &
		\includegraphics[width=0.2\textwidth]{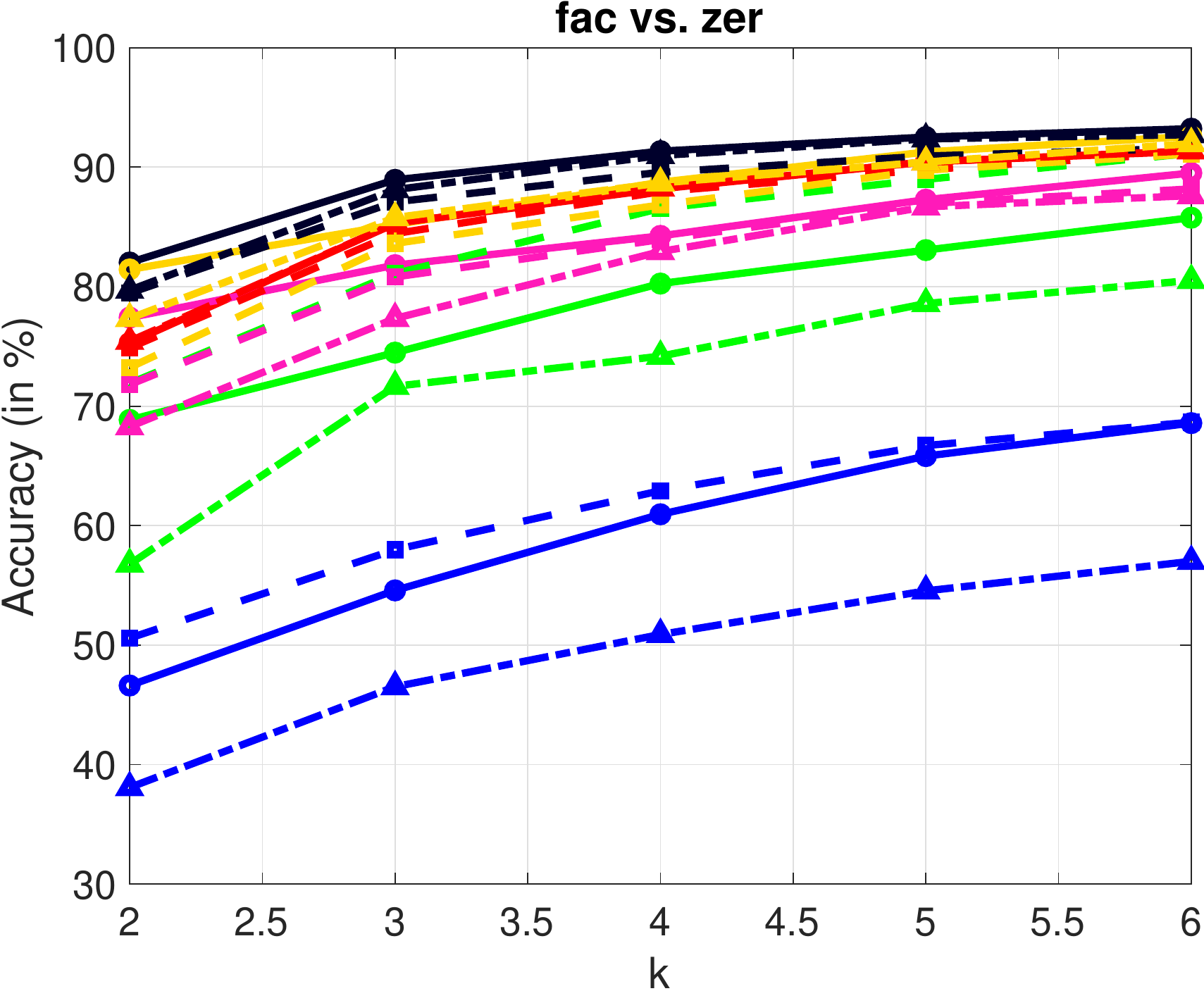}\\
		\includegraphics[width=0.2\textwidth]{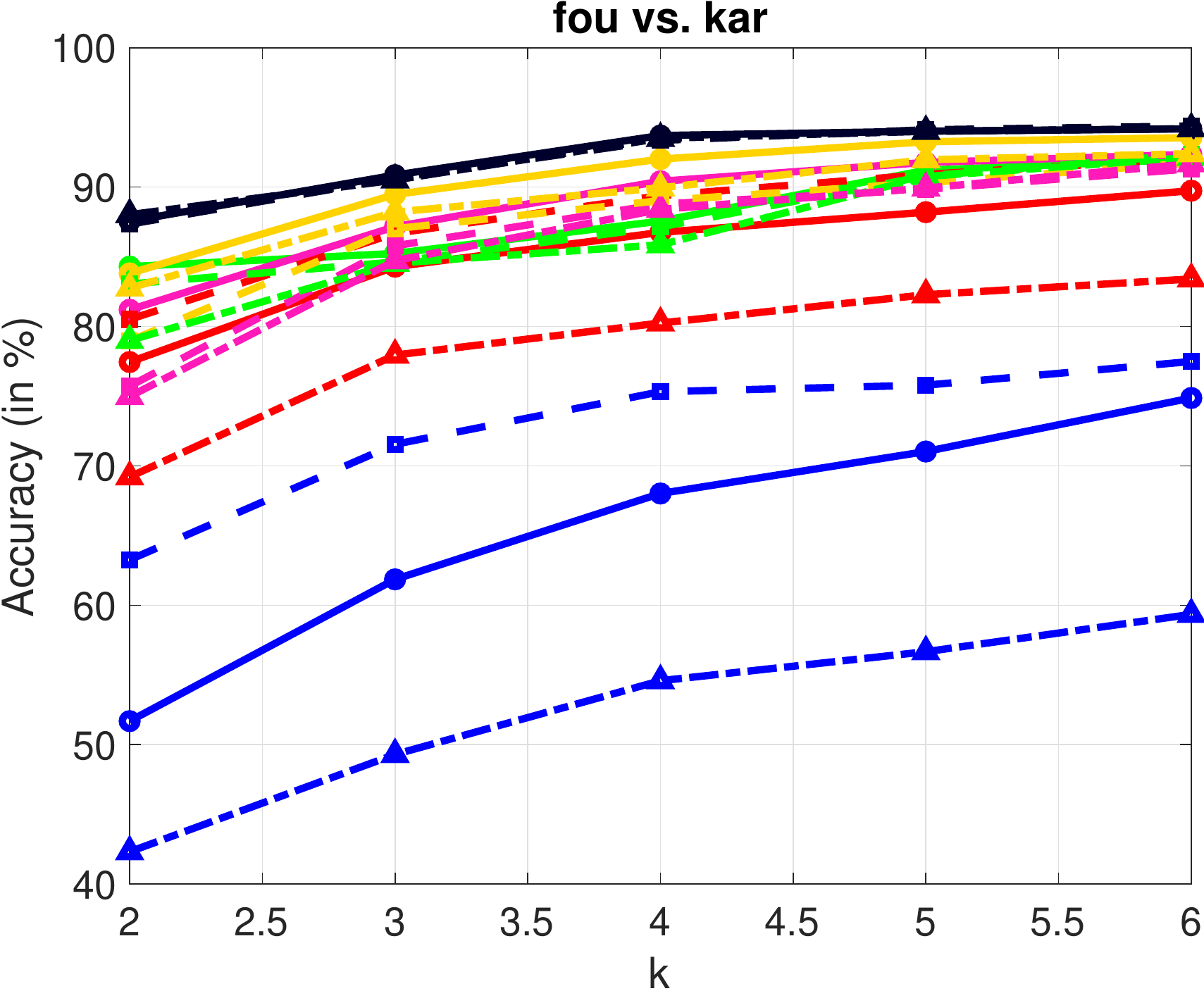}&
		\includegraphics[width=0.2\textwidth]{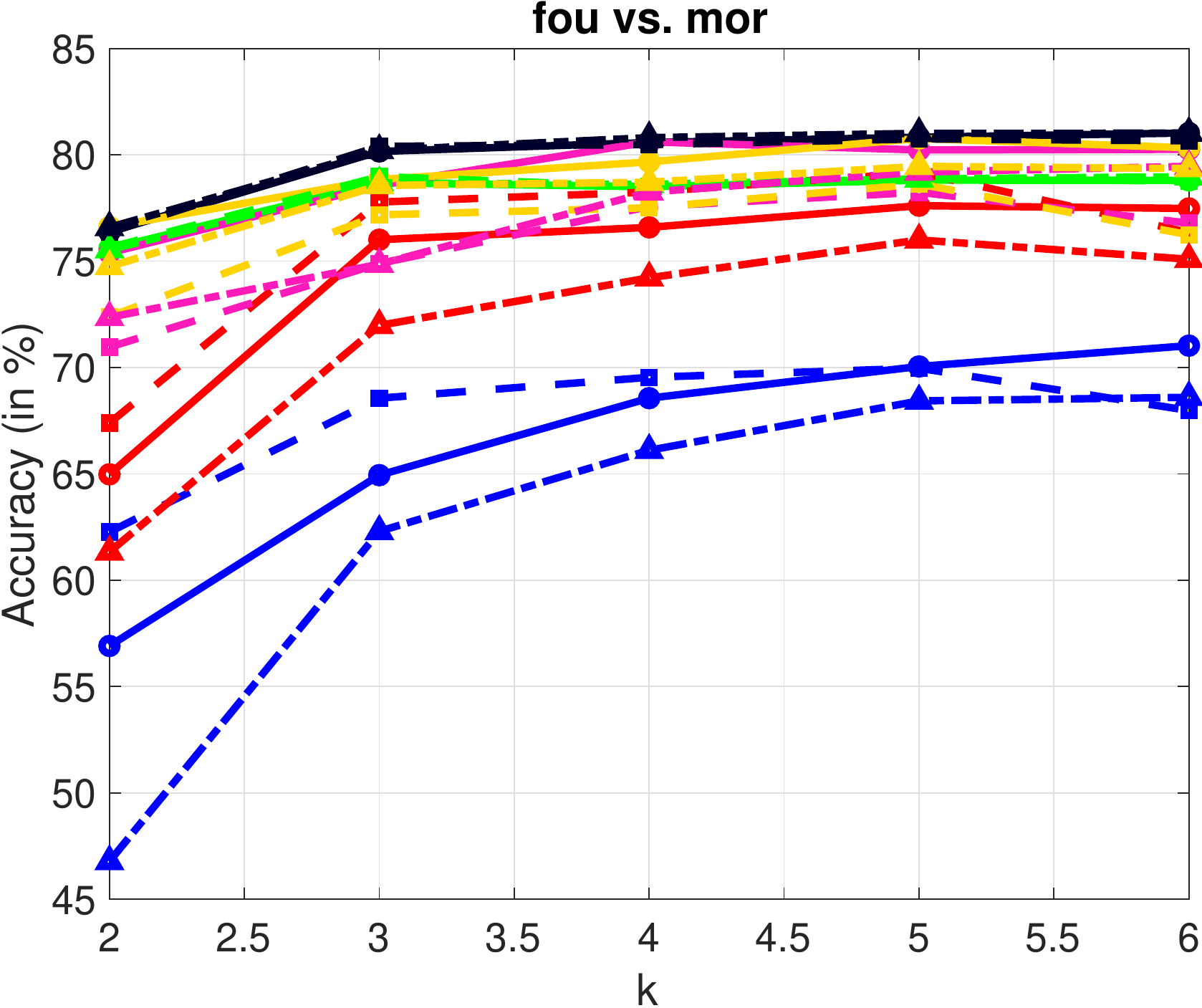}&
		\includegraphics[width=0.2\textwidth]{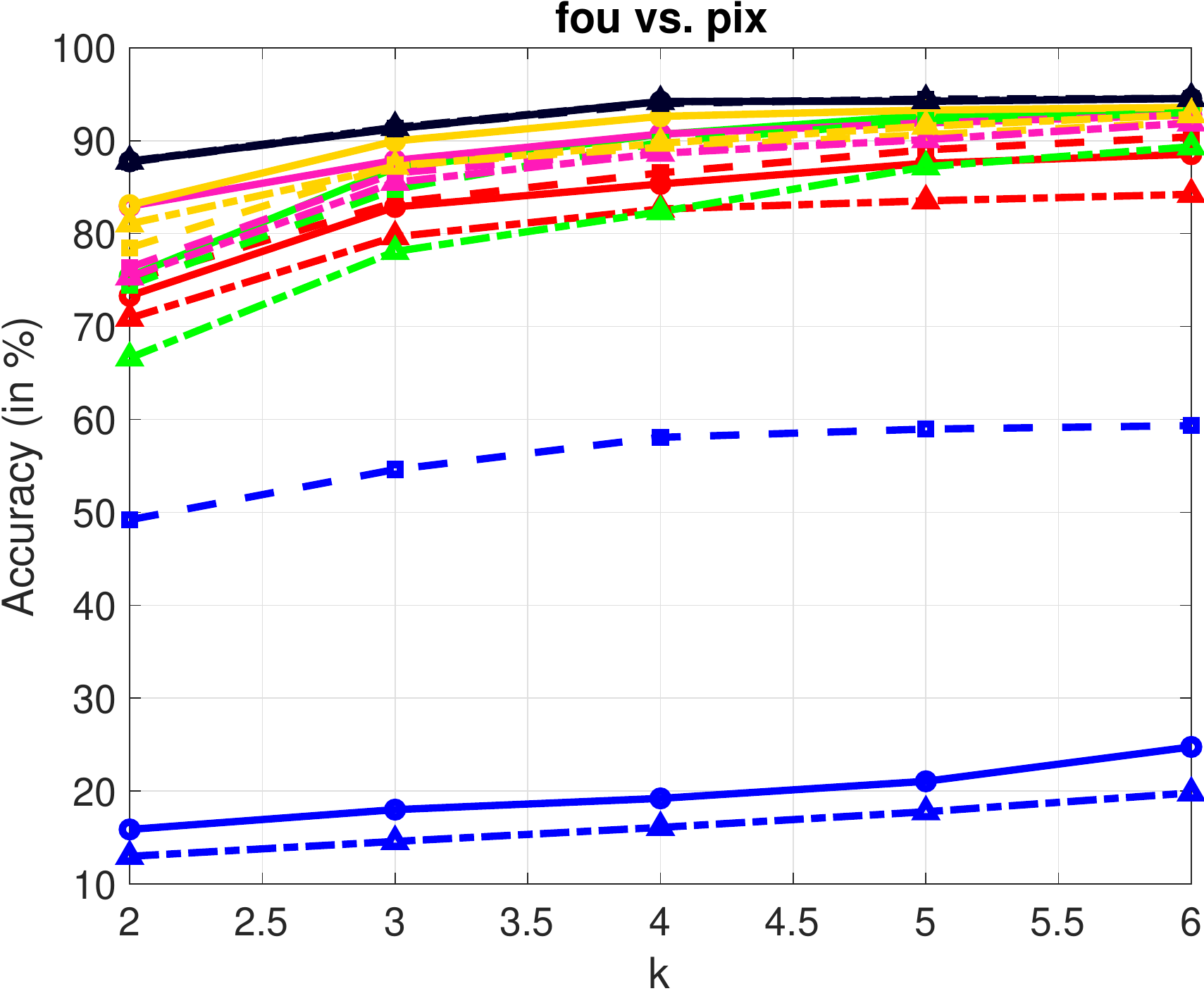}&
		\includegraphics[width=0.2\textwidth]{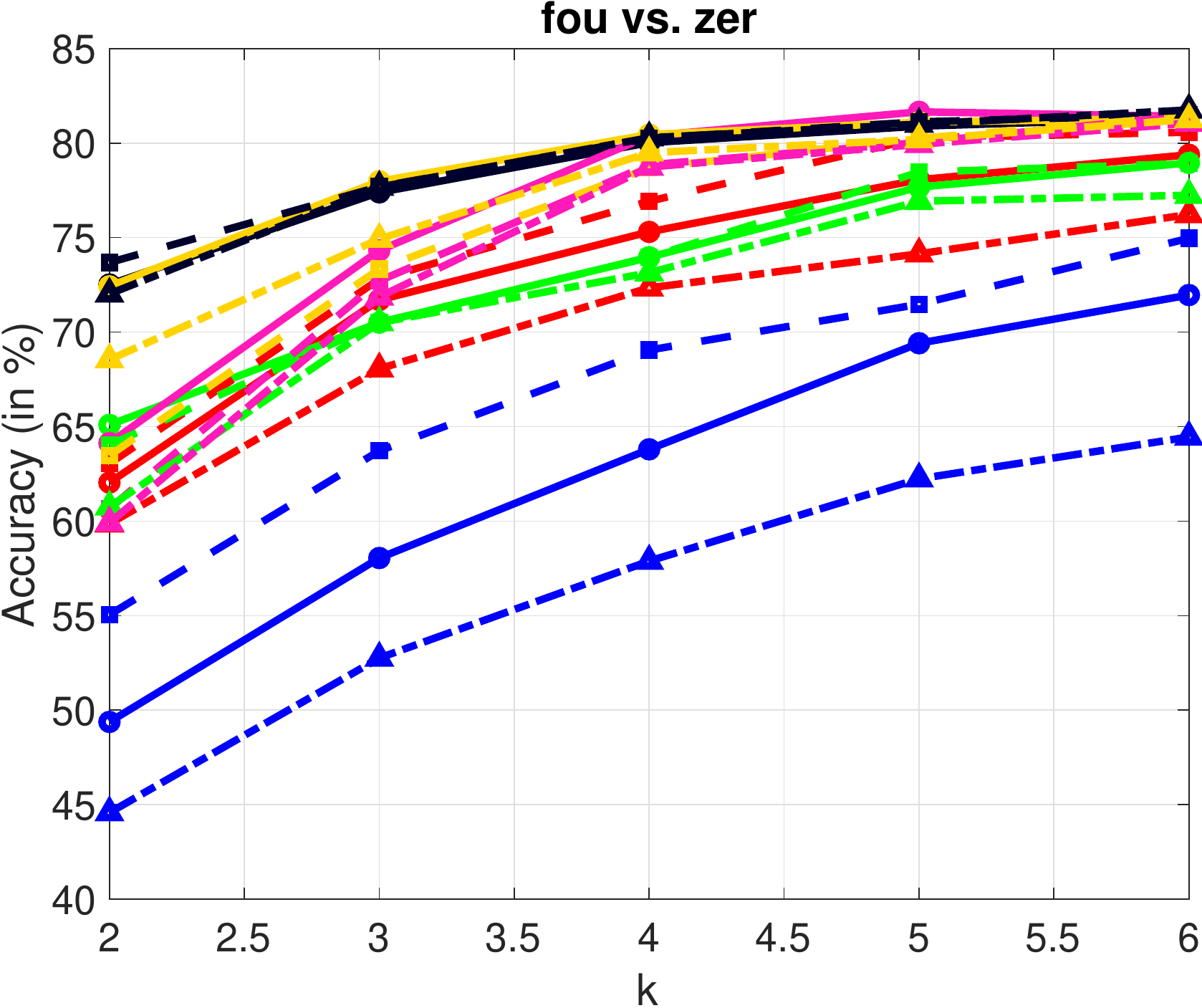}&
		\includegraphics[width=0.2\textwidth]{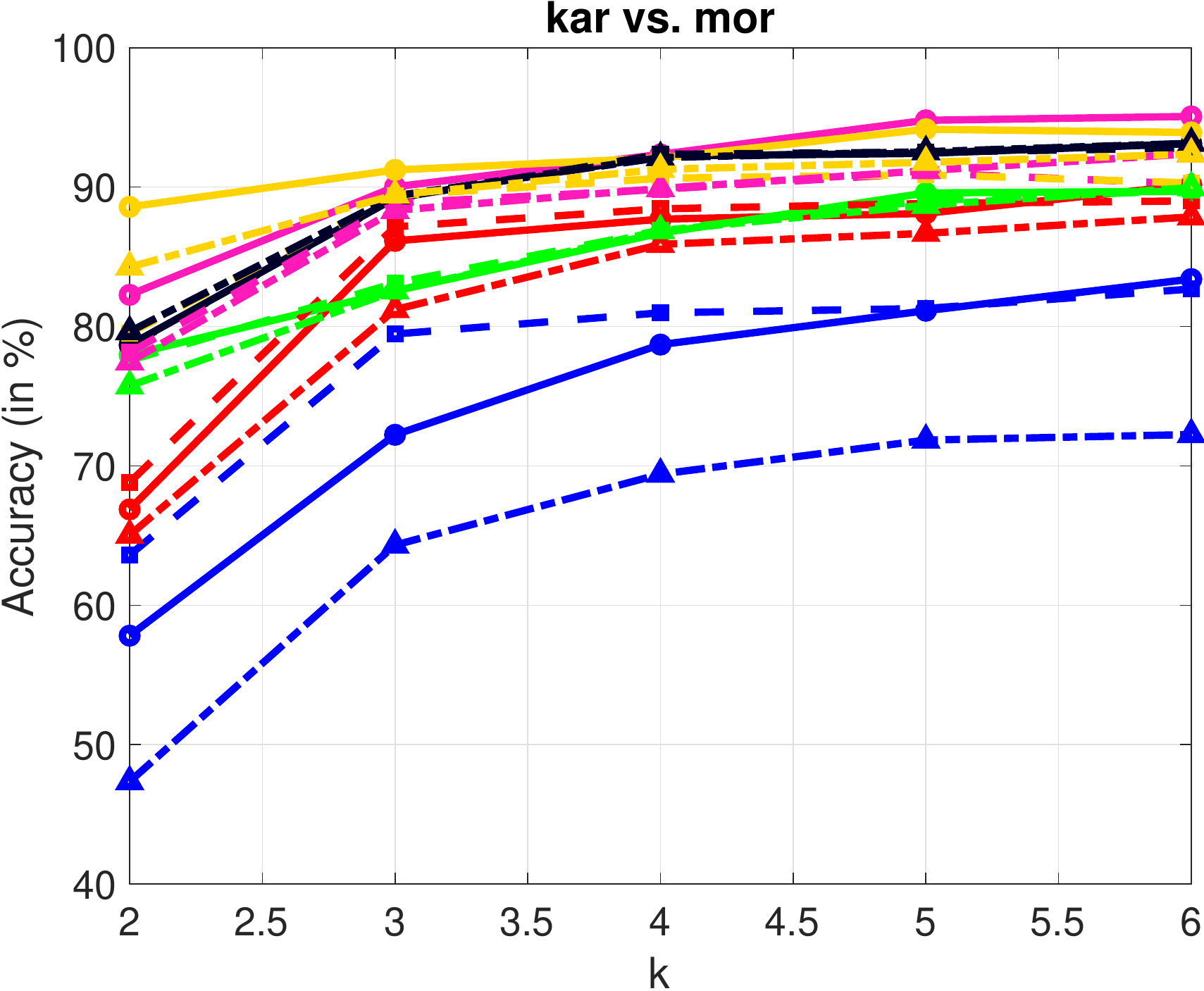}\\
		\includegraphics[width=0.2\textwidth]{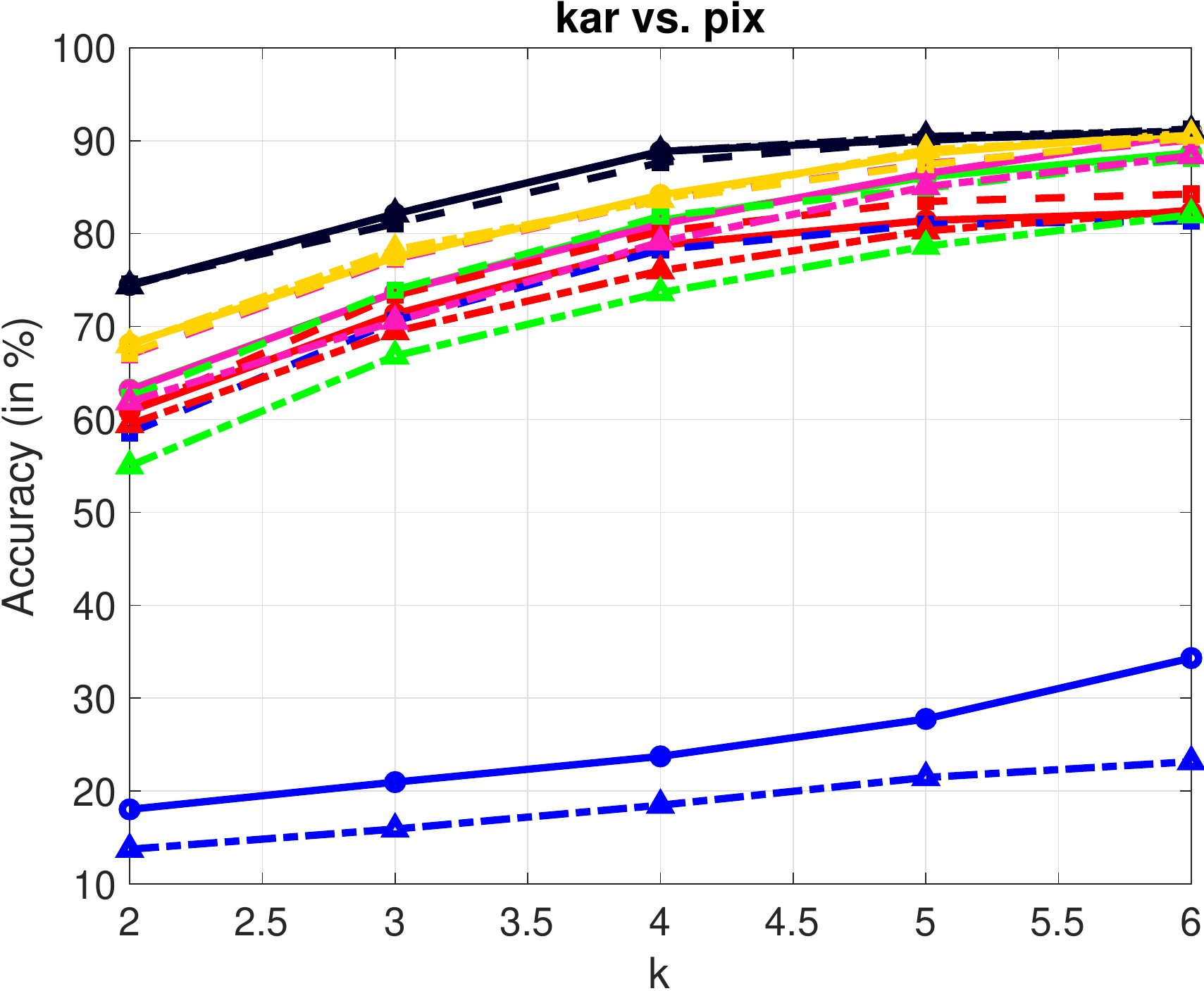}&
		\includegraphics[width=0.2\textwidth]{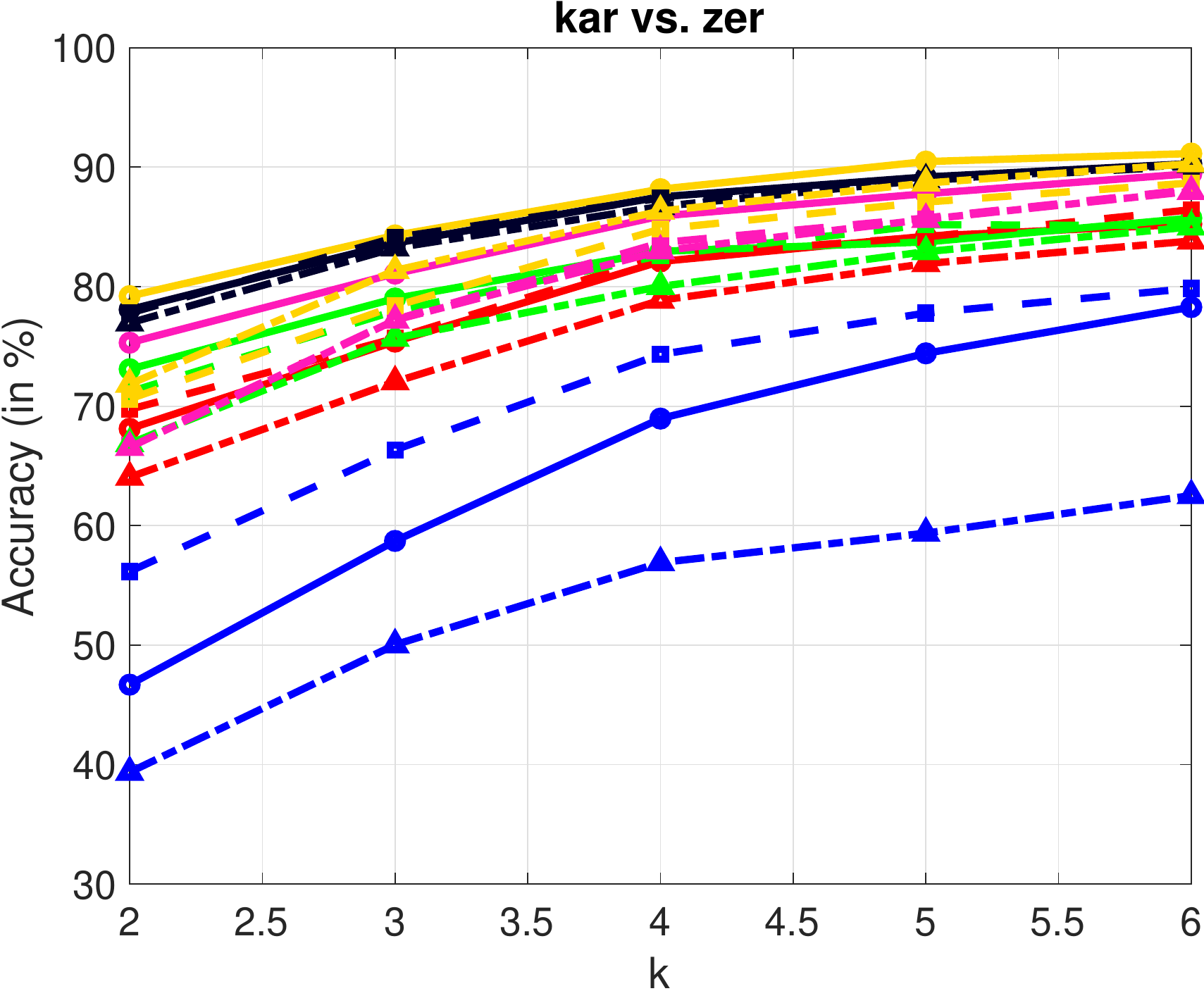}&
		\includegraphics[width=0.2\textwidth]{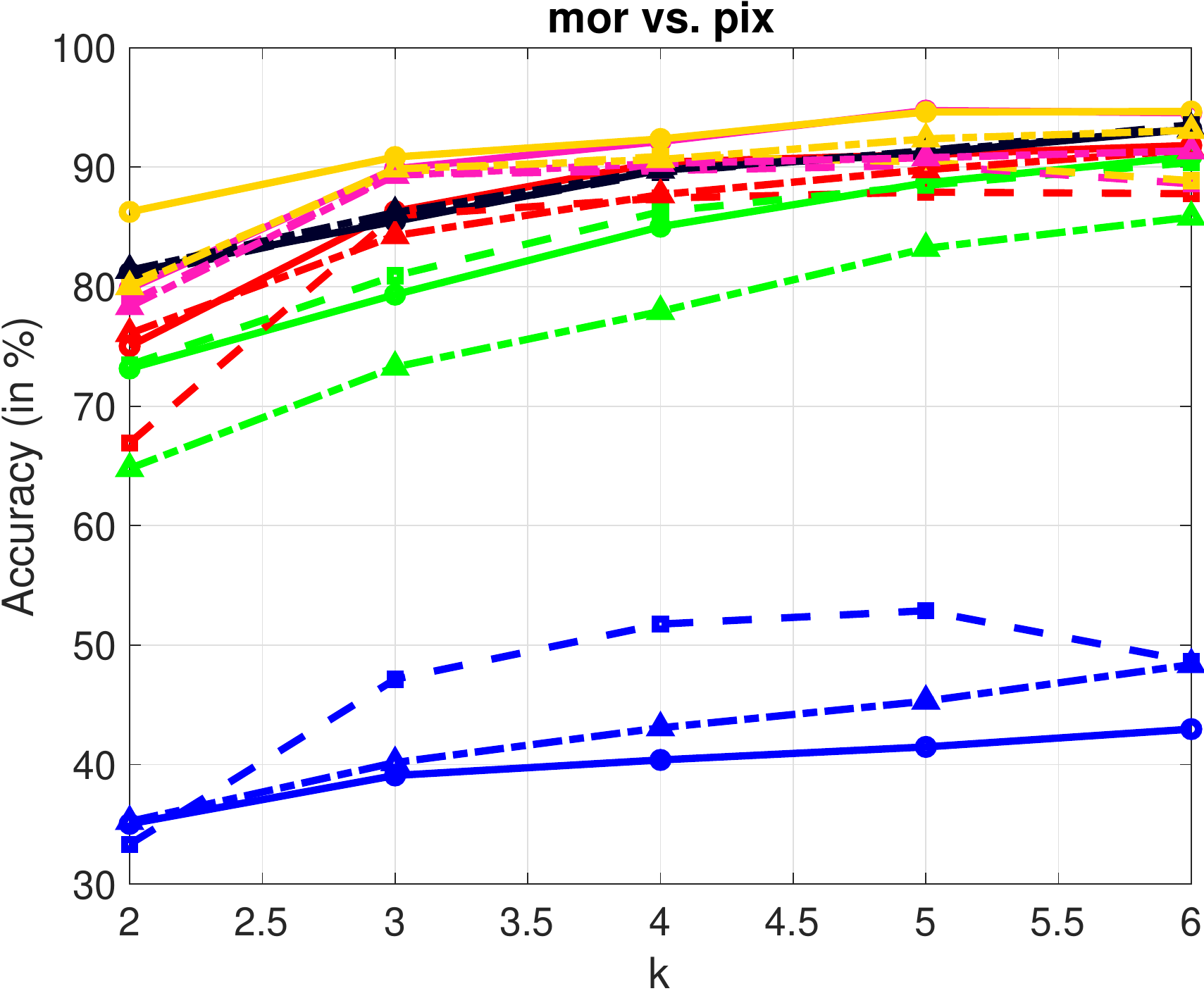}&
		\includegraphics[width=0.2\textwidth]{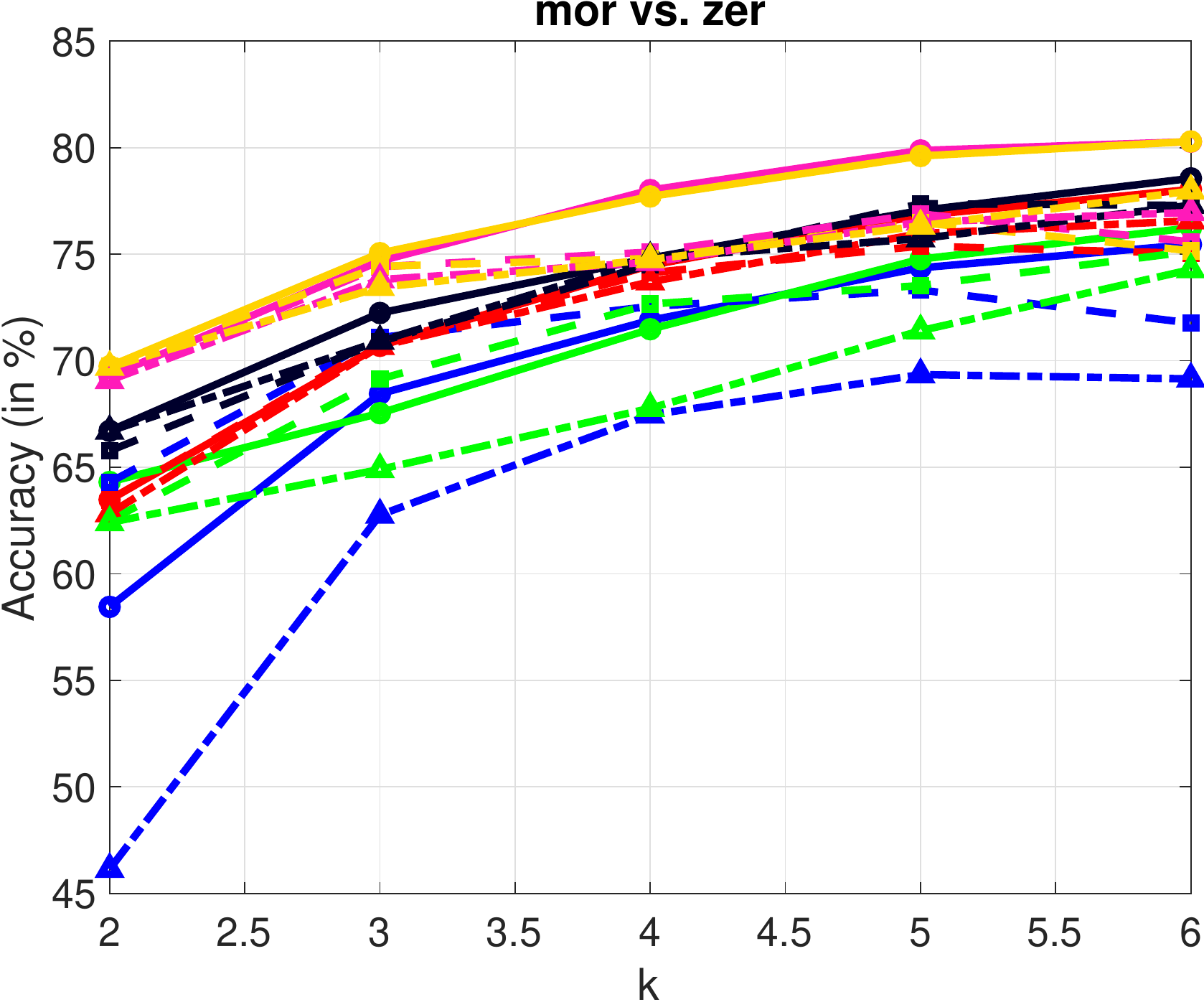}&
		\includegraphics[width=0.2\textwidth]{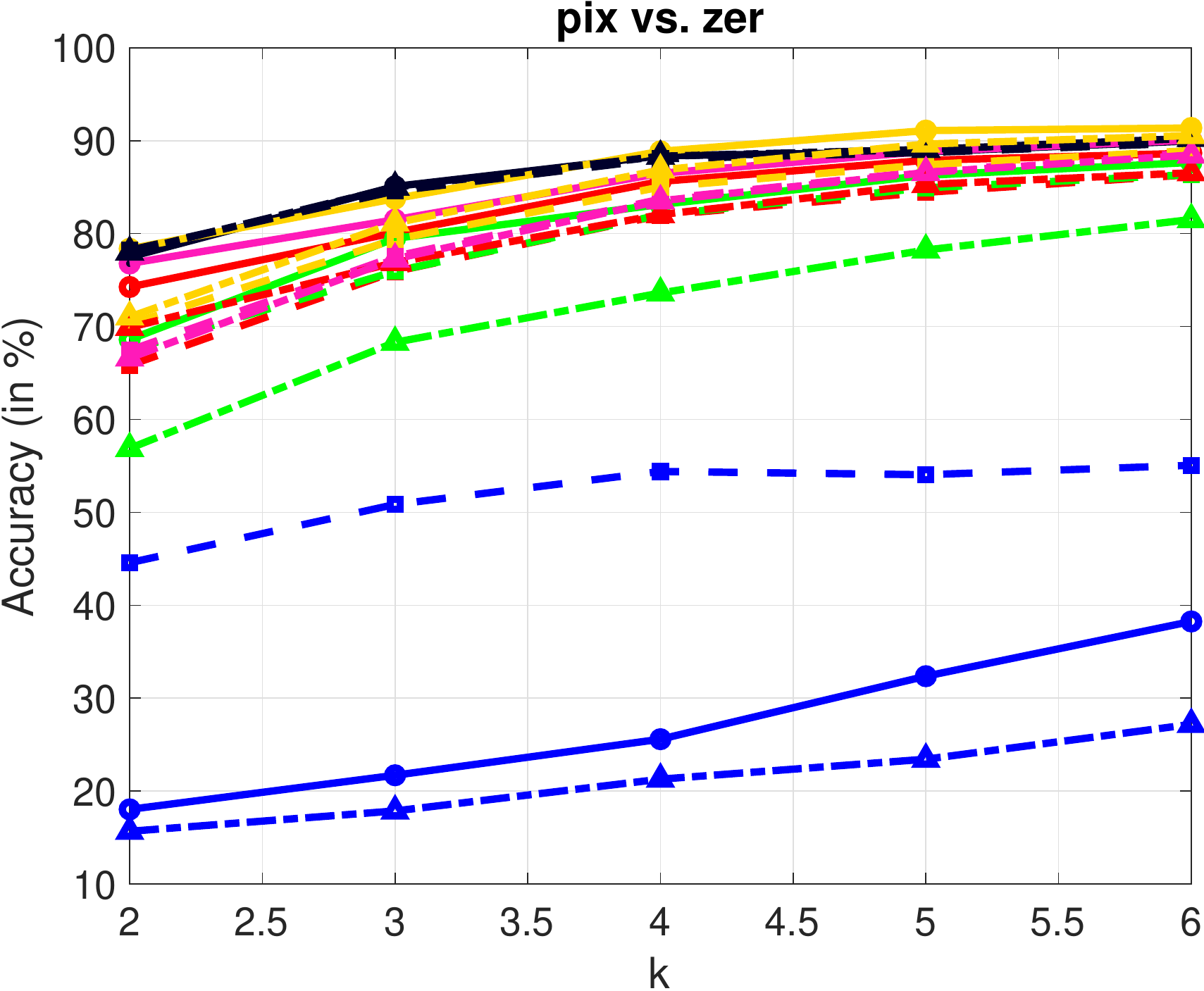}			
	\end{tabular} \vspace{-0.1in}
	\caption{Average accuracy by compared methods  over $10$ random splits on $15$ datasets generated on mfeat as $k$ varies from 2 to 6.} \label{fig:supervised-mfeat}
	\vspace{-0.1in}
\end{figure*}

\subsubsection{Unsupervised semi-paired subspace learning}
We compare our proposed models against three baseline methods for  unsupervised semi-paired subspace learning:
\begin{itemize}
	\item CCA \cite{hardoon2004canonical}. To make sure that the involved GEP doesn't encounter any numerical singularity issue,
	we add a small diagonal matrix, e.g., $10^{-6} I_{d_s}$, to each covariance matrix. The same is applied to all other baseline
	methods;
	
	\item SemiCCA \cite{kimura2013semicca}. The tradeoff-parameter $\gamma$ between CCA and the variant of PCA are
	tuned in the range $\gamma \in [0.01,0.05,0.1,0.5,0.9,0.95,0.99]$;
	
	\item SemiCCALR \cite{belkin2006manifold}. Graph Laplacian matrices are constructed as the heat kernel matrices
	over a $k$-nearest neighbor graph of each view. The bandwidth of the heat kernel is set to a scaled value of the mean distances over all paired samples, where the scale is tuned in $[1/4,1/2,1,2,4]$.
	The regularization parameter $\gamma_2$ is tuned in the range $\gamma_2\in[10^{-3},10^{-2}, 10^{-1},1, 10,100,1000]$ and $\gamma_1 = 0$ since $10^{-6} I_{d_s}$ added to the covariance matrix can be considered as a regularization;
	
	\item USemiCCA: the proposed model (\ref{op:usemicca}), in analogy to SemiCCA. The same parameter setting as for SemiCCA is used;
	
	\item USemiCCALR:  the proposed model (\ref{op:usemicca-lr}), in analogy to SemiCCALR. The same parameter setting
	as for SemiCCALR is used.
\end{itemize}
In addition to the hyper-parameters of each compared model, the dimension $k$ of the common latent space is also important. As the view mor only consists of $6$ features, we restrict $k \in [2, 6]$.
The best results over all tuned parameters are reported in terms of the average accuracy over $10$ random splits.

Experimental results on mfeat are shown in Table \ref{tab:mfeat-unsup}. We have the following observations: 1) all four models leveraging unpaired data can significantly outperform CCA; 
2) USemiCCA outperforms SemiCCA, which shows the advantage of uncorrelated features for classification; 3) SemiCCALR and USemiCCALR perform equally well, which indicates that our proposed optimization algorithm can reach an approximation solution of similar quality to the GEP solver as the two optimization problems are equivalent as shown in Section~\ref{sec:cca}.

\subsubsection{Semi-supervised semi-paired subspace learning} \label{sec:sssl-mfeat}
In this section, we explore  semi-paired subspace learning in the semi-supervised setting.
We compare our proposed models against three baseline methods for semi-supervised semi-paired subspace learning:
\begin{itemize}
	\item SCCA \eqref{op:scca}: CCA with a supervised regularizer,
	\item S$^2$GCA (\ref{op:s2gca-primal}): \cite{chen2012unified},
	\item S$^2$CCALR \eqref{op:sscca-lr}: SemiCCALR with a supervised regularizer,
	\item USCCA: the proposed model (\ref{op:ucca}),	
	\item US$^2$GCA: the proposed model (\ref{op:us2gca}),
	\item US$^2$CCALR: the proposed model (\ref{op:usemi2cca-lr}).
\end{itemize}
In Section~\ref{sec:ss-spcca}, we have introduced three different graph constructers, based on LDA, LFDA, and MFA, respectively, to produce supervised regularizers. We add  suffixes ``-lda'', ``-lfda'', ``-mfa'' to each baseline method
to specify the corresponding graph construction method. For example, US$^2$GCA-mfa stands for  US$^2$GCA with MFA. In addition to the hyper-parameters in their unsupervised counterparts, our semi-supervised models have additional hyper-parameters including $\eta$ and graph construction parameters. We tune $\eta \in \{10^{-3},10^{-2}, 10^{-1},1, 10,100,1000\}$. The number of neighbors in LFDA is tuned in $\{3, 5, 7, 10, 20\}$ and $k_1, k_2$ in MFA are tuned in $\{3, 5, 7, 10, 20\}$.
Finally, NNC is trained on the projected data points of the labeled data.

The average accuracy with standard deviation by all compared methods on the $15$ pairs of views of mfeat over $10$ random training/testing splits are shown in Table \ref{tab:sssl-mfeat}. We observe that 1) SCCA performs the worst among semi-paired methods, 2) semi-paired semi-supervised methods outperform semi-paired unsupervised methods, and 3) methods with uncorrelated constraints beat their counterparts. In Fig.~\ref{fig:supervised-mfeat}, we show the average accuracy by compared methods as $k$ varies in $[2,6]$ on mfeat.    It is observed that 1) all methods demonstrate the same trend of improved classification performance as $k$ increases, 2) US$^2$GCA and US$^2$CCALR generally outperform the others for any given $k$. These observations imply that learning uncorrelated features can improve the classification performance of existing models in semi-supervised semi-paired subspace learning.

\begin{figure}
	\centering
	\begin{tabular}{@{}c@{}c@{}}
		\includegraphics[width=0.45\textwidth]{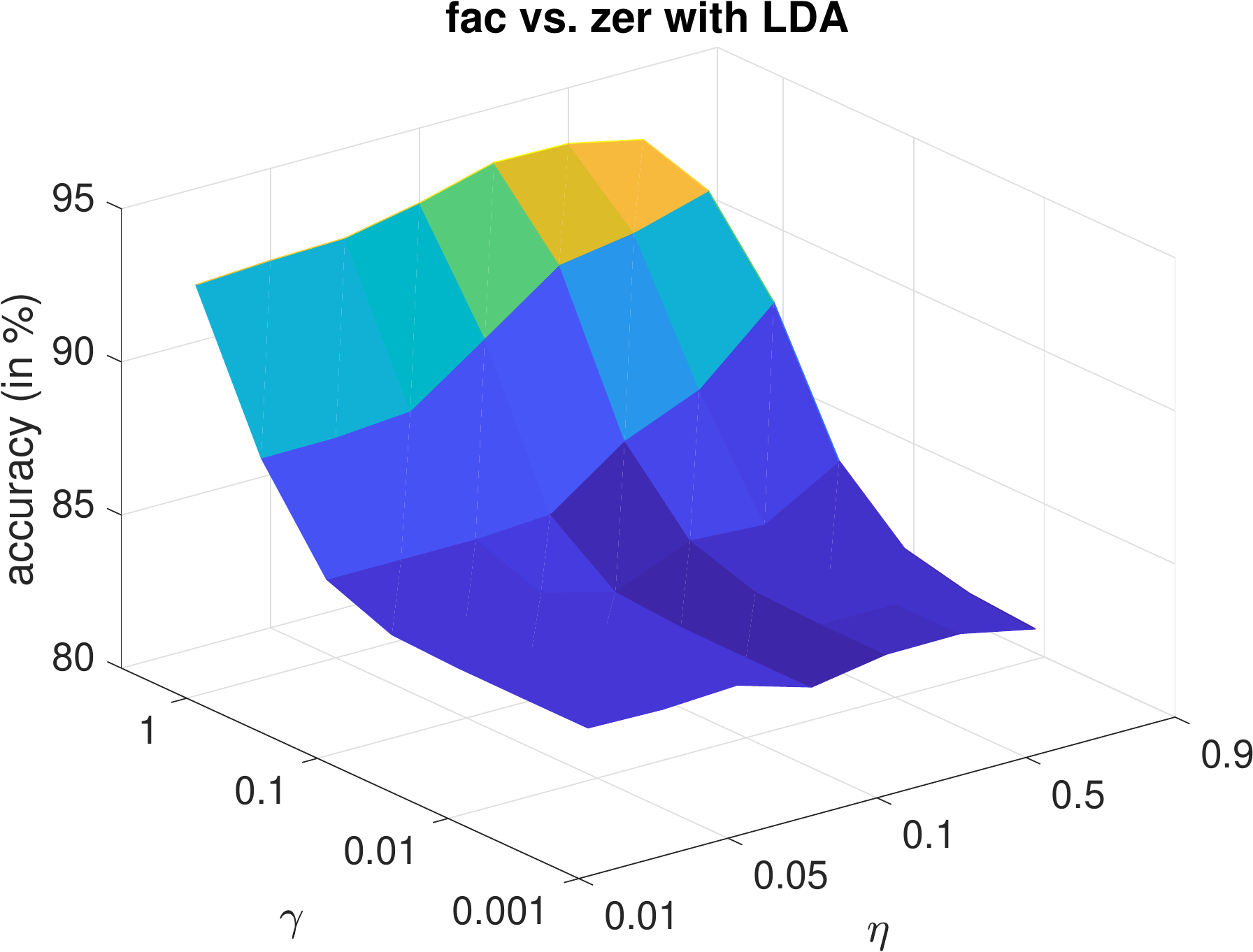} &
		\includegraphics[width=0.45\textwidth]{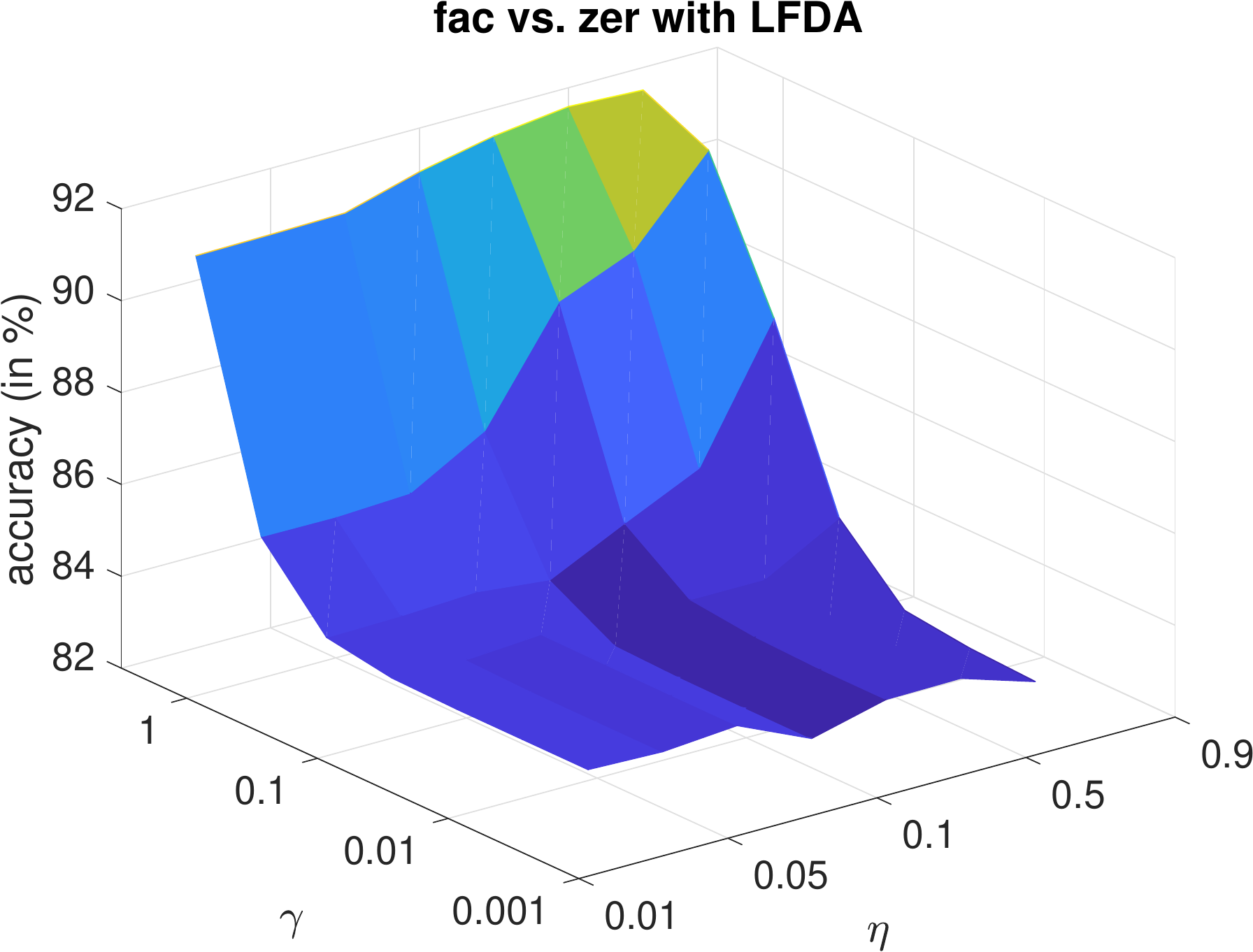}	\\
		(a) & (b) \\
		\includegraphics[width=0.45\textwidth]{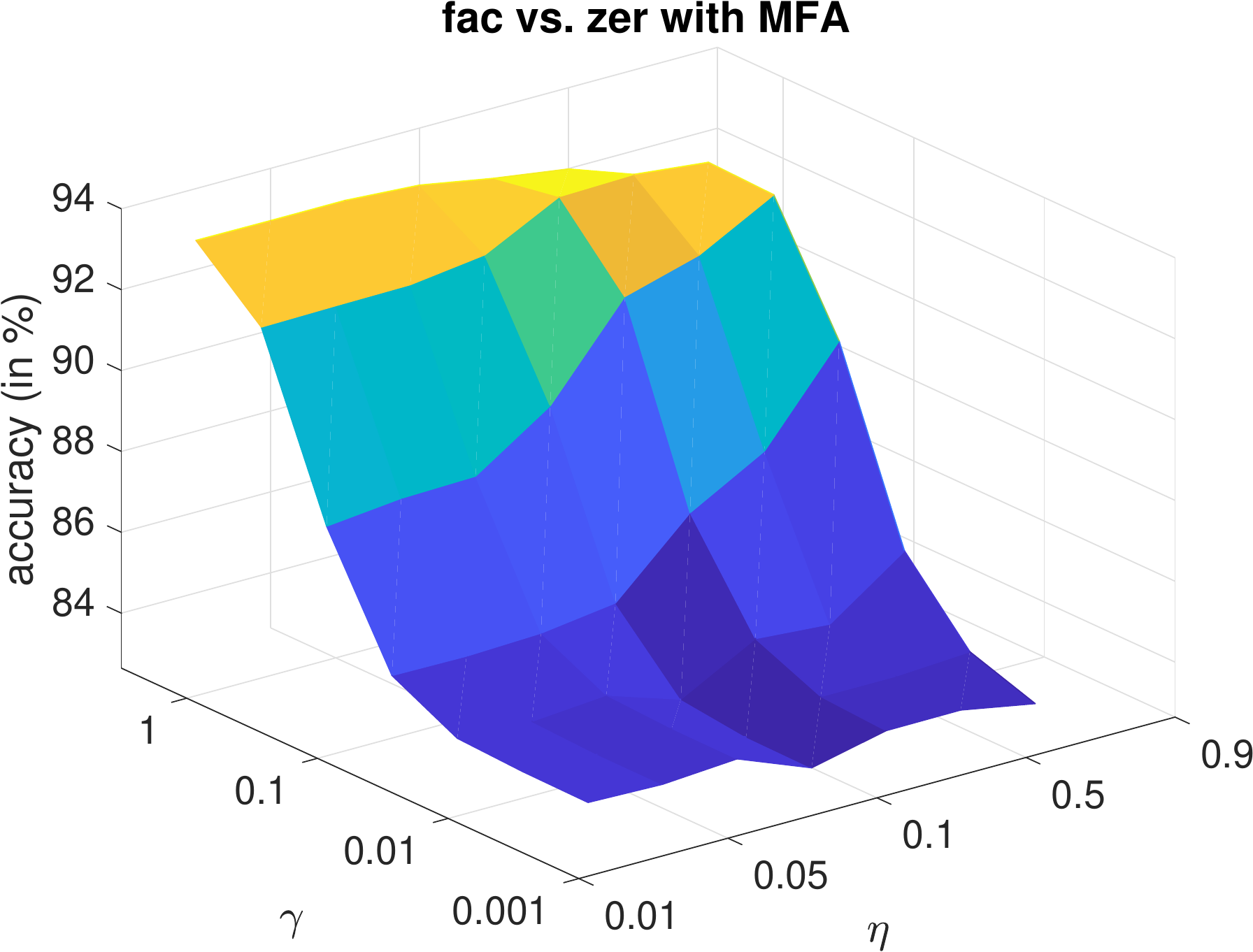} &
		\includegraphics[width=0.43\textwidth]{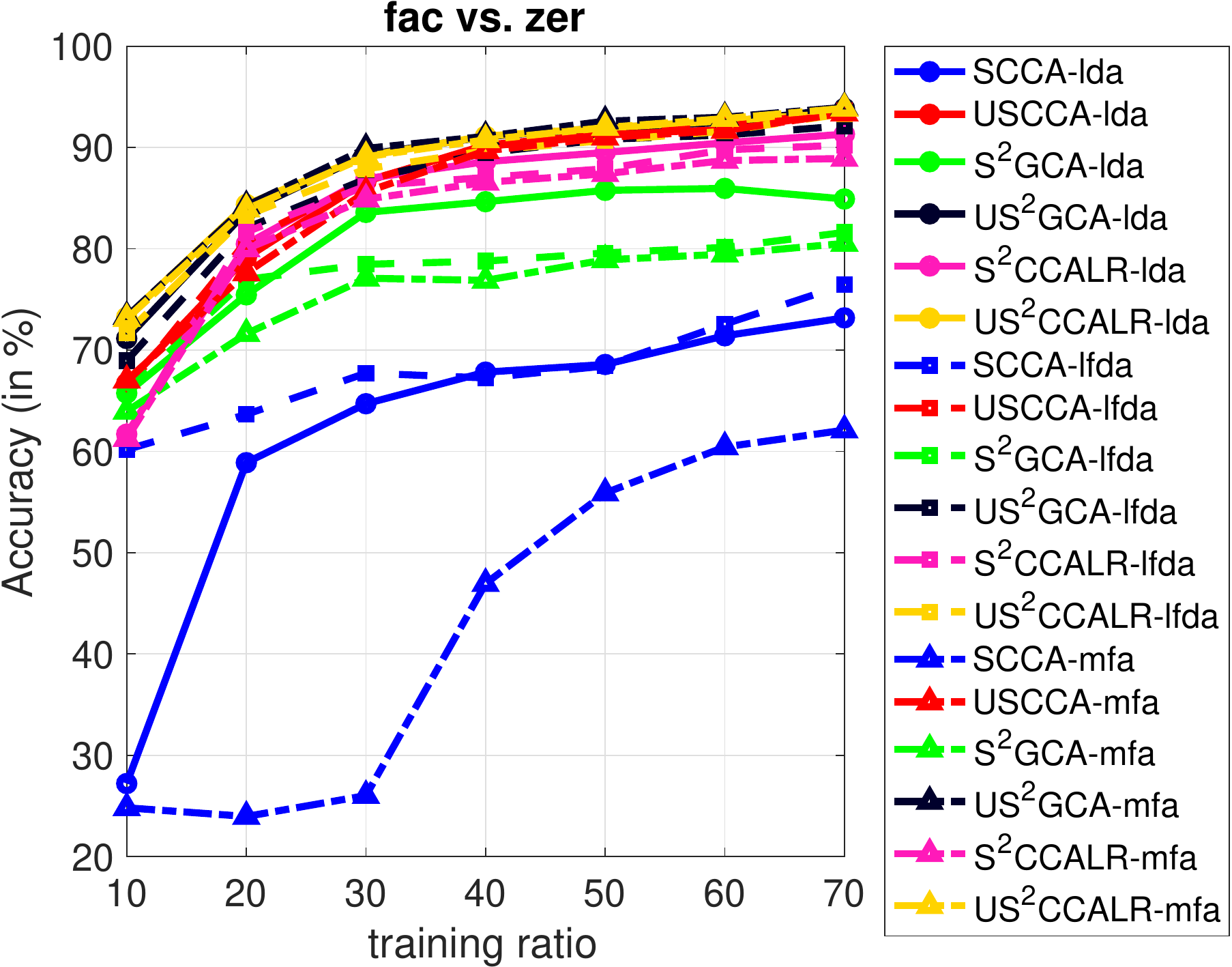}\\
		(c) & (d)
	\end{tabular}	\vspace{-0.1in}
	\caption{Sensitivity analysis. (a)-(c): accuracy by US$^2$GCA on two views fac and zer of mfeat for varying $\gamma$ and $\eta$ and for using LDA, LFDA and MFA to incorporate supervised information; (d): accuracy by compared methods for varying  training ratio from 10\% to 70\%.}	 \label{fig:sensitivity}
	\vspace{-0.15in}
\end{figure}

\begin{landscape}
\begin{table*}
	\centering
	\caption{Average accuracy with standard deviation by six methods on Pascal  data over 10 randomly drawn training and testing splits with three different scatter construction approaches. The best results are in bold.}\label{tab:pascal}
	\vspace{-0.1in}
	\begin{small}
	\begin{tabular}{@{}c|c|c|cccccc@{}}
		\hline
		tested view & graph construction	&view 1 - view 2& SCCA& USCCA& S$^2$GCA& US$^2$GCA& S$^2$CCALR& US$^2$CCALR\\
		\hline
		\multirow{3}{*}{image+text} & \multirow{1}{*}{LDA}& image-text & 10.48 $\pm$ 1.11 & 29.10 $\pm$ 2.58 & 32.62 $\pm$ 3.52 & \textbf{37.32 $\pm$ 2.98} & 26.94 $\pm$ 2.71 & 26.36 $\pm$ 3.33\\\cline{2-9}
		& \multirow{1}{*}{LFDA}& image-text & 23.02 $\pm$ 4.00 & 30.62 $\pm$ 3.77 & 31.92 $\pm$ 3.68 & \textbf{33.94 $\pm$ 3.91} & 25.44 $\pm$ 2.36 & 26.58 $\pm$ 3.53\\
		\cline{2-9}
		& \multirow{1}{*}{MFA}& image-text & 11.66 $\pm$ 2.80 & 31.48 $\pm$ 3.05 & 32.90 $\pm$ 4.11 & \textbf{35.70 $\pm$ 4.18} & 25.98 $\pm$ 3.32 & 26.70 $\pm$ 2.36\\
		\cline{2-9}
		
		\hline
		\multirow{3}{*}{image} &\multirow{1}{*}{LDA}& image-text & 6.82 $\pm$ 0.82 & 11.58 $\pm$ 1.25 & 11.02 $\pm$ 1.38 & \textbf{11.62 $\pm$ 1.66} & 8.34 $\pm$ 1.78 & 9.38 $\pm$ 1.91\\
		\cline{2-9}
		&\multirow{1}{*}{LFDA}& image-text & 10.72 $\pm$ 2.04 & \textbf{11.34 $\pm$ 1.28} & 10.98 $\pm$ 1.44 & 11.12 $\pm$ 1.40 & 9.24 $\pm$ 1.96 & 9.08 $\pm$ 1.74\\
		\cline{2-9}
		&\multirow{1}{*}{MFA}& image-text & 6.68 $\pm$ 1.33 & \textbf{11.68 $\pm$ 1.71} & 11.26 $\pm$ 1.24 & 11.14 $\pm$ 1.10 & 8.12 $\pm$ 1.67 & 9.04 $\pm$ 1.27\\
		\hline		
		\multirow{3}{*}{Text} &\multirow{1}{*}{LDA}& image-text & 18.42 $\pm$ 2.40 & 41.64 $\pm$ 3.43 & 42.60 $\pm$ 4.00 & \textbf{48.48 $\pm$ 3.11} & 43.92 $\pm$ 3.97 & 44.10 $\pm$ 3.95\\\cline{2-9}
		& \multirow{1}{*}{LFDA}& image-text & 29.96 $\pm$ 3.17 & 40.80 $\pm$ 3.52 & 42.26 $\pm$ 3.82 & \textbf{46.32 $\pm$ 3.74} & 43.72 $\pm$ 3.86 & 43.20 $\pm$ 4.24\\\cline{2-9}
		& \multirow{1}{*}{MFA}& image-text & 17.14 $\pm$ 3.70 & 42.86 $\pm$ 4.76 & 42.14 $\pm$ 3.76 & \textbf{44.92 $\pm$ 3.66} & 41.20 $\pm$ 3.35 & 44.42 $\pm$ 3.16\\
		\hline
	\end{tabular}
	\end{small}
\end{table*}
\end{landscape}

\subsubsection{Sensitivity analysis}
We conduct the sensitivity analysis of our proposed methods from two perspectives: the hyper-parameters and the training ratios. To save space, we only show the results by US$^2$GCA  which uses two parameters $\gamma$ and $\eta$ to control the importance of unpaired data and the  labeled data on $10$ random training/testing splits.
Fig.~\ref{fig:sensitivity} displays the variation of the best classification accuracy over $k \in [2, 6]$ as $\gamma$ and $\eta$ vary on  two views fac and zer. Specifically, Fig.~\ref{fig:sensitivity}(a)-(c) show the accuracy changes for three different scatter matrices, respectively. It is observed that larger $\gamma$ and $\eta$ can generally achieve better results. This implies that unpaired data and a small amount of supervised data do improve  classification accuracy. In Fig.~\ref{fig:sensitivity}(d), it is demonstrated that our proposed methods consistently perform better when more training data become available. Hence, our methods are robust to varying training data.

\begin{figure*}
	\centering
	\begin{tabular}{ccc}
		\includegraphics[width=0.31\textwidth]{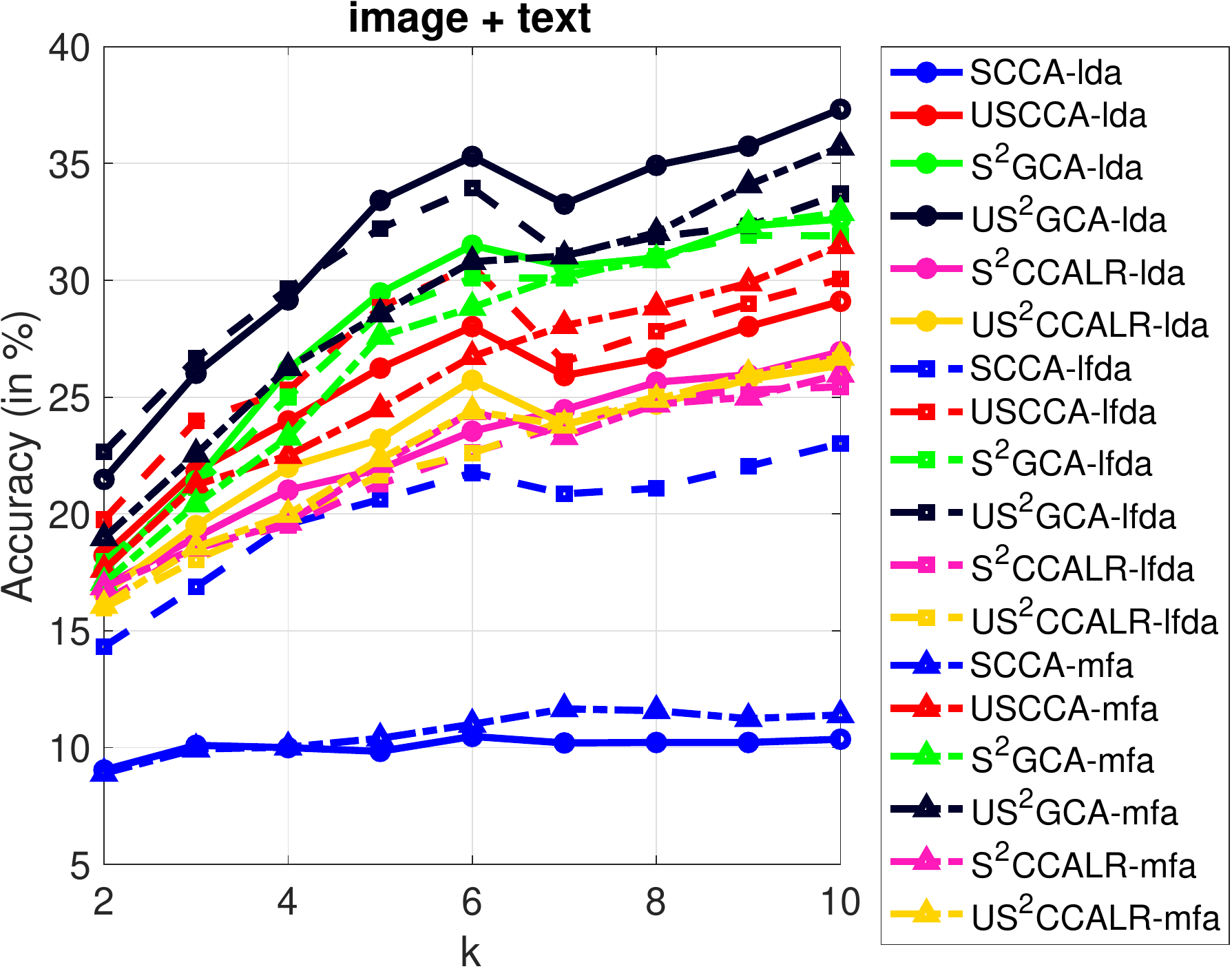} &
		\includegraphics[width=0.31\textwidth]{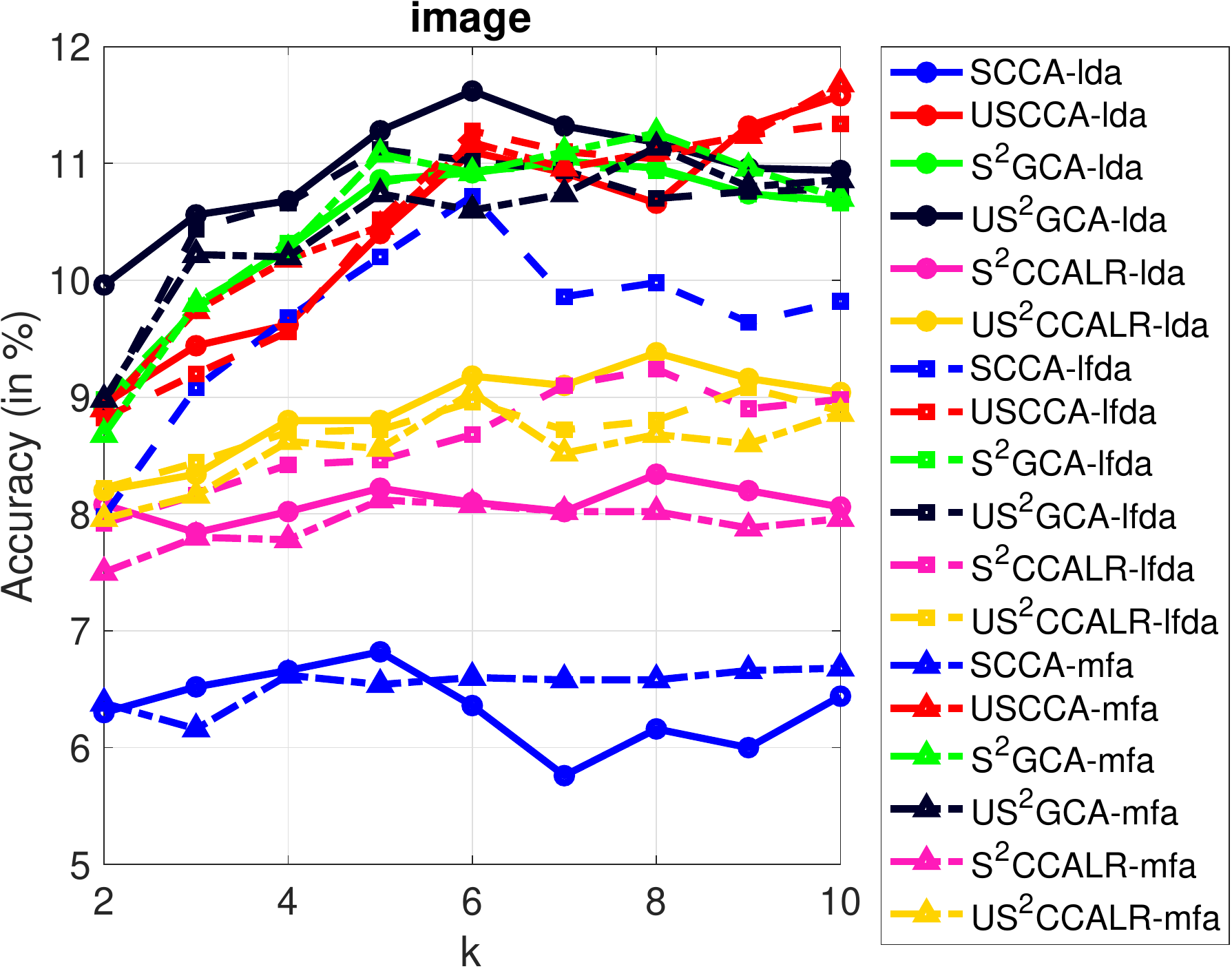} &
		\includegraphics[width=0.31\textwidth]{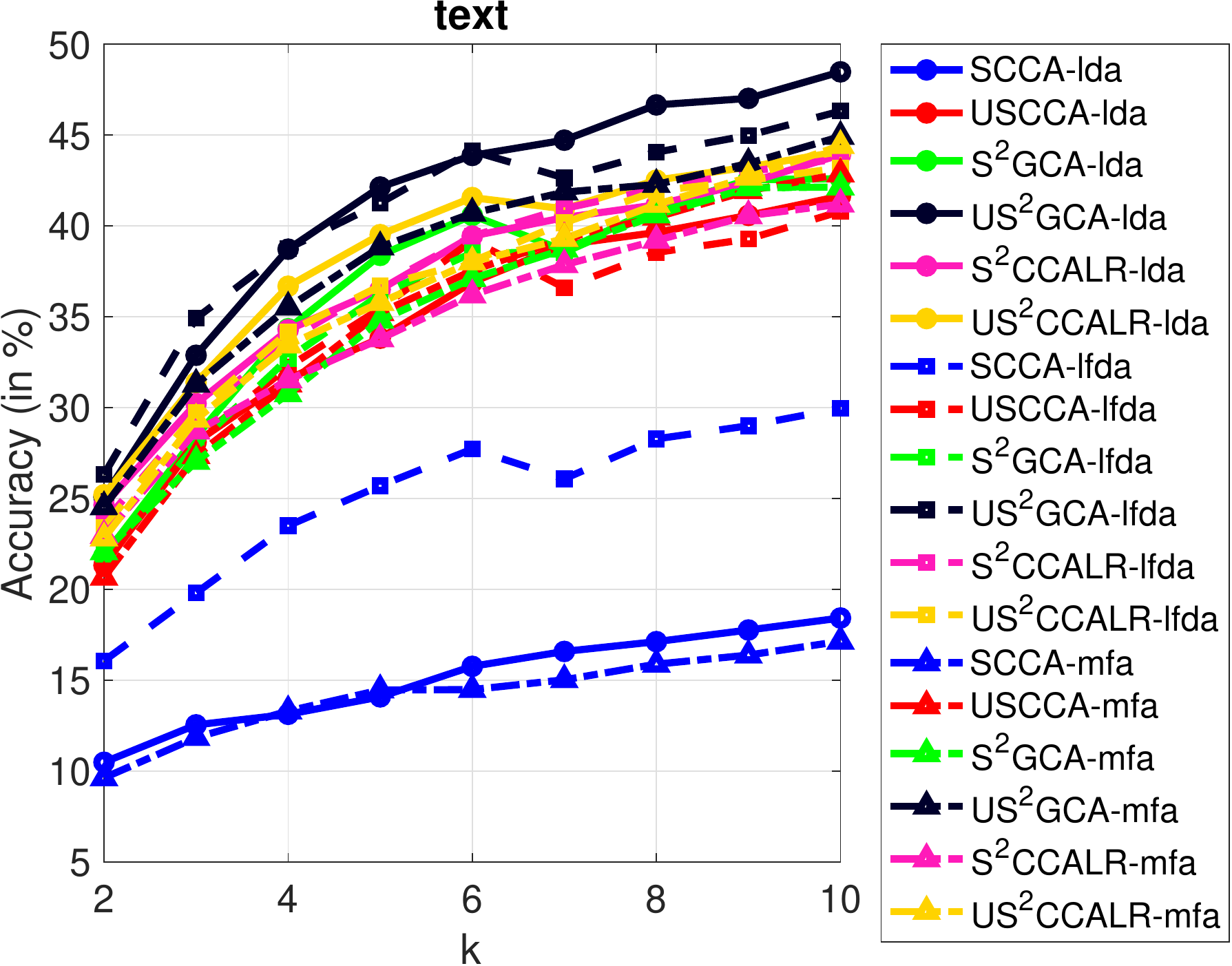}
	\end{tabular}	\vspace{-0.1in}
	\caption{Average accuracy by compared methods for varying $k \in [2, 10]$ over $10$ random splits on Pascal data.}\label{fig:Pascal}	
\end{figure*}

\subsection{Multi-modal data}
In this section, we consider  multi-modal data,
Pascal dataset, which contains 1000 pairs of image and text from 20 categories \cite{rashtchian2010collecting} with $1024$-dim for image and $100$-dim for text. It is a challenging visual dataset, where the text represents the context for each picture, but they are not as semantically rich as a full text article \cite{pereira2012regularization}. Following the same data split process on mfeat data,
we randomly select $50\%$ of the data for training and the rest for testing. Among the training data, $20\%$ data are randomly selected as paired  and the rest as unpaired. For semi-supervised learning, we randomly sample $10\%$ of training data as labeled and the rest as unlabeled. The same baseline methods for semi-supervised semi-paired learning in Section~\ref{sec:sssl-mfeat} are used in our comparison.

First in Table \ref{tab:pascal}, we show the average accuracy by the six methods with three different scatter matrices over $10$ randomly drawn training and testing splits. Similar conclusions  to what we had for mfeat can be drawn, namely, our proposed uncorrelated models outperform their counterparts. Note that USCCA produces dramatically much better results than SCCA. 
Moreover, we show the classification accuracy by each compared method on one of two views using the learned projection matrix instead of the concatenation of two views. The testing results on each single view show the similar conclusion. 
Second in Fig.~\ref{fig:Pascal}, we show the accuracy by all compared methods as the reduced dimension $k$ varies in $[2, 10]$. It is observed that our proposed method US$^2$GCA-mfa outperforms others over all $k$, and  USCCA-mfa shows the second best result similarly to S$^2$GCA-mfa. 
These experimental results demonstrate that our proposed models with uncorrelated constraints achieve better results than the baseline methods.

\section{Conclusion} \label{sec:conclusions}

We have proposed a generalized  semi-paired subspace learning framework to  jointly learn latent common space across two views and uncorrelated features. We demonstrate the flexibility of our proposed framework by showcasing
five novel models which are then compared with similar existing models. Extensive experiments show that the integration of semi-paired subspace learning with learning uncorrelated features can be beneficial for both unsupervised learning and semi-supervised learning. Moreover we design a successive alternating approximation (SAA) method to numerically solve the general framework. The
method can be directly used for solving any model that fits in this framework. The potential extension of this work to more than two views and nonlinear transformation via kernel trick will be investigated elsewhere.


\begin{thebibliography}{1}
\bibitem{baltruvsaitis2018multimodal}
T.~Baltru{\v{s}}aitis, C.~Ahuja, and L.-P. Morency, ``Multimodal machine
learning: A survey and taxonomy,'' \emph{IEEE Transactions on Pattern
	Analysis and Machine Intelligence}, vol.~41, no.~2, pp. 423--443, 2018.

\bibitem{peng2019cm}
Y.~Peng and J.~Qi, ``Cm-gans: Cross-modal generative adversarial networks for
common representation learning,'' \emph{ACM Transactions on Multimedia
	Computing, Communications, and Applications (TOMM)}, vol.~15, no.~1, pp.
1--24, 2019.

\bibitem{li2018survey}
Y.~Li, M.~Yang, and Z.~Zhang, ``A survey of multi-view representation
learning,'' \emph{IEEE Transactions on Knowledge and Data Engineering},
vol.~31, no.~10, pp. 1863--1883, 2018.

\bibitem{zhao2017multi}
J.~Zhao, X.~Xie, X.~Xu, and S.~Sun, ``Multi-view learning overview: Recent
progress and new challenges,'' \emph{Information Fusion}, vol.~38, pp.
43--54, 2017.

\bibitem{hardoon2004canonical}
D.~R. Hardoon, S.~Szedmak, and J.~Shawe-Taylor, ``Canonical correlation
analysis: An overview with application to learning methods,'' \emph{Neural
	Computation}, vol.~16, no.~12, pp. 2639--2664, 2004.

\bibitem{yang2019survey}
X.~Yang, L.~Weifeng, W.~Liu, and D.~Tao, ``A survey on canonical correlation
analysis,'' \emph{IEEE Transactions on Knowledge and Data Engineering}, 2019.

\bibitem{mehrkanoon2017regularized}
S.~Mehrkanoon and J.~A. Suykens, ``Regularized semipaired kernel cca for domain
adaptation,'' \emph{IEEE Transactions on Neural Networks and Learning
	Systems}, vol.~29, no.~7, pp. 3199--3213, 2017.

\bibitem{lampert2010weakly}
C.~H. Lampert and O.~Kr{\"o}mer, ``Weakly-paired maximum covariance analysis
for multimodal dimensionality reduction and transfer learning,'' in
\emph{European Conference on Computer Vision}, 2010, pp. 566--579.

\bibitem{zhang2019cpm}
C.~Zhang, Z.~Han, H.~Fu, J.~T. Zhou, Q.~Hu \emph{et~al.}, ``Cpm-nets: Cross
partial multi-view networks,'' in \emph{Advances in Neural Information
	Processing Systems}, 2019, pp. 559--569.

\bibitem{liu2020efficient}
X.~Liu, M.~Li, C.~Tang, J.~Xia, J.~Xiong, L.~Liu, M.~Kloft, and E.~Zhu,
``Efficient and effective regularized incomplete multi-view clustering,''
\emph{IEEE Transactions on Pattern Analysis and Machine Intelligence}, 2020.

\bibitem{zhang2018multi}
L.~Zhang, Y.~Zhao, Z.~Zhu, D.~Shen, and S.~Ji, ``Multi-view missing data
completion,'' \emph{IEEE Transactions on Knowledge and Data Engineering},
vol.~30, no.~7, pp. 1296--1309, 2018.

\bibitem{hu2019doubly}
M.~Hu and S.~Chen, ``Doubly aligned incomplete multi-view clustering,''
\emph{arXiv preprint arXiv:1903.02785}, 2019.

\bibitem{zhang2014semi}
B.~Zhang, J.~Hao, G.~Ma, J.~Yue, and Z.~Shi, ``Semi-paired probabilistic
canonical correlation analysis,'' in \emph{International Conference on
	Intelligent Information Processing}.\hskip 1em plus 0.5em minus 0.4em\relax
Springer, 2014, pp. 1--10.

\bibitem{kamada2015probabilistic}
C.~Kamada, A.~Kanezaki, and T.~Harada, ``Probabilistic semi-canonical
correlation analysis,'' in \emph{Proceedings of the 23rd ACM International
	Conference on Multimedia}, 2015, pp. 1131--1134.

\bibitem{blaschko2008semi}
M.~B. Blaschko, C.~H. Lampert, and A.~Gretton, ``Semi-supervised laplacian
regularization of kernel canonical correlation analysis,'' in \emph{Joint
	European Conference on Machine Learning and Knowledge Discovery in
	Databases}.\hskip 1em plus 0.5em minus 0.4em\relax Springer, 2008, pp.
133--145.

\bibitem{kimura2013semicca}
A.~Kimura, M.~Sugiyama, T.~Nakano, H.~Kameoka, H.~Sakano, E.~Maeda, and
K.~Ishiguro, ``Semicca: Efficient semi-supervised learning of canonical
correlations,'' \emph{Information and Media Technologies}, vol.~8, no.~2, pp.
311--318, 2013.

\bibitem{chen2012unified}
X.~Chen, S.~Chen, H.~Xue, and X.~Zhou, ``A unified dimensionality reduction
framework for semi-paired and semi-supervised multi-view data,''
\emph{Pattern Recognition}, vol.~45, no.~5, pp. 2005--2018, 2012.

\bibitem{zhou2013neighborhood}
X.~Zhou, X.~Chen, and S.~Chen, ``Neighborhood correlation analysis for
semi-paired two-view data,'' \emph{Neural Processing Letters}, vol.~37,
no.~3, pp. 335--354, 2013.

\bibitem{guo2018joint}
X.~Guo, S.~Wang, Y.~Tie, L.~Qi, and L.~Guan, ``Joint intermodal and intramodal
correlation preservation for semi-paired learning,'' \emph{Pattern
	Recognition}, vol.~81, pp. 36--49, 2018.

\bibitem{jin2001face}
Z.~Jin, J.-Y. Yang, Z.-S. Hu, and Z.~Lou, ``Face recognition based on the
uncorrelated discriminant transformation,'' \emph{Pattern Recognition},
vol.~34, no.~7, pp. 1405--1416, 2001.

\bibitem{hughes1968mean}
G.~Hughes, ``On the mean accuracy of statistical pattern recognizers,''
\emph{IEEE Transactions on Information Theory}, vol.~14, no.~1, pp. 55--63,
1968.

\bibitem{nie2009semi}
F.~Nie, S.~Xiang, Y.~Jia, and C.~Zhang, ``Semi-supervised orthogonal
discriminant analysis via label propagation,'' \emph{Pattern Recognition},
vol.~42, no.~11, pp. 2615--2627, 2009.

\bibitem{zhan:2011}
L.-H. Zhang, ``Uncorrelated trace ratio {LDA} for undersampled problems,''
\emph{Pattern Recognition Letter}, vol.~32, pp. 476--484, 2011.

\bibitem{ye2004using}
J.~Ye, T.~Li, T.~Xiong, and R.~Janardan, ``Using uncorrelated discriminant
analysis for tissue classification with gene expression data,''
\emph{IEEE/ACM Transactions on Computational Biology and Bioinformatics},
vol.~1, no.~4, pp. 181--190, 2004.

\bibitem{sun2015multiview}
S.~Sun, X.~Xie, and M.~Yang, ``Multiview uncorrelated discriminant analysis,''
\emph{IEEE Transactions on Cybernetics}, vol.~46, no.~12, pp. 3272--3284,
2015.

\bibitem{govl:2013}
G.~H. Golub and C.~F. {Van Loan}, \emph{Matrix Computations}, 4th~ed.\hskip 1em
plus 0.5em minus 0.4em\relax Baltimore, Maryland: Johns Hopkins University
Press, 2013.

\bibitem{hojo:2013}
R.~A. Horn and C.~R. Johnson, \emph{Matrix Analysis}, 2nd~ed.\hskip 1em plus
0.5em minus 0.4em\relax New York, NY: Cambridge University Press, 2013.

\bibitem{bhat:1996}
R.~Bhatia, \emph{Matrix Analysis}, ser. Graduate Texts in Mathematics, vol.
169.\hskip 1em plus 0.5em minus 0.4em\relax New York: Springer, 1996.

\bibitem{yan2006graph}
S.~Yan, D.~Xu, B.~Zhang, H.-J. Zhang, Q.~Yang, and S.~Lin, ``Graph embedding
and extensions: A general framework for dimensionality reduction,''
\emph{IEEE Transactions on Pattern Analysis and Machine Intelligence},
vol.~29, no.~1, pp. 40--51, 2006.

\bibitem{ye2006feature}
J.~Ye, R.~Janardan, Q.~Li, and H.~Park, ``Feature reduction via generalized
uncorrelated linear discriminant analysis,'' \emph{IEEE Transactions on
	Knowledge and Data Engineering}, vol.~18, no.~10, pp. 1312--1322, 2006.

\bibitem{zhang2011uncorrelated}
L.-H. Zhang, ``Uncorrelated trace ratio linear discriminant analysis for
undersampled problems,'' \emph{Pattern Recognition Letters}, vol.~32, no.~3,
pp. 476--484, 2011.

\bibitem{yin2019multiview}
J.~Yin and S.~Sun, ``Multiview uncorrelated locality preserving projection,''
\emph{IEEE Transactions on Neural Networks and Learning Systems}, 2019.

\bibitem{shu2019multi}
X.~Shu, P.~Yuan, H.~Jiang, and D.~Lai, ``Multi-view uncorrelated discriminant
analysis via dependence maximization,'' \emph{Applied Intelligence}, vol.~49,
no.~2, pp. 650--660, 2019.

\bibitem{geer:1984}
J.~P.~V. de~Geer, ``linear relations among $k$ sets of variables,''
\emph{Psychometrika}, vol.~49, pp. 70--94, 1984.

\bibitem{berg:1988}
J.~M. F.~T. Berge, ``Generalized approaches to the {MAXBET} problem and the
{MAXDIFF} problem, with applications to canonical correlations,''
\emph{Psychometrika}, vol.~53, no.~4, pp. 487--494, 1988.

\bibitem{liww:2015}
X.-G. Liu, X.-F. Wang, and W.-G. Wang, ``Maximization of matrix trace function
of product {S}tiefel manifolds,'' \emph{SIAM J. Matrix Anal. Appl.}, vol.~36,
no.~4, pp. 1489--1506, 2015.

\bibitem{abbd:1999}
E.~Anderson, Z.~Bai, C.~Bischof, J.~Demmel, J.~Dongarra, J.~D. Croz,
A.~Greenbaum, S.~Hammarling, A.~McKenney, S.~Ostrouchov, and D.~Sorensen,
\emph{{LAPACK} Users' Guide}, 3rd~ed.\hskip 1em plus 0.5em minus 0.4em\relax
Philadelphia: SIAM, 1999.

\bibitem{bddrv:2000}
Z.~Bai, J.~W. Demmel, J.~Dongarra, A.~Ruhe, and H.~van~der Vorst~{\rm
	(editors)}, \emph{Templates for the Solution of Algebraic Eigenvalue
	Problems: A Practical Guide}.\hskip 1em plus 0.5em minus 0.4em\relax
Philadelphia: SIAM, 2000.

\bibitem{li:2015}
R.-C. Li, ``Rayleigh quotient based optimization methods for eigenvalue
problems,'' in \emph{Matrix Functions and Matrix Equations}, ser. Series in
Contemporary Applied Mathematics, Z.~Bai, W.~Gao, and Y.~Su, Eds.\hskip 1em
plus 0.5em minus 0.4em\relax Singapore: World Scientific, 2015, vol.~19, pp.
76--108, lecture summary for 2013 Gene Golub SIAM Summer School.
www.siam.org/students/g2s3/2013/lecturers/RCLi/Summary\_RCLi.pdf.

\bibitem{chwa:1993}
M.~T. Chu and J.~L. Watterson, ``On a multivariate eigenvalue problem, {Part
	I}: Algebraic theory and a power method,,'' \emph{SIAM J. Sci. Comput.},
vol.~14, no.~4, pp. 1089--1106, 1993.

\bibitem{hasi:2007}
W.~H\"ardle and L.~Simar, \emph{Applied Multivariate Statistical Analysis,
	2nd}.\hskip 1em plus 0.5em minus 0.4em\relax Berlin Heidelberg:
Springer-Verlag, 2007.

\bibitem{cogt:2000}
A.~R. Conn, N.~I.~M. Gould, and P.~L. Toint, \emph{Trust-Region Methods}.\hskip
1em plus 0.5em minus 0.4em\relax Philadelphia, PA: SIAM, 2000.

\bibitem{nowr:2006}
J.~Nocedal and S.~Wright, \emph{Numerical Optimization}, 2nd~ed.\hskip 1em plus
0.5em minus 0.4em\relax Springer, New York, 2006.

\bibitem{gay:1981}
D.~M. Gay, ``Computing optimal locally constrained steps,'' \emph{SIAM J. Sci.
	Statist. Comput.}, vol.~2, no.~1, pp. 186--197, 1981.

\bibitem{moso:1983}
J.~J. Mor{\' e} and D.~C. Sorensen, ``Computing a trust region step,''
\emph{SIAM J. Sci. Statist. Comput.}, vol.~4, no.~3, pp. 553--572, 1983.

\bibitem{sore:1982}
D.~C. Sorensen, ``Newton's method with a model trust region modification,''
\emph{SIAM J. Numer. Anal.}, vol.~19, no.~2, pp. 409--426, 1982.

\bibitem{hage:2001}
W.~W. Hager, ``Minimizing a quadratic over a sphere,'' \emph{SIAM J. Optim.},
vol.~12, pp. 188--208, 2001.

\bibitem{rewo:1997}
R.~Rendl and H.~Wolkowicz, ``A semidefinite framework for trust region
subproblems with applications to large scale minimization,'' \emph{Math.
	Program.}, vol.~77, no.~2, pp. 273--299, 1997.

\bibitem{ross:2008}
M.~Rojas, S.~A. Santos, and D.~C. Sorensen, ``Algorithm 873: {LSTRS: MATLAB}
software for large-scale trust-region subproblems and regularization,''
\emph{ACM Trans. Math. Software}, vol.~34, no.~2, pp. 11:1--28, 2008.

\bibitem{stei:1983}
T.~Steihaug, ``The conjugate gradient method and trust regions in large scale
optimization,'' \emph{SIAM J. Numer. Anal.}, vol.~20, pp. 626--637, 1983.

\bibitem{golr:1999}
N.~I.~M. Gould, S.~Lucidi, M.~Roma, and P.~L. Toint, ``Solving the trust-region
subproblem using the {Lanczos} method,'' \emph{SIAM J. Optim.}, vol.~9, pp.
504--525, 1999.

\bibitem{zhsl:2017}
L.-H. Zhang, C.~Shen, and R.-C. Li, ``On the generalized {L}anczos trust-region
method,'' \emph{SIAM J. Optim.}, vol.~27, no.~3, pp. 2110--2142, 2017.

\bibitem{cadu:2018}
Y.~Carmon and J.~Duchi, ``Analysis of {K}rylov subspace solutions of
regularized nonconvex quadratic problems,'' in \emph{{Neural Information
		Processing Systems (NIPS)}}, 2018.

\bibitem{zhsh:2018}
L.-H. Zhang and C.~Shen, ``A nested {L}anczos method for the trust-region
subproblem,'' \emph{SIAM J. Sci. Comput.}, vol.~40, no.~4, pp. A2005--A2032,
2018.

\bibitem{demm:1997}
J.~W. Demmel, \emph{Applied Numerical Linear Algebra}.\hskip 1em plus 0.5em
minus 0.4em\relax Philadelphia, PA: SIAM, 1997.

\bibitem{zhwb:2020}
L.~Zhang, L.~Wang, Z.~Bai, and R.-C. Li, ``A self-consistent-field iteration
for orthogonal canonical correlation analysis,'' \emph{IEEE Trans. Pattern
	Anal. Mach. Intell.}, 2020, to appear.

\bibitem{belkin2006manifold}
M.~Belkin, P.~Niyogi, and V.~Sindhwani, ``Manifold regularization: A geometric
framework for learning from labeled and unlabeled examples,'' \emph{Journal
	of Machine Learning Research}, vol.~7, no. Nov, pp. 2399--2434, 2006.

\bibitem{rashtchian2010collecting}
C.~Rashtchian, P.~Young, M.~Hodosh, and J.~Hockenmaier, ``Collecting image
annotations using amazon’s mechanical turk,'' in \emph{Proceedings of the
	NAACL HLT 2010 Workshop on Creating Speech and Language Data with Amazon’s
	Mechanical Turk}, 2010, pp. 139--147.

\bibitem{pereira2012regularization}
J.~C. Pereira and N.~Vasconcelos, ``On the regularization of image semantics by
modal expansion,'' in \emph{2012 IEEE Conference on Computer Vision and
	Pattern Recognition}.\hskip 1em plus 0.5em minus 0.4em\relax IEEE, 2012, pp.
3093--3099.

\end{thebibliography}

\end{document}